\pgfplotsset{compat=1.5}
\crefname{ALC@line}{line}{lines} %
\Crefname{ALC@line}{Line}{Lines} %
\newtheorem{theorem}{Theorem}[section]
\newtheorem{lemma}[theorem]{Lemma}
\newtheorem{definition}[theorem]{Definition}
\newenvironment{proofof}[1]{\begin{trivlist} \item {\bf Proof
#1:~~}}
  {\qed\end{trivlist}}
\newcommand{\namedref}[2]{\hyperref[#2]{#1~\ref*{#2}}}
\newcommand{\thmlab}[1]{\label{thm:#1}}
\newcommand{\thmref}[1]{\namedref{Theorem}{thm:#1}}
\newcommand{\seclab}[1]{\label{sec:#1}}
\newcommand{\deflab}[1]{\label{def:#1}}
\newcommand{\defref}[1]{\namedref{Definition}{def:#1}}
\def \OPT    {\mdef{\mathsf{OPT}}}
\DeclarePairedDelimiter{\card}{\lvert}{\rvert}
\DeclarePairedDelimiter{\iprod}{\langle}{\rangle}
\newcommand{\Encode}{\texttt{Encode}\xspace}
\newcommand{\Decode}{\texttt{Decode}\xspace}
\newcommand{\DPCluster}{\texttt{DP-Cluster}\xspace}
\newcommand{\DPMean}{\texttt{DP-Mean}\xspace}
\newcommand{\DPCovariance}{\texttt{DP-Covariance}\xspace}
\newcommand{\DPFilterEmbedding}{\texttt{DP-FilterEmbedding}\xspace}
\newcommand{\DPFilterImage}{\texttt{DP-FilterImage}\xspace}
\newcommand{\Embedding}{\texttt{Embedding}\xspace}
\newcommand{\Image}{\texttt{Image}\xspace}
\newcommand\norm[1]{\left\lVert#1\right\rVert}
\newcommand{\PPr}[1]{\ensuremath{\mathbf{Pr}\left[#1\right]}}
\newcommand{\Ex}[1]{\ensuremath{\mathbb{E}\left[#1\right]}}
\newcommand{\EEx}[2]{\ensuremath{\underset{#1}{\mathbb{E}}\left[#2\right]}}
\newcommand{\eps}{\varepsilon}
\def \calA    {\mdef{\mathcal{A}}}
\def \calD    {\mdef{\mathcal{D}}}
\def \calE    {\mdef{\mathcal{E}}}
\def \calM    {\mdef{\mathcal{M}}}
\def \calN    {\mdef{\mathcal{N}}}
\def \calR    {\mdef{\mathcal{R}}}
\def \calX    {\mdef{\mathcal{X}}}
\def \calY    {\mdef{\mathcal{Y}}}
\def \calZ    {\mdef{\mathcal{Z}}}
\def \bc    {\mdef{\mathbf{c}}}
\def \bw    {\mdef{\mathbf{w}}}
\def \bx    {\mdef{\mathbf{x}}}
\def \by    {\mdef{\mathbf{y}}}
\def \bz    {\mdef{\mathbf{z}}}
\newcommand{\bbR}{\mathbb{R}}
\newcommand{\ones}{\mathds{1}}
\newcommand{\mdef}[1]{{\ensuremath{#1}}\xspace}  %
\DeclareMathOperator*{\argmin}{argmin}
\DeclareMathOperator*{\poly}{poly}
\DeclareMathOperator*{\Lap}{Lap}
\DeclareMathOperator*{\GMM}{GMM}
\DeclareMathOperator*{\Tr}{Tr}
\DeclareMathOperator*{\TV}{TV}
\newcommand{\abs}[1]{\mdef{\left|#1\right|}}         %
\newcommand{\set}[1]{\mdef{\left\{#1\right\}}}                        %
\newcommand{\ignore}[1]{}
\newif\ifnotes\notestrue %
\newcommand{\samson}[1]{\textcolor{purple}{{\bf (Samson:} {#1}{\bf ) }} \marginpar{\tiny\bf
             \begin{minipage}[t]{0.5in}
               \raggedright S:
            \end{minipage}}}            							
\newcommand{\samson}[1]{}
\renewcommand*{\@fnsymbol}[1]{\textcolor{mahogany}{\ensuremath{\ifcase#1\or *\or \dagger\or \ddagger\or
 \mathsection\or \triangledown\or \mathparagraph\or \|\or **\or \dagger\dagger
   \or \ddagger\ddagger \else\@ctrerr\fi}}}
\providecommand{\email}[1]{\href{mailto:#1}{\nolinkurl{#1}\xspace}}
\definecolor{mahogany}{rgb}{0.75, 0.25, 0.0}
\definecolor{darkblue}{rgb}{0.0, 0.0, 0.55}
\definecolor{darkpastelgreen}{rgb}{0.01, 0.75, 0.24}
\definecolor{bleudefrance}{rgb}{0.19, 0.55, 0.91}
\definecolor{darkgreen}{rgb}{0.0, 0.2, 0.13}
\definecolor{darkgoldenrod}{rgb}{0.72, 0.53, 0.04}
\definecolor{darkred}{rgb}{0.55, 0.0, 0.0}
\definecolor{mydarkblue}{rgb}{0,0.08,0.45}
\title{Private Training \& Data Generation\\by Clustering Embeddings}
\author{%
  Felix Zhou\\
  Yale University\\
  \texttt{felix.zhou@yale.edu} \\
  \and
  Samson Zhou\\
  Texas A\&M University\\
  \texttt{samsonzhou@gmail.com} \\
  \and
  Vahab Mirrokni\\
  Google Research\\
  \texttt{mirrokni@google.com} \\
  \and
  Alessandro Epasto\\
  Google Research\\
  \texttt{aepasto@google.com} \\
  \and
  Vincent Cohen-Addad\\
  Google Research\\
  \texttt{cohenaddad@google.com} \\
}
\date{}
\begin{document}

\maketitle

\begin{abstract}
Deep neural networks often use large, high-quality datasets to achieve high performance on many machine learning tasks. When training involves potentially sensitive data, this process can raise privacy concerns, as large models have been shown to unintentionally memorize and reveal sensitive information, including reconstructing entire training samples. Differential privacy (DP) provides a robust framework for protecting individual data and in particular, a new approach to privately training deep neural networks is to approximate the input dataset with a privately generated synthetic dataset, before any subsequent training algorithm. We introduce a novel principled method for DP synthetic image embedding generation, based on fitting a Gaussian Mixture Model (GMM) in an appropriate embedding space using DP clustering. Our method provably learns a GMM under separation conditions. Empirically, a simple two-layer neural network trained on synthetically generated embeddings achieves state-of-the-art (SOTA) classification accuracy on standard benchmark datasets. 
Additionally, we demonstrate that our method can generate realistic synthetic images that achieve downstream classification accuracy comparable to SOTA methods. Our method is quite general, as the encoder and decoder modules can be freely substituted to suit different tasks. It is also highly scalable, consisting only of subroutines that scale linearly with the number of samples and/or can be implemented efficiently in distributed systems. 
\end{abstract}

\begin{figure}[htbp]
    \centering
    \includegraphics[width=.86\linewidth]{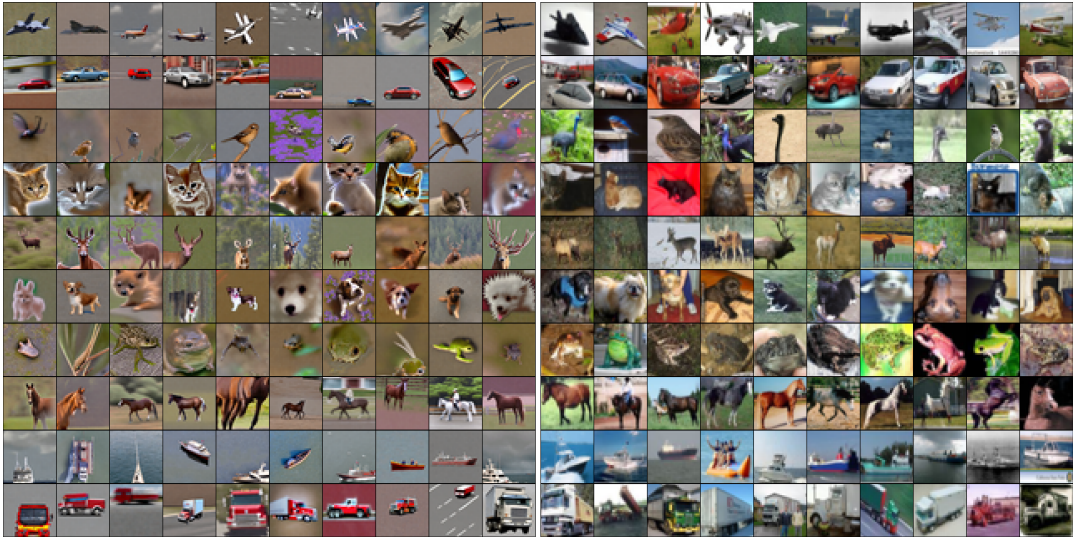}
    \caption{Synthetic and original CIFAR-10 images at \mbox{$\eps=8$}, \mbox{$\delta=10^{-5}$}.
    Each row corresponds to a different class.
    The left-most columns are synthetic images obtained with our method, while the right-most columns are original images.}
    \label{fig:cifar10-eps8}
\end{figure}

\section{Introduction}
\allowdisplaybreaks

The rise of massive datasets and increasingly complex machine learning (ML) models has transformed a large number of fields such as computer vision, natural language processing, and pattern recognition. 
These advancements have been fueled by the availability of high-quality datasets, enabling deep neural networks to achieve unprecedented performance across diverse tasks. 
However, the widespread reliance on large-scale data in ML introduces significant challenges and potential risks. 
One such risk is inadvertently exposing private user information in the output of a machine learning system~\cite{song2021systematic}. 
These risks have led to the establishment of strict data privacy regulations that forbid the storage of data that can be re-traced to individuals (re-identification)~\cite{voigt2017eu}.
Thus, privacy-preserving machine learning is no longer only a desirable property, but a necessity. 
When dealing with private data, differential privacy (DP)~\citep{DworkMNS06} (\defref{def:dp}) has emerged as the gold standard for ensuring strong privacy protection.
DP ensures that outputs of an algorithm are statistically similar regardless of the inclusion of any single data point, 
thus provably avoiding privacy risks such as re-identification.
As such, DP presents a strong framework for regulation-compliant training on sensitive data~\cite{cummings2018role}.

In this paper, we study the problem of differentially-private synthetic data generation~\citep{LinGKNY24,xie2024text,hou2024PrEText,ghalebikesabi2023diffusion,harder2023pretrained,yue2023simple,kurakin2023harnessing,torfi2022medical,mattern2022secure,rosenblatt2020differentially,torkzadehmahani2019DPCGAN,abay2018privacy,amin2025clustering,TanXXHW25}.
The goal of DP synthetic data release is to privately obtain approximations of potentially sensitive datasets that effectively extract, from the data, the useful information needed to achieve the system's goals, while at the same time ensuring that no individual's privacy is compromised.  

Specifically, consider the problem of training a machine learning model for a certain task (e.g., classification) with DP guarantees~\citep{AbadiCGMMT016}. In this context, DP synthetic data can be used to output a privatized version of the training dataset (see \Cref{fig:cifar10-eps8}), where then arbitrary \emph{non}-private training techniques can be applied without additional privacy risk.  
This approach is an increasingly popular alternative to directly training an ML model for the task using DP-SGD~\citep{malladi2023finetuning,he2023limits,YuNBGI0KLMWYZ22,de2022unlocking,AbadiCGMMT016} due to several advantages over private model training:
\begin{enumerate}%
\item 
Publishing synthetic datasets can enable direct inspection of an approximation of the underlying data,
allowing model designers the freedom to explore the data to identify issues, debug model behaviors, and assess data quality. 
\item 
DP synthetic data generation allows plug-in use of any existing model architecture without the need to run more complex privacy-preserving training methods, such as Differentially Private Stochastic Gradient Descent (DP-SGD)~\citep{AbadiCGMMT016,de2022unlocking}. 
This avoids the additional engineering effort needed to support DP training pipelines, which may require a fine-grained understanding of the interplay between privacy and the underlying training mechanics. 
For instance, Opacus \citep{yousefpour2021opacus}, a popular DP training library, requires a custom implementation of a per-sample gradient calculator for custom layers.
\item 
Private synthetic data release allows for unlimited training of models without incurring additional privacy costs. 
By comparison, repeated private training requires accounting for accumulated privacy loss.
\end{enumerate}

In this work,  we design novel methods for practical DP synthetic data generation by taking inspiration from the embedding clustering literature~\cite{xie2016unsupervised}.
Our guiding insight is that an appropriate embedding of the input data makes it more amenable to clustering.
Indeed, embedding objects into an appropriate space
and clustering these embeddings has been 
theoretically~\cite{spielman2025spectral,luxburg2007tutorial}
and empirically~\cite{huang2014deep,jiang2017variational,rozemberczki2019gemsec,reimers2019classification} shown to be effective at capturing desirable structures within data.

\subsection{Problem Definition}
Our overarching goal is to develop private synthetic data to perform downstream tasks. 
Though our methodologies focus on image classification, we first provide a formal model to quantify the performance of a synthetic dataset for general classification. 
Formally, consider an ML task where the goal is to perform classification on an input space $\calX$ for a label space $\calY=\{1,\ldots,L\}$.  
We remark that $\calX$ can either be the original input dataset or an embedding of the dataset under any fixed encoding scheme. 

Suppose the loss function $\ell(\cdot,\cdot;\theta):\calX\times\calY\to\mathbb{R}$ is parameterized by the vector $\theta$ over the hypothesis class. 
Here, the vector $\theta$ denotes the parameters of the deep learning model, e.g., the weights, biases, and hyperparameters of the model. 
We assume access to a collection $S$ of $n$ data points that are sampled i.i.d. over the space $\calZ=\calX\times\calY$ so that $(x_i,y_i)\sim \calD_\calZ$ for each $i\in[n]$, for some probability distribution $\calD_\calZ$. 
We would like to apply a learning algorithm $\calA$ onto the input $S$ to learn a model that can accurately predict the correct labels for new, unseen data by capturing the underlying patterns or relationships in the training data, while simultaneously protecting potentially sensitive information. 
Our approach is to privately estimate the true distribution,
say with $\widetilde{\calD}_\calZ$,
and release a private synthetic dataset $\widetilde{S}$ from $\widetilde{\calD}_\calZ$,
such that any algorithm $\calA$ trained on $\widetilde{S}$ retains good accuracy when applied to $\calD$ rather than $\widetilde{S}$. 
Quantitatively, our goal is to minimize the classification loss of the private synthetic dataset, which can be decomposed as follows:
\begin{align}
\EEx{\bx,y\sim \calD_{\calZ}}{\ell(\bx,y; \theta)}
&\le \underbrace{\frac{1}{|\widetilde{S}|} \sum_{\tilde\bx, \tilde y\in\widetilde{S}} \ell(\tilde \bx,\widetilde{y}; \theta)}_{\text{training error}}
+ \underbrace{\left| \frac{1}{|\widetilde{S}|} \sum_{\tilde\bx, \tilde y\in\widetilde{S}} \ell(\tilde \bx,\widetilde{y}; \theta) - \EEx{\tilde\bx, \tilde y\sim \widetilde{\calD}_{\calZ}}{\ell(\tilde\bx, \tilde y; \theta)} \right|}_{\text{synthetic data generation error}} \nonumber\\
&\qquad+ \underbrace{\left| \EEx{\tilde\bx, \tilde y\sim \widetilde{\calD}_{\calZ}}{\ell(\tilde\bx, \tilde y; \theta)} - \EEx{\bx,  y\sim \calD_{\calZ}}{\ell(\bx,  y; \theta)} \right|}_{\text{estimation error}}\,, \label{eq:classification-loss-decomposition}
\end{align}
where $\widetilde{S}=\{\widetilde{\bx_j},\widetilde{y_j}\}$ is a (private) synthetic dataset. 

Thanks to the postprocessing property of DP,
we can sample as many points from our privately estimated distribution $\widetilde{\calD}_\calZ$ as desired.
Hence we do not focus on the generation error
but note that under mild assumptions,
we can show that the generation error converges uniformly to 0 across all parameters $\theta\in \Theta$
using techniques such as metric entropy~\cite{wainwright2019high}.
Thus our primary concern is to develop a principled DP distribution estimation algorithm.

\paragraph{Loss function regularity conditions.}
We remark that although the problem formulation is simple, there is no upper bound to the private synthetic data loss without additional assumptions on the loss function. 
To circumvent these limitations due to poorly behaved loss functions, 
existing works often assume the dataset lies in a metric space that is ``well-behaved'' with respect to the loss function of the model. 
For example, \citet{SenerS18} assumes the loss function is $\lambda$-Lipschitz, i.e., \mbox{$|\ell(\bx, y)-\ell(\bx', y)|\le\lambda\cdot\|\bx-\bx'\|_2$}, while \citet{AxiotisCHJMSWW24} assumes the loss function is $(z,\lambda)$-H\"{o}lder continuous, i.e., \mbox{$|\ell(\bx, y)-\ell(\bx', y)|\le\lambda\cdot\|\bx-\bx'\|_2^z$} and demonstrate experimentally that this holds true for
in the context of large language models (T5-model~\cite{raffel2020exploring} for a translation task and BERT embeddings~\cite{devlin2019bert}).

\paragraph{Embedding space.} 
Functionally, the elements of the dataset can be embedded into a metric space, 
e.g., graph embeddings~\citep{GroverL16}, 
word embeddings~\citep{DevlinCLT19,PenningtonSM14,MikolovCCD13}, 
or image embeddings~\citep{RadfordKHRGASAM21,he2016resnet,SimonyanZ14a}. 
In general, an embedding can be acquired from the last layers of a neural network, which is especially appropriate when the model has already been pre-trained on publicly available data and the goal is to either fine-tune the model on private data for a specific task.
In these settings, a natural view is that the input dataset to the algorithm is the embedding of the original dataset, while the loss function may be the norm of the gradient of the embedding.

\subsection{Our Contributions}
In this paper, we present a novel training-free approach based on Gaussian mixture models (GMMs, c.f., \defref{def:GMM}) to privately generate synthetic data to minimize the error specified in \Cref{eq:classification-loss-decomposition} after training. 
We first seek to privately partition the input dataset into $k$ clusters, adapting a recent line of work~\citep{SenerS18,AxiotisCHJMSWW24} for the non-private active learning problem. 
Existing works use the resulting clustering to sample a number of points from each cluster, a procedure that inherently violates differential privacy. 
Instead, we privately estimate the intra-cluster covariance as our goal is to release a private synthetic dataset based on the resulting clustering. 
Informally, we would like to preserve the distribution of the input points, since the sample distribution serves as an estimate of the true distribution. 
The main intuition is that if the dataset can be partitioned into $k$ clusters such that {each cluster can be well-approximated by a Gaussian distribution}, then by generating data points using a GMM, we expect that the distributional distance between generated points and the input distribution to be small.
\begin{theorem}[Informal Parameter Estimation; See \Cref{thm:dp-GMM-estimation}]\label{thm:informal:GMM-estimation}
    Let $\eps, \delta, \alpha, \beta\in(0,1)$. 
    Given $n$ samples from a well-separated $k$-Gaussian mixture model $\calD_{\GMM}$ in $d$-dimensional space
    for $n=\poly(k, d, \nicefrac1{\alpha}, \nicefrac1\eps, \log(\nicefrac1\beta), \log(\nicefrac1\delta))$, 
    \Cref{alg:dp-synthetic-generation} is an $(\eps,\delta)$-DP algorithm that outputs parameter estimates $\hat w_i, \hat\mu_i, \hat\Sigma_i$ such that with probability $1-\beta$,
    $\norm{w-\hat w}_1, \norm{\mu - \hat\mu}_2, \norm{\Sigma-\hat\Sigma}_F\leq \alpha$.
\end{theorem}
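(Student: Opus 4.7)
My plan is to decompose the argument into a privacy analysis and a utility analysis, following the natural three-stage structure of \Cref{alg:dp-synthetic-generation}: a DP clustering step (\DPCluster) producing a $k$-way partition, and then per-cluster calls to \DPMean and \DPCovariance for parameter recovery. For privacy, I would allocate the budget $(\eps,\delta)$ in constant-fraction shares across these three stages. Since \DPMean and \DPCovariance are invoked on disjoint partitions of the dataset, parallel composition keeps the per-stage cost constant in $k$; basic (or advanced) composition across the three stages then yields overall $(\eps,\delta)$-DP. The weight estimates $\hat w_i$ can be obtained by adding $\Lap(\nicefrac1{n\eps})$ noise to the cluster sizes, which folds into the same budget via composition.

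For utility, the key structural fact I would invoke is that under the well-separation assumption on $\calD_{\GMM}$, with probability at least $1-\beta/3$ the \DPCluster routine returns a partition that, up to relabeling of the $k$ output clusters, agrees with the ground-truth component assignment on all but a $\poly(\alpha/k)$ fraction of the samples. This is the step I expect to be the main obstacle: separation is what converts a clustering guarantee (which typically bounds a cost proxy like the $k$-means objective) into a statement about correctly recovering component identities. I would either cite an existing DP clustering-under-separation result or argue it by combining a DP $k$-means approximation with a Voronoi/ball-concentration argument showing that points sampled from $\mathcal{N}(\mu_i,\Sigma_i)$ lie strictly closer to $\mu_i$ than to any other $\mu_j$ whenever the pairwise mean gaps dominate the per-component spread.

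Conditioned on this event, each recovered cluster $S_i$ consists of $m_i\approx w_i n$ i.i.d.\ samples from $\mathcal{N}(\mu_i,\Sigma_i)$ up to a negligible contamination. Standard concentration then gives $|\hat w_i - w_i|\leq \alpha/k$ after the Laplace perturbation, provided $n = \Omega\bigl(\tfrac{k}{\alpha\eps}\log(k/\beta)\bigr)$. On each cluster I would then invoke a black-box DP mean-estimation guarantee for sub-Gaussian data requiring $\tilde O\bigl(\tfrac{d}{\alpha^2}+\tfrac{d}{\alpha\eps}\bigr)$ samples per cluster to get $\|\hat\mu_i-\mu_i\|_2\leq \alpha$, and a DP covariance-estimation guarantee requiring $\tilde O\bigl(\tfrac{d^2}{\alpha^2}+\tfrac{d^2}{\alpha\eps}\bigr)$ samples per cluster to get $\|\hat\Sigma_i-\Sigma_i\|_F\leq \alpha$, both with failure probability $\beta/(3k)$. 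A union bound over the $k$ clusters and the three stages gives the overall $1-\beta$ success probability.

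Combining these bounds and inverting $m_i \geq w_{\min}\cdot n/2$ (using the weight concentration from the previous step) yields the polynomial sample complexity $n=\poly(k,d,\nicefrac1\alpha,\nicefrac1\eps,\log(\nicefrac1\beta),\log(\nicefrac1\delta))$. The main technical subtleties I foresee, beyond the clustering-correctness step, are: (i) handling the residual contamination in each $S_i$ so that the per-cluster estimators remain unbiased enough to meet their sub-Gaussian input assumptions, which I would address by showing the contamination is small in total variation and invoking robustness of DP mean/covariance estimators; and (ii) carefully tracking the dependence on $w_{\min}$, the smallest mixing weight, which controls the minimum cluster size and therefore the effective sample complexity inside each \DPMean/\DPCovariance call.
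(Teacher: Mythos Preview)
Your proposal follows the same three-stage decomposition as the paper (DP clustering, then per-cluster DP mean/covariance estimation with parallel composition across clusters, plus Laplace noise for the weights), and your privacy argument matches the paper's. The one place you diverge is in hedging on clustering accuracy: you assume the partition is correct only up to a $\poly(\alpha/k)$ contamination fraction and then plan to absorb that residual via robustness of the DP estimators. The paper instead shows (\Cref{thm:k-means-learns-GMM}) that under its separation condition---chosen precisely so that the Voronoi/ball-concentration argument you sketch goes through for \emph{every} sample simultaneously---the induced partition is exactly the ground-truth component assignment with probability $1-\beta$, so each cluster contains only i.i.d.\ samples from a single Gaussian and no contamination handling is needed. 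Concretely, the paper first bounds $\max_i\min_j\|\mu_i-\nu_j\|_2$ via the $k$-means approximation guarantee (\Cref{lem:k-means-clusters-GMM}), then uses Gaussian norm concentration plus a union bound over all $N$ samples to conclude that every sample is strictly closer to its own center than to any other. Your route would also work and would in principle tolerate weaker separation, but it is more involved than necessary here; the paper's cleaner path is to set the separation threshold high enough that the per-sample misclassification probability is at most $\beta/N$ and union-bound, which eliminates your subtlety (i) entirely.
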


Algorithms for probably learning GMMs have been well-studied by the Theoretical Computer Science community.
Our algorithm accomplishing \Cref{thm:informal:GMM-estimation} follows the well-studied cluster-then-learn paradigm~\citep{dasgupta1999mixtures}, which requires some form of separation condition of the underlying distribution.
See \Cref{sec:related-works} for more details.

Now, for a ``well-behaved'' loss function, e.g., Lipschitz, we can provably approximate the loss of the original dataset for the purposes of downstream training.
Moreover, our parameter estimation algorithm yields a conditional generation algorithm for labeled data by running the estimation algorithm for each class.
\begin{theorem}[Informal Downstream Training; See \Cref{thm:wasserstein-generation}]
\label{thm:inf:cond:gen}
    Let $\eps, \delta, \alpha, \beta\in(0,1)$
    and $f$ be a $(\lambda,z)$-H\"{o}lder continuous loss function for $z\in [1, 2]$.
    Suppose $Z = (X, Y)$ is a joint feature-label distribution for $Y\in [c]$ where
    each conditional distribution $(X\mid Y=y)\sim \calD_{\GMM}^{(y)}$ is a well-separated Gaussian mixture model.
    Given $n$ samples from each conditional distribution
    for $n=\poly(k, d, \nicefrac1{\alpha}, \nicefrac1\eps, \log(\nicefrac1\beta), \log(\nicefrac1\delta))$,
    there is an $(\eps,\delta)$-DP algorithm that outputs a distribution $\tilde Z = (\tilde X, Y)$
    such that with probability $1-\beta$,
    $
        \EEx{Z}{f(Z)}
        \le \EEx{\tilde Z}{f(\tilde Z)} + \lambda\cdot \alpha\,.
    $
\end{theorem}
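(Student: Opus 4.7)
The plan is to reduce the theorem to a per-class application of \Cref{thm:informal:GMM-estimation}. Because each sample carries exactly one label $y\in[c]$, the $c$ class-conditional estimation problems act on disjoint pieces of the input, so running the per-class estimator in parallel incurs no composition cost under parallel composition of DP; the label marginal can additionally be released with negligible budget via a Laplace histogram (or absorbed into the parallel composition). Setting each invocation to confidence $\beta/c$ and accuracy $\alpha'$ to be chosen at the end, a union bound over the $c$ classes yields, with probability at least $1-\beta$, per-class estimates $(\hat w^{(y)},\hat\mu^{(y)},\hat\Sigma^{(y)})$ satisfying $\norm{\hat w^{(y)}-w^{(y)}}_1,\,\norm{\hat\mu^{(y)}-\mu^{(y)}}_2,\,\norm{\hat\Sigma^{(y)}-\Sigma^{(y)}}_F\le\alpha'$ simultaneously for every $y\in[c]$.

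Conditioning on this event, fix $y$ and couple $X\mid Y=y$ with $\tilde X\mid Y=y$ componentwise. The separation assumption inherited from \Cref{thm:informal:GMM-estimation} induces an identifiable matching between the true and estimated mixture components (which the cluster-then-learn subroutine produces as a by-product), and for each matched pair I would couple the two Gaussians through the Monge map
\[
T(x)=\hat\mu + \hat\Sigma^{1/2}\Sigma^{-1/2}(x-\mu),
\]
whose transport cost equals the Bures distance and is bounded by $O(\norm{\hat\mu-\mu}_2^2+\norm{\hat\Sigma-\Sigma}_F^2)$ using the lower bound on the smallest eigenvalue of $\Sigma$ implied by the separation condition. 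The residual mixing-weight mismatch $\norm{\hat w^{(y)}-w^{(y)}}_1\le\alpha'$ is handled by transporting the excess mass to an arbitrary component, contributing at most $O(\alpha' R^2)$ where $R$ is a high-probability radius of the class-conditional mixtures (finite by standard sub-Gaussian concentration). Adding these contributions gives $\EEx{}{\norm{X-\tilde X}_2^2\mid Y=y}\le C(\alpha')^2(1+R)^2$, and applying Jensen's inequality to the concave map $t\mapsto t^{z/2}$ (valid since $z\in[1,2]$) yields
\[
\EEx{}{\norm{X-\tilde X}_2^{z}\mid Y=y}\le C^{z/2}(\alpha')^{z}(1+R)^{z}.
\]

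Putting it together, $(\lambda,z)$-H\"older continuity of $f$ in its first argument gives
\[
\bigl|\EEx{Z}{f(Z)}-\EEx{\tilde Z}{f(\tilde Z)}\bigr|\le \lambda\,\EEx{Y}{\EEx{\pi}{\norm{X-\tilde X}_2^{z}\mid Y}} + O(\alpha')\cdot \norm{f}_{\infty},
\]
where the second term bounds the contribution from the slightly inaccurate label marginal (with $\norm{f}_\infty$ controlled on the effective support ball of radius $R$). Choosing $\alpha'=c_0\alpha^{1/z}/(1+R)$ for a suitable absolute constant $c_0$ drives the right-hand side below $\lambda\alpha$, and since $\alpha'$ enters the sample complexity of \Cref{thm:informal:GMM-estimation} polynomially and $1/z\le 1$, the overall sample size remains $\poly(k,d,1/\alpha,1/\eps,\log(1/\beta),\log(1/\delta))$.

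The principal obstacle is the coupling construction: Frobenius closeness of covariances is not directly Bures/$W_2$ closeness, so this conversion hides a factor depending on the smallest eigenvalue of the true $\Sigma^{(y)}$ that must be controlled by the GMM separation hypothesis; the weight-mismatch term carries an explicit dependence on the effective support radius $R$, which itself must be bounded with high probability under the mixture; and one must argue that the matching produced by the clustering subroutine is simultaneously consistent between the true and estimated mixtures uniformly over all $c$ classes. Bookkeeping these constants so that the final loss gap is linear in $\alpha$ (rather than $\alpha^z$) while keeping the sample complexity polynomial is where most of the technical work of a fully rigorous proof would reside.
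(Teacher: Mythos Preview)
Your high-level architecture matches the paper's: per-class parameter estimation via \Cref{thm:informal:GMM-estimation}, parallel composition for DP, a Wasserstein coupling of the true and estimated GMMs, and then H\"older continuity to pass to the loss. The differences, and the gap, are in the coupling step.

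\textbf{The eigenvalue lower bound does not exist.} You assert that the separation hypothesis implies a lower bound on the smallest eigenvalue of each $\Sigma_i^{(y)}$, and use this to convert Frobenius closeness of covariances into a $W_2$ bound of order $\|\hat\Sigma-\Sigma\|_F^2$. But the separation condition in this paper is purely a lower bound on $\min_{i\neq j}\|\mu_i-\mu_j\|_2$ together with an \emph{upper} bound $\Sigma_i\preceq\sigma^2 I$; nothing prevents $\Sigma_i$ from being arbitrarily degenerate. Your Monge-type map $T(x)=\hat\mu+\hat\Sigma^{1/2}\Sigma^{-1/2}(x-\mu)$ is therefore not even well-defined in general, and the resulting transport-cost bound cannot be justified. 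The paper sidesteps this entirely via the Powers--St\o rmer inequality, obtaining $\|\Sigma^{1/2}-\hat\Sigma^{1/2}\|_F^2\le\sqrt{d}\,\|\Sigma-\hat\Sigma\|_F$ with no eigenvalue assumption; the price is a $d^{z/4}$ factor and the exponent $\alpha^{z/2}$ rather than $\alpha^z$, which is then absorbed into the sample complexity by rescaling $\alpha$.

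\textbf{The $\|f\|_\infty$ term is both unnecessary and problematic.} The statement fixes the label marginal: $\tilde Z=(\tilde X,Y)$ shares $Y$ with $Z$, so there is no label-marginal error to account for, and the paper's Lemma (their \Cref{lem:wasserstein-function-error}) simply takes expectation of the pointwise H\"older bound under the optimal $W_z$ coupling, conditionally on $Y$. Introducing a $\|f\|_\infty$ term undermines the entire reason for working in Wasserstein distance rather than total variation---the Wasserstein route is precisely what lets you handle unbounded $f$.
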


We also show that our algorithm satisfies $(\eps, \delta)$-DP and can be implemented in near-linear time.
\begin{theorem}[Informal; See \Cref{thm:alg-privacy,thm:running-time-formal}]
\label{thm:inf:priv:scala}
    Let $(\eps, \delta)\in (0, 1)$,
    $n$ be the number of input images,
    $T$ be the maximum runtime of \Encode and \Decode on a single input,
    and $d$ the embedding dimension.
    \Cref{alg:dp-synthetic-generation} is $(\eps, \delta)$-DP
    and can be implemented in $\tilde O(n(d+T)\cdot \poly(\nicefrac1\eps, \log(\nicefrac1\delta)))$ time.
\end{theorem}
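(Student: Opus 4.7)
The plan is to verify the privacy and runtime claims separately by exploiting the modular structure of \Cref{alg:dp-synthetic-generation}, whose body consists of the standard DP primitives \DPCluster, \DPMean, and \DPCovariance applied to embeddings, together with deterministic pre- and post-processing via \Encode and \Decode.

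For privacy, I would first observe that \Encode is applied to each input image independently, so changing one input induces a change in exactly one embedding; any $(\eps, \delta)$-DP guarantee on the embedded dataset therefore lifts immediately to the same guarantee on the original dataset under the usual neighboring relation. Symmetrically, \Decode acts only on the already-privately-generated synthetic embeddings and is pure postprocessing. It remains to bound the privacy loss of the intermediate DP subroutines. I would split the budget $(\eps, \delta)$ into three sub-budgets of order $(\eps/3, \delta/3)$, one for \DPCluster and one each for the collections of \DPMean and \DPCovariance calls. Since the per-cluster mean and covariance estimators operate on the disjoint subsets induced by \DPCluster's output partition, parallel composition suffices within each of those two stages, so the total per-stage loss does not scale with $k$. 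Combining the three stages by basic composition then yields $(\eps, \delta)$-DP overall.

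For the runtime I would account for each subroutine and sum. The encoding and decoding passes contribute $O(nT)$ time, since \Encode is applied once per input and \Decode once per output sample (of which we generate $O(n)$). Instantiating \DPCluster with a near-linear time DP clustering primitive gives $\tilde O(nd \cdot \poly(1/\eps, \log(1/\delta)))$ for the clustering stage. Summed across the $k$ clusters, the \DPMean and \DPCovariance calls collectively touch each embedding a constant number of times and contribute another $\tilde O(nd \cdot \poly(1/\eps, \log(1/\delta)))$. Sampling $n$ points from the learned GMM to form $\widetilde S$ adds $\tilde O(nd)$, and these terms sum to the stated bound.

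The one piece of bookkeeping that requires care, and is the main potential obstacle, is justifying parallel composition for the per-cluster subroutines: because the partition itself is the output of a DP mechanism rather than a fixed function of the data, one must argue that conditional on releasing that partition the disjoint-subset invocations of \DPMean and \DPCovariance contribute only their worst per-call privacy loss (so that the post-clustering stages do not accumulate a factor of $k$). This is standard but is the single place where the privacy accounting is not completely immediate.
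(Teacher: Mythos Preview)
Your overall strategy matches the paper's: privacy follows from basic composition across the DP subroutines (with parallel composition handling the per-cluster calls to \DPMean and \DPCovariance on the disjoint $D_j$), and the runtime follows by summing the costs of encoding/decoding, near-linear DP clustering, and near-linear DP Gaussian estimation. The subtlety you flag about parallel composition after a data-dependent partition is exactly right and is handled just as you describe: conditional on the released centers, the partition is fixed and the clusters are disjoint.

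There is, however, one concrete gap in your privacy accounting. You describe the body of \Cref{alg:dp-synthetic-generation} as consisting only of \DPCluster, \DPMean, and \DPCovariance plus deterministic \Encode/\Decode, and accordingly propose a three-way budget split $(\eps/3,\delta/3)$. But the algorithm as written also includes \DPFilterEmbedding and \DPFilterImage, each of which touches the private data and is allocated its own $(\eps/5,\delta/5)$ share; these are not post-processing. The paper therefore composes \emph{five} private subroutines, not three, and your three-way split would not certify $(\eps,\delta)$-DP for the full algorithm. The fix is immediate (split five ways instead of three), and the same omission carries over to your runtime tally, where the formal version of the theorem in the paper also includes $T_{\DPFilterEmbedding}(n)$ and $T_{\DPFilterImage}(n)$ terms.
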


We implement and test our framework on standard benchmark datasets from DP classification and synthetic data literature~\citep{LinGKNY24,ghalebikesabi2023diffusion,de2022unlocking,torkzadehmahani2019DPCGAN}
at the same privacy levels as the state-of-the-art (SOTA) DP classification~\cite{de2022unlocking}.
While our theoretical analysis hinges on separability conditions,
we find that our method empirically yields strong downstream classification accuracy regardless. 
Specifically,
we train a simple two-layer neural network on DP synthetic embeddings
and compare its accuracy against all DP training methods,
including those that do not use synthetic data.
We obtain SOTA classification accuracy on standard datasets in the DP synthetic data literature (See \Cref{sec:exp}).

Note that one would expect training via DP synthetic data generation to achieve worse performance than direct training via DP-SGD,
as the former is a more general task.
This belief is supported by previous work on DP synthetic image generation~\cite{LinGKNY24}.
Thus it is very surprising that we can achieve comparable,
not to mention new SOTA DP classification results.

\subsection{Related Works}\label{sec:related-works}
There are many related works that are relevant to this paper.
We discuss the immediately related works
and defer the rest to \Cref{apx:related-works}.

\paragraph{DP synthetic data.}
Given a private dataset $D$, the goal is to privately generate a synthetic dataset which is statistically similar to $D$~\citep{LinGKNY24,xie2024text,hou2024PrEText,ghalebikesabi2023diffusion,harder2023pretrained,yue2023simple,kurakin2023harnessing,torfi2022medical,mattern2022secure,rosenblatt2020differentially,torkzadehmahani2019DPCGAN,abay2018privacy}.
See \citep{hu2024SoK,chen2024unified} and references therein for a survey of recent developments.
One related line of work on DP synthetic data given only \emph{API-access} to foundation models~\cite{LinGKNY24,xie2024text} also develops training-free methods that leverage pre-trained embeddings.
However,
they only do so in the context of establishing a measure of difference between a candidate synthetic dataset and the true sensitive dataset.
We further leverage the power of pre-trained embeddings by clustering together similar data points in the embedding space and modeling each cluster using a Gaussian distribution.

\paragraph{DP clustering.}
DP $k$-means clustering seeks to identify groups of similar data points
while ensuring the output is not overly sensitive to the value of any particular entry.
\cite{SuCLBJ16,SuCLBLJ17,huang2018optimal,lv2019optimizing}.
A particularly relevant line of work is that on scalable DP clustering algorithms which terminate in near-linear running time~\citep{cohen2022near,cohen2022scalable}.

\paragraph{(DP) Gaussian mixture models.}
Mixture models were introduced by \citet{pearson1894contributions} for modeling the presence of subpopulations.
The most popular algorithm for estimating GMMs in practice is a heuristic called Expectation-Maximization (EM)~\citep{dempster1977maximum}.
Unfortunately,
EM does not provably learn GMMs.
In a seminal paper,
\citet{dasgupta1999mixtures} designed the first (efficient) clustering-based algorithm that provably learns a GMM under separation conditions similar to ours.
The cluster-then-learn scheme introduced by \citet{dasgupta1999mixtures} led to follow-up works~\citep{dasgupta2000two-round,arora2005learning,vempala2004spectral} following said scheme that shaved the degree of separation needed.
Departing from clustering-based techniques,
\citet{kalai2010efficiently,moitra2010settling} developed sophisticated algorithms for learning GMMs without any separation conditions.
Unlike clustering-based algorithms,
These algorithms have polynomial dependence on relevant parameters except $k$ (the number of components),
which is unfortunately necessary in the absence of separation conditions.
See e.g. \citep{moitra2018algorithmic} for a more detailed survey of prior algorithmic developments.

In general,
the covariance matrices within each component of a GMM (\defref{def:GMM}) can be arbitrary.
However,
various restrictions of the covariance structure have been studied and applied across various fields,
including spherical covariances~\cite{hsu2013learning},
diagonal covariances~\cite{reynolds2009gaussian},
and tied covariances~\cite{greenspan2006constrained}.

In the differential privacy community,
prior works have studied the task of privately learning a GMM under various assumptions~\citep{ParkFCW17,KamathSSU19}.
See \citet{arbas2023polynomial} and references therein for a more comprehensive history.
However,
practical implementations have so far been underexplored.
Our simple algorithm for privately fitting a GMM based on the more well-studied task of DP clustering may be of interest beyond synthetic data generation.

\subsection{Preliminaries}
We defer standard preliminaries to \Cref{apx:prelim}.

\paragraph{Notation.}
We write $d$ to denote the ambient dimension,
$\eps, \delta$ to denote the approximate-DP parameters,
and $\alpha,\beta$ to denote the accuracy and failure probability parameters.
We use $k$ to denote the number of clusters or components for $k$-means or Gaussian mixture models,
respectively.
Typically,
we use $\mu, \Sigma$ to denote the mean and covariance of a distribution.

\section{Overview of Techniques \& Utility Analysis}
\seclab{sec:methods}
\begin{algorithm}[ht]
   \caption{DP Synthetic Generation}
   \label{alg:dp-synthetic-generation}
    \begin{algorithmic}[1]
    \State {\bfseries Input:} 
        data $D = \set{\bx_1, \dots, \bx_n}$, 
        privacy parameters $\eps, \delta$, 
        number of clusters $k$,
        number of generations $m$
    \vspace{0.25em}

    \State $D_\Embedding\gets \set{\Encode(\bx): \bx\in D}$ \label{line:embedding}
    \vspace{0.25em}
    
    \State $(\bc_1, n_1), \dots, (\bc_k, n_k)\gets \DPCluster(D_\Embedding, \nicefrac\eps5, \nicefrac\delta5)$ 
    \vspace{0.25em}
    
    \For{$j=1,\dots, k$}
        \State $D_j \gets \{ \bx\in D_\Embedding: \bc_j=\argmin_{\bc=\bc_1, \dots, \bc_k}\norm{\bx-\bc}_2 \}$
        \State $\mathbf\mu_j \gets \DPMean(D_j, \nicefrac\eps5, \nicefrac\delta5)$
        \State $\Sigma_j \gets \DPCovariance(D_j, \nicefrac\eps5, \nicefrac\delta5)$
        \State $p_j \gets n_j / \sum_{j=1}^k n_j$ 
    \EndFor
    \vspace{0.25em}

    \State $Z_\Embedding\gets \varnothing$
    \For{$\ell=1, \dots, m$}
        \State $j\sim [k]$ with probability $p_j$
        \State $\bz_\ell\sim \calN(\mathbf\mu_j, \Sigma_j)$
        \State $Z_\Embedding\gets Z_\Embedding\cup \set{z_\ell}$
    \EndFor
    \vspace{0.25em}

    \State $Z \gets \DPFilterEmbedding(Z_\Embedding, D, \nicefrac\eps5, \nicefrac\delta5)$
    \State \textbf{yield} $Z$
    \vspace{0.25em}
    
    \State $Z_\Image\gets \set{\Decode(\bz): \bz\in Z}$
    \vspace{0.25em}

    \State \textbf{yield} $\DPFilterImage(Z_\Image, D, \nicefrac\eps5, \nicefrac\delta5)$
\end{algorithmic}
\end{algorithm}

In this section, we provide the theoretical guarantees for our training-free pipeline. 
We describe our procedures and the corresponding analysis for \emph{unconditional} generation. 
That is, there is no notion of a label for the dataset or equivalently, all the labels of the dataset are assumed to be the same. 
We remark that this is without loss of generality because for the case of \emph{conditional} generation, it suffices to repeat the procedure and analysis in parallel for each class in the training set.
We provide pseudocode for our method in \Cref{alg:dp-synthetic-generation}.

\subsection{Subroutines}
\paragraph{Encoders \& decoders.}
Our utility analysis relies on the loss function being H\"{o}lder continuous over the input space.
embedding space.
While this may seem to be a strong assumption,
it has been experimentally verified to hold for certain embeddings~\cite{raffel2020exploring,devlin2019bert}.

Thus,
to privately train a classifier
by training on DP synthetic embeddings,
we assume there is a publicly available encoder module \Encode 
that takes a $(C\times W\times H)$ image 
and outputs a vector $x\in\bbR^d$.
Here $C$ is the number of image channels
and $W, H$ are the width and height of the input image.

If in addition, we wish to generate DP synthetic images,
we assume access to a decoder module \Decode that takes a vector $x\in \bbR$ and maps it back to an image,
possibly of different dimensions $(C'\times W'\times H')$. 

\paragraph{Filtering embeddings \& images.}
Similar to any (not necessarily private) data generation process,
our method may occasionally generate an embedding or an image that is a poor representation of the underlying sensitive data.
Thus, our algorithm optionally supports filtering at the embedding and image level, where we discard some of the generated embeddings or images based on some rules \DPFilterEmbedding, \DPFilterImage.
Similar to \citet{LinGKNY24,xie2024text,hou2024PrEText},
we allow the filtering to depend on private data.

\subsection{Synthetic Embeddings}
Our full algorithm for generating synthetic data is presented in \Cref{alg:dp-synthetic-generation}.
While the pseudocode includes the optional image generation step, it suffices to stop before the decoding step for the purpose of training a classifier on DP synthetic embeddings.
The rest of this section delves into some details and analysis of \Cref{alg:dp-synthetic-generation}.

\paragraph{Encoding images.}
We use a variant of the pre-trained CLIP \citep{RadfordKHRGASAM21} image encoder to encode each training and test image into $768$-dimensional embeddings.
In particular,
we use \texttt{CLIPImageProcessor}\footnote{\url{https://huggingface.co/docs/transformers/v4.48.0/en/model_doc/clip\#transformers.CLIPImageProcessor}} and \texttt{CLIPVisionModelWithProjection}\footnote{\url{https://huggingface.co/docs/transformers/v4.48.0/en/model_doc/clip\#transformers.CLIPVisionModelWithProjection}}.
Both the models and model weights\footnote{\url{https://huggingface.co/diffusers/stable-diffusion-2-1-unclip-i2i-l/tree/main}} are publicly available on HuggingFace.
Note that there are no private operations in this step.

\paragraph{Privately learning a GMM.}
Next,
we privately fit a $k$-Gaussian Mixture Model ($k$-GMM) on the embeddings produced by the previous step.
This comprises of two steps:
learning a partition of the dataset using a private $k$-means algorithm
and privately estimating the intra-cluster means/covariances given these private centers.
We analyze both steps in \Cref{apx:learn-gmm}.

Intuitively,
assuming the data embeddings were generated from a $k$-GMM,
a reasonable approximate $k$-means solution must place a center close to each cluster.
Then,
assuming the components are sufficiently well-separated,
it should be the case that each output center is also well-separated 
and hence we can ``classify'' points by nearest center.
We formalize this intuition in \Cref{thm:k-means-learns-GMM}.

Then,
these $k$ centers induce a partition of the dataset,
where a point belongs to the $i$-th partition
if its closest center is the $i$-th center.
Assuming we managed to capture only points from the $i$-th component in the $i$-th partition,
we can estimate the parameters of the $i$-th component using any algorithm for Gaussian estimation.
This is made formal in \Cref{thm:dp-GMM-estimation}.

In our experiments,
we use the practical DP $k$-means algorithm by \citet{chang2021clustering}
to privately compute $k$ centers.
Note that the number of clusters $k$ is a tuned hyperparameter.
We also output a noisy count of the number of elements within each partition (cf. \Cref{alg:dp-synthetic-generation}).

For the second step,
we estimate the intra-cluster means and covariances by clipping and adding appropriate Gaussian noise.
There are many variations of restricted covariance models within GMMs (see \Cref{sec:related-works})
and we empirically noticed that diagonal covariances yield the best performance.

\paragraph{Private synthetic embedding generation.}
Given the private $k$-GMM,
we can then generate an unlimited number of synthetic embeddings 
simply by sampling from the GMM. 
This does not incur additional privacy loss as it is post-processing.

Optionally,
we prune the generated point using noisy votes from original training data,
similar to a single iteration of Private Evolution \citep{LinGKNY24,xie2024text}.
That is,
each original embedding point votes for its nearest neighbor in the generated embeddings.
After adding an appropriate amount of noise to the count to preserve privacy,
we keep a generated embedding only if its noisy vote is above a certain threshold.
This threshold is a hyperparameter.

\paragraph{Training a classifier on synthetic embeddings.}
Given a dataset of synthetic embeddings,
our goal is to train a model by minimizing an appropriate well-behaved loss function over the synthetic embeddings.
We analyze this step in \Cref{apx:utility}.

As mentioned before,
since we can generate as many synthetic embeddings as we want, 
\Cref{eq:classification-loss-decomposition} shows that the proxy error arising from training on synthetic embeddings should be dominated by the estimation error.
We translate the parameter estimation error to a distributional bound in Wasserstein distance between GMMs,
which implies a bound on the proxy error for H\"older continuous functions.
This is quantified in \Cref{thm:wasserstein-generation}.

Experimentally,
we train a simple two-layer neural network on the synthetic embeddings
and test its accuracy on the \emph{original} test set embeddings.
As remarked earlier,
using non-private training techniques does not incur any private loss,
as the synthetic embedding generation process is differentially private.
We achieve SOTA accuracy on CIFAR-10~\citep{alex2009cifar} and CAMELYON17~\citep{bandi2019camelyon17}.
We also achieve comparable accuracy on the more challenging CIFAR-100~\citep{alex2009cifar} dataset.
See \Cref{sec:embedding-training} for more details.

\subsection{Synthetic Images}
The above already suffices to train a private classifier.
If we wish to also generate images,
it can be obtained with the help of a decoder module.

\paragraph{Decoding Embeddings into Images.}
We use StableUnCLIP,
a stable diffusion model fine-tuned on CLIP embeddings~\citep{Rombach_2022_CVPR} to decode CLIP embeddings into $768\times 768$ images.
Specifically,
we use the class \texttt{StableUnCLIPImg2ImgPipeline}\footnote{\url{https://huggingface.co/docs/diffusers/en/api/pipelines/stable_unclip\#diffusers.StableUnCLIPImg2ImgPipeline}}
with publicly available weights\footnote{\url{https://huggingface.co/diffusers/stable-diffusion-2-1-unclip-i2i-l/tree/main}} through HuggingFace.

Optionally,
we use NIQE \citep{mittal2013niqe} and PIQE \citep{venkatanath2015piqe,sheikh2005piqe} image quality filters to filter out the generated images that are too noisy.
Note that the two pruning strategies do not depend on the private data
and simply compute a ``quality'' score given an input image.

\paragraph{Training a Classifier on Synthetic Images.}
We use the decoded images to fine-tune a torchvision \citep{torchvision2016} implementation of the ResNet50 model \citep{he2016resnet} that was pre-trained on Imagenet \citep{deng2009imagenet}.
The only modification is to change the final classification layer to match the number of classes for the dataset in question.
Again,
we note that any non-private training method can be used to obtain a private classifier
since the training data is guaranteed to be differentially private.

\section{Experiments}
\label{sec:exp}
We begin by describing our experimental setup in \Cref{sec:setup}.

In \Cref{sec:embedding-training},
we compare the classification accuracy of a simple two-layer neural network trained on DP synthetic embeddings against SOTA private training methods on standard benchmark datasets.
We emphasize that we compare against all DP training methods,
including those that do not utilize synthetic data like DP-SGD.
Surprisingly, we achieve new SOTA results on CIFAR-10 and CAMELYON17 while obtaining competitive accuracy on the more challenging CIFAR-100 dataset.

In \Cref{sec:image-classification},
we demonstrate that our method is also able to generate useful private synthetic images
and compare the downstream classification accuracy of models trained on such images
against SOTA private synthetic image generation methods for CIFAR-10.
In particular,
our method achieves superior classification accuracy at all privacy budgets.

Finally,
\Cref{sec:epsilons-classification} presents a detailed comparison of our method against DP-SGD on various privacy budgets on CIFAR-10.

\subsection{Experimental Setup}\label{sec:setup}
\paragraph{Public data.} The CLIP embedding module~\citep{RadfordKHRGASAM21} was pre-trained on unspecified image-text pairs scoured from the internet.
We emphasize that our experiments on synthetic embeddings only use CLIP as public data.
Our results on synthetic images also requires a decoder.
Our decoder module is based on Stable Diffusion 2.1~\citep{Rombach_2022_CVPR},
which is trained on the LAION-5B~\citep{schuhmann2022laion5b} dataset
and fine-tuned to invert CLIP embeddings using the same dataset. 
For classification on synthetic images,
we fine-tune a model that was pre-trained on ImageNet~\citep{deng2009imagenet}.
We consider the above as publicly available data.

We selected CLIP embeddings because it is a crucial component of the only known general large pre-trained encoder-decoder model pair. 
We were unable to find other encoder-decoder pairs that generalize beyond the specific data sets upon which they were pre-trained.

\paragraph{Sensitive data.} 
We execute our DP synthetic generation pipeline on 
CIFAR-10, 
CIFAR-100~\citep{alex2009cifar}, 
and CAMELYON17~\citep{bandi2019camelyon17},
which we consider as private sensitive data.
The first two consist of natural images with 10 and 100 classes respectively,
and the CAMELYON17 is a medical dataset
for binary classification of breast cancer metastases.
We emphasize that these are standard benchmark datasets within the DP synthetic images literature~\cite{torkzadehmahani2019DPCGAN,ghalebikesabi2023diffusion,LinGKNY24}.

\paragraph{Hyperparameter tuning.} We performed hyperparameter search on the number of clusters
as well as the clipping radii for the generation algorithm
and followed the TorchVision formula for training\footnote{\url{https://pytorch.org/blog/how-to-train-state-of-the-art-models-using-torchvision-latest-primitives/}} without hyperparameter search.
Similar to other works on DP synthetic data~\citep{LinGKNY24,ghalebikesabi2023diffusion},
we do not account for hyperparameter search as part of the privacy budget.

\paragraph{Setup.}
Each training experiment is repeated for 3 runs and we report the mean accuracy and standard deviation.
Our experiments are performed using eight H100 GPUs (80GB memory each).
See \Cref{apx:experimental-details} for more setup details for our experiments.

\subsection{Private Training}\label{sec:embedding-training}
\begin{table}[h]
    \caption{Private training classification accuracies for various data sets.
    DP-SGD results taken from respective papers. $\delta=10^{-5}$ for all experiments.}
    \label{tab:embedding-training}
    \begin{center}
    \begin{small}
    \begin{sc}
    \begin{tabular}{lcccc}
    \toprule
    Dataset & $\eps$ & Ours & \makecell[c]{SOTA\\(DP)} & \makecell[c]{\textcolor{gray}{SOTA}\\\textcolor{gray}{(non-DP)}} \\
    \midrule
    cifar-10 & 8 & $97.0 \pm 0.01$ & 96.6 & \textcolor{gray}{99.5} \\
    cifar-100 & 8 & $80.5 \pm 0.177$ & 81.8 & \textcolor{gray}{96.1} \\
    camelyon17 & 10 & $93.1 \pm 0.067$ & 91.1 & \textcolor{gray}{95.7} \\
    \bottomrule
    \end{tabular}
    \end{sc}
    \end{small}
    \end{center}
    \vskip -0.1in
\end{table}

We compare the downstream classification of a simple two-layer neural network trained on private synthetic embeddings
against the classification accuracy of all other private training methods.
See \Cref{apx:embedding-classifier} for details of the two-layer neural network.

DP-finetuning~\citep{de2022unlocking} achieves the current SOTA on CIFAR-10, CIFAR-100
and DP-Diffusion~\citep{ghalebikesabi2023diffusion} achieves the current SOTA on CAMELYON17.
We also display the non-DP SOTA results,
as reported by \citet{dosovitskiy2021image,foret2021sharpness,lee2019automatic}.
For the above datasets,
we generate the same number of synthetic embeddings as the original training splits
and train a simple two-layer neural network from scratch on said embeddings.
We then test on embeddings of the \emph{original} (non-synthetic) test set.
\Cref{tab:embedding-training} shows that our method achieves an improvement on the SOTA for CIFAR-10 and CAMELYON17
at the same privacy budgets as prior works.

We emphasize that this runs contrary to conventional beliefs~\citep{LinGKNY24},
as DP synthetic data is more general-purpose than training via DP-SGD,
which is optimizing for a single task.

Our results suggest that even on private datasets with significant distribution shift from the encoder training data,
training on synthetic embeddings can yield classifiers with strong privacy-to-utility tradeoffs.

\subsection{Private Synthetic Images}\label{sec:image-classification}
Next,
we compare the downstream classification accuracy of classifiers trained on synthetic images generated by decoding embeddings
against other baseline DP synthetic image techniques on CIFAR-10.
Private Evolution~\citep{LinGKNY24} achieves the current SOTA on lower values of $\eps$
while DP-Diffusion~\citep{ghalebikesabi2023diffusion} achives the current SOTA on higher values of $\eps$.

We fine-tune a ResNet50~\citep{he2016resnet} classifier pre-trained on ImageNet~\citep{deng2009imagenet}
using 50,000 DP synthetic images
and test its accuracy on the original (non-synthetic) CIFAR-10 test set.
\Cref{fig:synthetic-images} compares the results across various levels of $\eps$.
We note that \citet{harder2023pretrained} achieve an accuracy of $51\%$ (not shown).

\begin{figure}[h]
    \centering
    \includegraphics[width=0.5\linewidth]{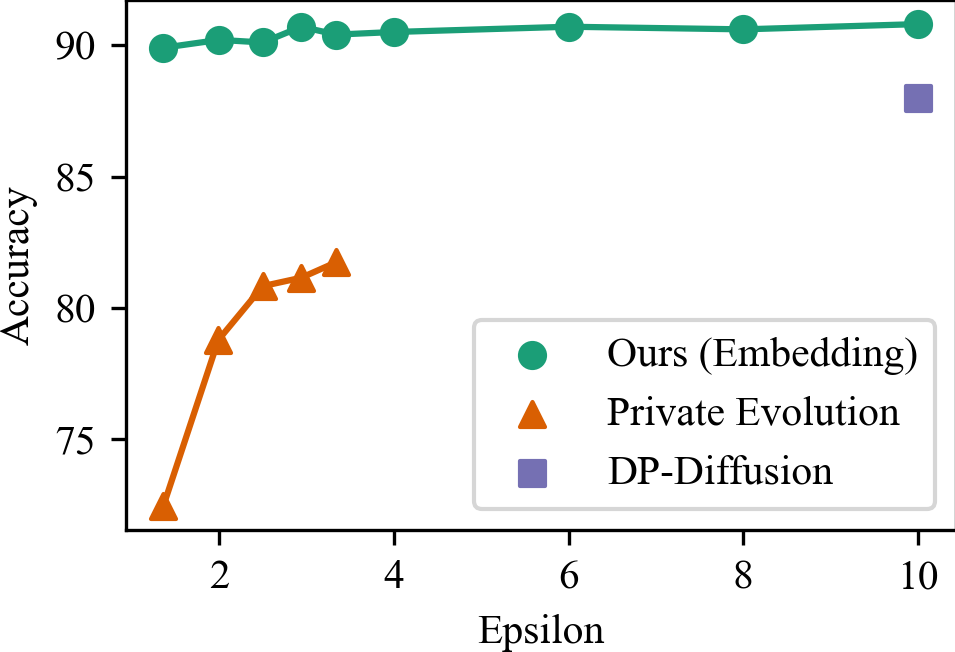}
    \caption{Downstream classification accuracy on 50,000 generated CIFAR-10 images
    at various levels of $\eps$ and $\delta=10^{-5}$.
    We report the baseline accuracies at every available level of $\eps$, exactly as stated in their respective papers.
    }
    \label{fig:synthetic-images}
\end{figure}

The above shows that when the decoder module is pre-trained on similar data to the private dataset,
our method can achieve strong utility at lower privacy budgets.

\subsection{Privacy-Utility Tradeoffs}\label{sec:epsilons-classification}
Finally,
we consider the privacy-utility tradeoffs of our methods 
by examining the performance
at varying levels of $\eps$.

\begin{table}[h]
    \caption{Downstream classification accuracies when trained on 50,000 synthetic embeddings or images
    and tested on CIFAR-10. $\delta=10^{-5}$ for all experiments.}
    \label{tab:privacy-levels}
    \begin{center}
    \begin{small}
    \begin{sc}
    \begin{tabular}{lcccr}
    \toprule
    $\eps$ & \makecell[c]{DP-SGD\\(fine-tuning)} & \makecell[c]{Ours\\(embeddings)} & \makecell[c]{Ours\\(images)} \\
    \midrule
    1 & 94.8 & $96.6 \pm 0.074$ & $89.7 \pm 0.143$ \\
    2 & 95.4 & $96.8 \pm 0.087$ & $90.2\pm0.129$ \\
    4 & 96.1 & $96.9 \pm 0.064
$ & $90.5 \pm 0.044
$ \\
    8 & 96.6 & $97.0 \pm 0.01$ & $90.6 \pm 0.054
$ \\
    \bottomrule
    \end{tabular}
    \end{sc}
    \end{small}
    \end{center}
    \vskip -0.1in
\end{table}

Our first comparison is quantitative,
and we examine the downstream classification accuracy for both synthetic embeddings and actual images for the CIFAR-10 dataset.
For reference,
we consider DP-finetuning~\citep{de2022unlocking}.
\Cref{tab:privacy-levels} shows that our classifier trained on synthetic embeddings consistently outperforms the one trained by DP-finetuning at various levels of $\eps$.

Next,
we qualitatively compare of the images generated at different privacy levels.
See \Cref{apx:cifar10-synthetic-images} for examples of synthetic CIFAR-10 images at various levels of $\eps$.
Interestingly,
while the classification accuracy does not significantly decrease as $\eps$ varies,
the variance of the generated images noticeably increases as $\eps$ decreases.
Consider for example,
the $9$-th row of \Cref{fig:cifar10-eps8-apx,fig:cifar10-eps1-apx},
which displays synthetic images of boats at $\eps=8, 1$,
respectively.
In \Cref{fig:cifar10-eps8-apx},
each of the 10 images is recognizable as a boat.
However,
in \Cref{fig:cifar10-eps1-apx},
only 3 of the 10 images resemble some form of a boat
while the others in the row are abstract shapes.
Similar occurrences can be observed for the other classes

\section{Limitations \& Future Works}
\seclab{sec:limits}

\paragraph{DP clustering.}
An important subroutine in our generation pipeline is DP clustering.
Improved implementations of DP clustering can also improve our algorithm,
further motivating research on DP clustering.

\paragraph{Decoding.}
We were unable to find other encoder-decoder pairs that generalize beyond their training data, 
which necessitated the use of CLIP embeddings if we wish to generate images.
Progress on general-purpose encoder-decoder models or exploration of domain-specific encoder-decoder pairs
will broaden the applicability of our method.

\paragraph{Filtering.} The generated images were not carefully filtered,
and we used two simple filtering heuristics that are agnostic to the sensitive data.
Using more sophisticated methods~\citep{bandi2019NIMA,ke2021MUSIQ} may yield better performance.
Furthermore,
we can use existing image enhancement methods~\citep{qi2021comprehensive} to improve the quality of generated images.

\section{Conclusion}
Our work introduces a novel principled framework for private training and data generation by clustering embeddings, demonstrating significant improvements in privacy-utility tradeoffs compared to existing approaches in the DP synthetic data literature. 
Moreover, by leveraging DP synthetic embeddings, we achieve state-of-the-art classification accuracy on CIFAR-10 and CAMELYON17, highlighting the potential of our method in real-world applications. 
Our method offers a practical solution for privately training classifiers without exposing real data, which is particularly valuable in domains where data sharing is restricted.

\section*{Acknowledgments}
This work was partially completed while Felix Zhou was a student researcher at Google Research. 
Felix Zhou acknowledges the support of the Natural Sciences and Engineering Research Council of Canada (NSERC).
Samson Zhou is supported in part by NSF CCF-2335411.

\begingroup
\sloppy
\printbibliography
\endgroup

\clearpage
\appendix

\section{CIFAR-10 Synthetic Images}\label{apx:cifar10-synthetic-images}
In this section,
we show examples of synthetic CIFAR-10 images generated at various levels of $\eps$.
We randomly chose 10 images per class from each of the synthetic and original training sets
and display them side-by-side.
As expected,
there is a noticeable decrease in the fidelity of the synthetic images as $\eps$ decreases,
due to the increase in noise injected into the system.

\begin{figure}[H]
    \centering
    \includegraphics[width=\linewidth]{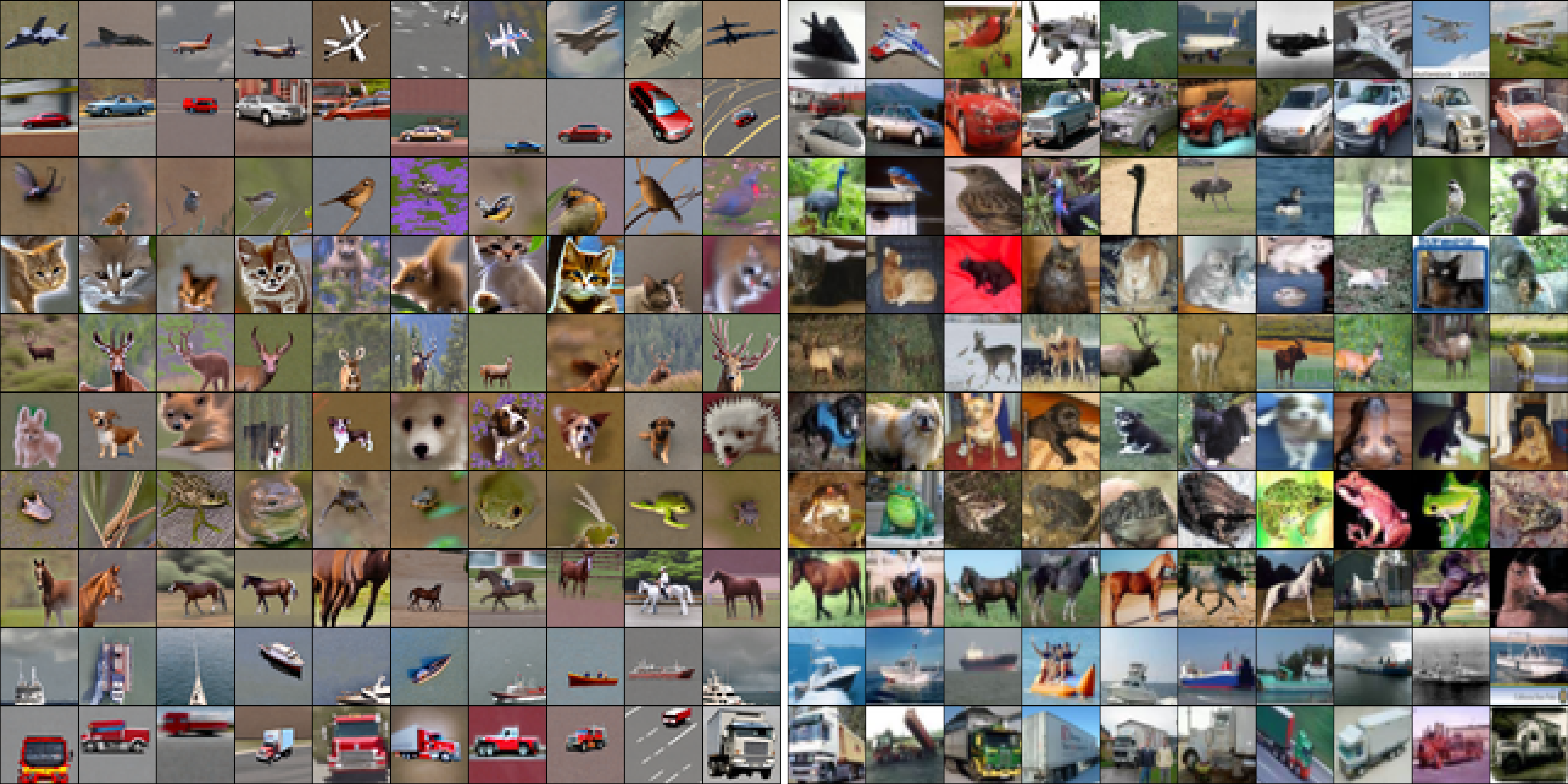}
    \caption{CIFAR-10 synthetic images at $\eps=8, \delta=10^{-5}$.
    Each row corresponds to a different class.
    The left-most columns are synthetic images while the right-most columns are original images.}
    \label{fig:cifar10-eps8-apx}
\end{figure}

\begin{figure}[H]
    \centering
    \includegraphics[width=\linewidth]{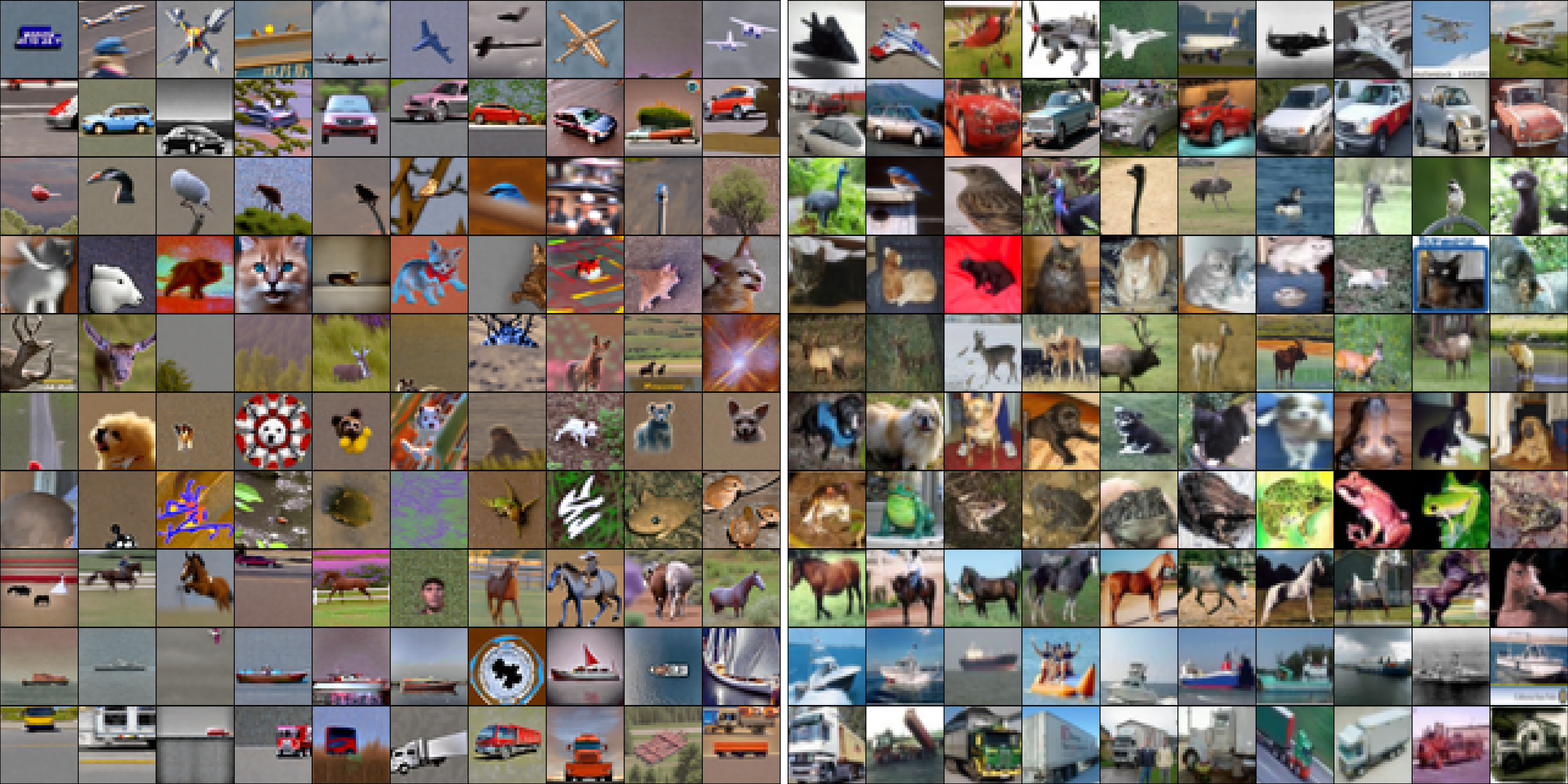}
    \caption{CIFAR-10 synthetic images at $\eps=4, \delta=10^{-5}$.
    Each row corresponds to a different class.
    The left-most columns are synthetic images while the right-most columns are original images.}
    \label{fig:cifar10-eps4-apx}
\end{figure}

\begin{figure}[H]
    \centering
    \includegraphics[width=\linewidth]{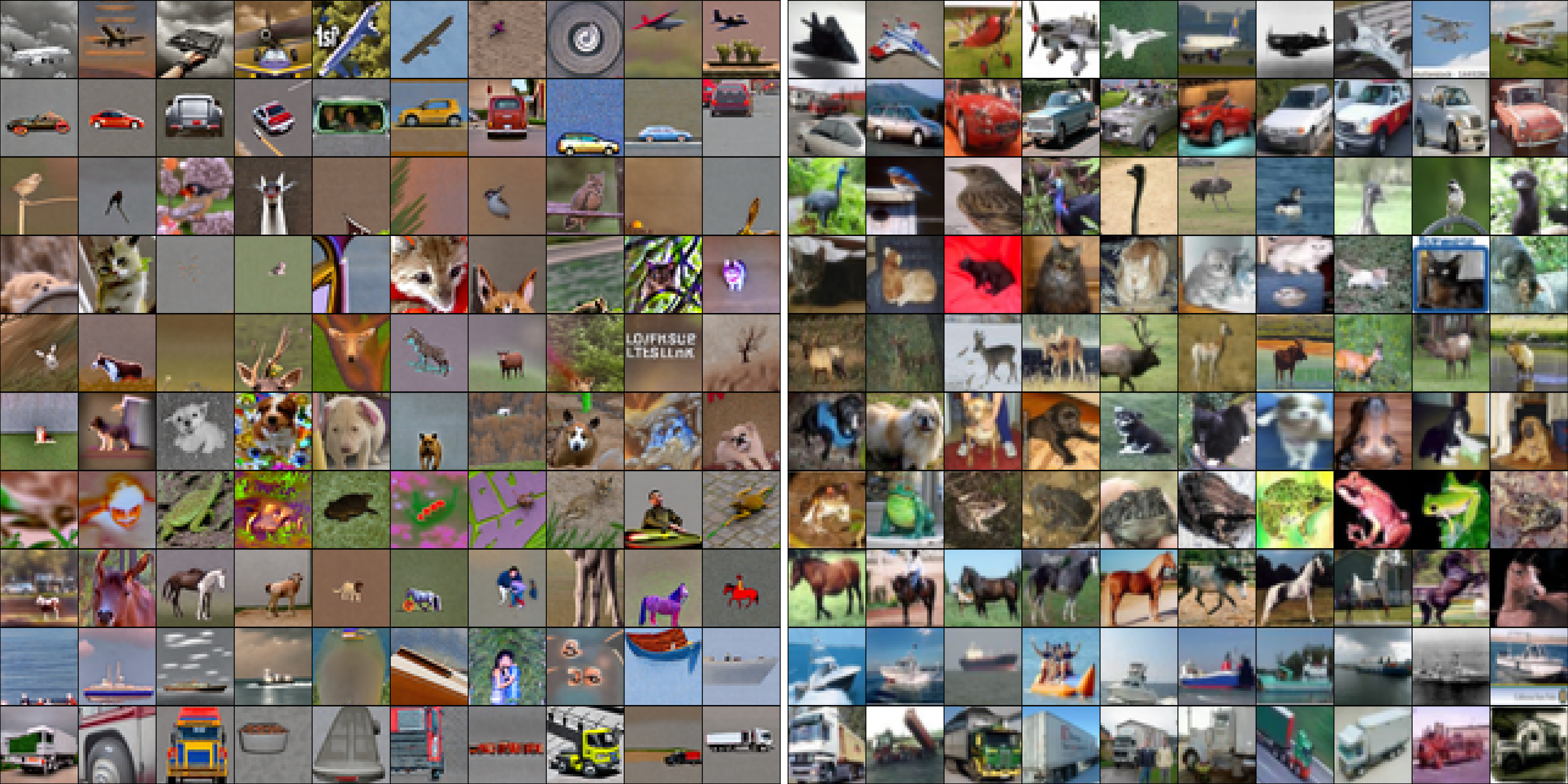}
    \caption{CIFAR-10 synthetic images at $\eps=2, \delta=10^{-5}$.
    Each row corresponds to a different class.
    The left-most columns are synthetic images while the right-most columns are original images.}
    \label{fig:cifar10-eps2-apx}
\end{figure}

\begin{figure}[H]
    \centering
    \includegraphics[width=\linewidth]{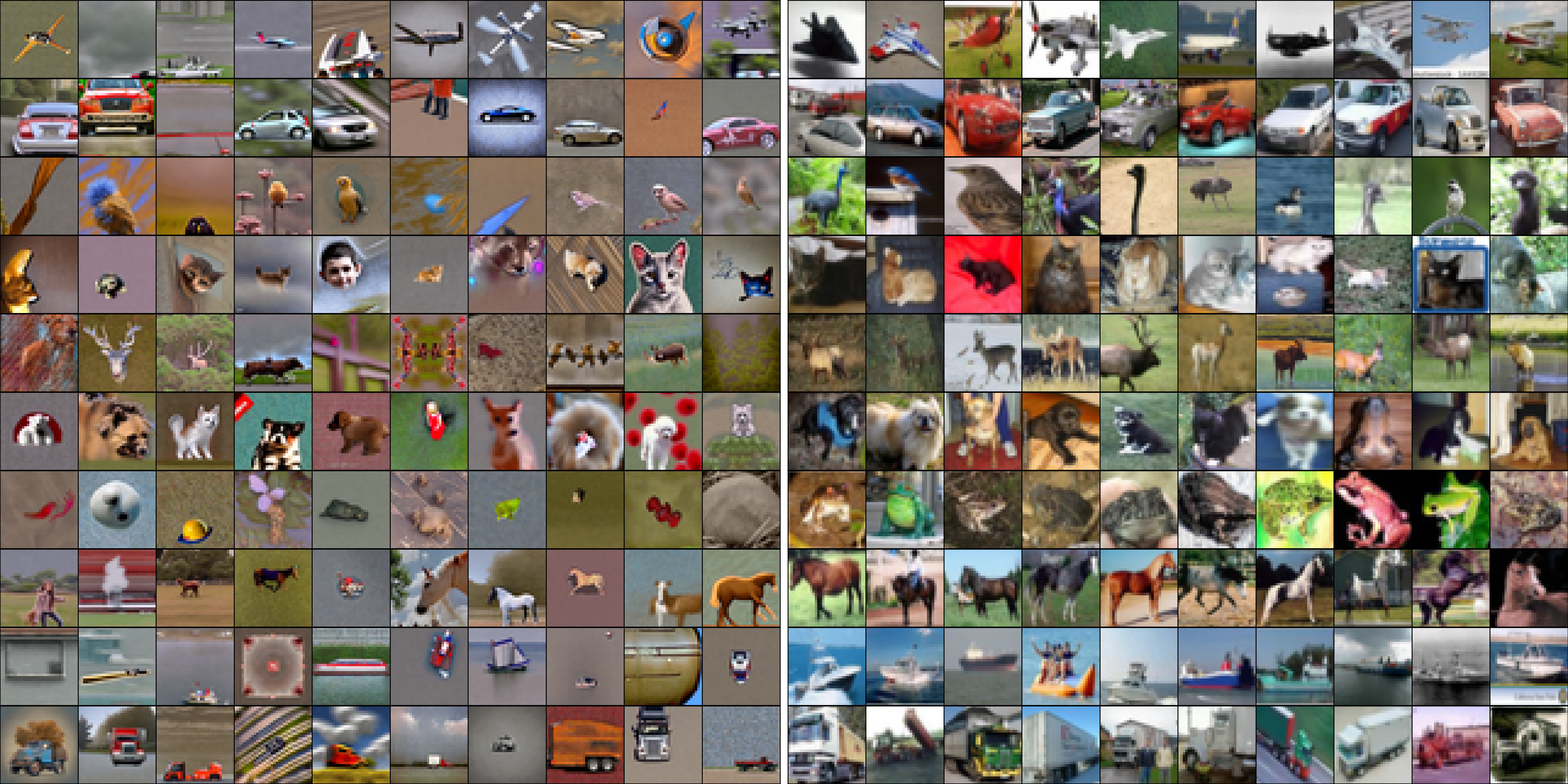}
    \caption{CIFAR-10 synthetic images at $(\eps=1, \delta=10^{-5})$.
    Each row corresponds to a different class.
    The left-most columns are synthetic images while the right-most columns are original images.}
    \label{fig:cifar10-eps1-apx}
\end{figure}

\section{Related Work}\label{apx:related-works}
The areas most related to our work are that of data selection, (non-private) synthetic data generation, 
and private fine-tuning.

\paragraph{Synthetic data generation.}
The seminal paper of \citet{GoodfellowPMXWOCB14} introduced Generative Adversarial Networks (GANs), which intuitively train the generator not to minimize the loss function for labels of individual images, but instead to fool a ``discriminator'' neural network that can tell how ``realistic'' the input seems. 
GANs have been widely used for generating synthetic data~\citep{RadfordMC15} across a wide range of applications, e.g., to privately create synthetic medical images to enhance CNN performance~\citep{Frid-AdarDKAGG18,SandfortYPS19} in healthcare, as well as for applications in data augmentation~\citep{AntoniouSE17}, facial recognition~\citep{YiLLL14a}, image super-resolution~\citep{LedigTHCCAATTWS17}, and text-to-image synthesis~\citep{ReedAYLSL16}. 

\paragraph{Private fine-tuning.}
Another relevant area is that of private fine-tuning, which has been used for tasks such as privately training large language models~\citep{YuNBGI0KLMWYZ22}. 
Private fine-tuning is the process of adapting a pre-trained machine learning model to a specific task using a sensitive dataset, while ensuring that individual data points in the dataset remain private. 
The main intuition behind private fine-tuning is to modify the model's training process with private techniques to prevent the model from compromising sensitive information, e.g., using methods such as DP-SGD~\citep{AbadiCGMMT016}, which adds noise to the gradients during training and clips them to control the influence of any single data point. 
Consequently, the resulting model retains useful task-specific knowledge while preserving differential privacy. 

Recently, \citet{LinGKNY24} observed that API-based solutions are becoming increasingly popular, in part due to the accessibility of these systems to users without ML-specific expertise. 
Thus, they view API providers as untrusted entities and proposed the Private Evolution algorithm for generating private synthetic data using black-box APIs of foundation models. 
Despite not having access to model weights and gradients, the key idea of Private Evolution is to iteratively utilize private samples to determine the most similar samples generated from the black-box model and ask the black-box models to generate more of those similar samples.

\paragraph{Data selection.}
Data selection aims to select the most ``important'' data points to label from a pool of unlabeled examples, thereby maximizing accuracy with a smaller labeled dataset, reducing the cost and effort of labeling while still achieving high model performance. 
While a universally optimal data selection strategy is not achievable~\citep{Dasgupta04}, several heuristics~\citep{Settles09,RenXCHLGCW22} have proven to be effective in practice. 
Nevertheless, these traditional data selection approaches all focus on selecting the most important data points to improve model performance. 
Indeed, data selection inherently and necessarily reveals crucial structural information about these data points that is subsequently used for generalization. 

Data selection for machine learning is a well-studied approach for improving model efficiency, robustness, and generalization, particularly in the context of deep learning and neural networks. 
Perhaps the most relevant technique to our line of study is importance sampling, where training emphasizes samples that contribute most to the learning objective, enhancing gradient efficiency and thus convergence rates~\citep{KatharopoulosF18}. 
Similarly, data pruning methods seek to remove redundant or less informative samples in the training data, while retaining critical patterns, thereby lowering computational costs without sacrificing accuracy~\citep{MolchanovTKAK17}. 
For CNNs, where spatial structure in data is crucial, hard example mining~\citep{ShrivastavaGG16} is often employed to focus training on misclassified or challenging samples, which helps the model learn more expressive features. 
Other data augmentation and selection methods, such as diversity-based selection, aim to include a wide range of spatial and contextual patterns, to reduce overfitting and improve robustness~\citep{ShortenK19}. 
On the other hand, task-specific methods, such as domain-aware selection in transfer learning, prioritize source domain samples similar to the target domain, enabling better feature transfer~\citep{PanY10}.

\section{Further Experimental Details}\label{apx:experimental-details}
\subsection{Datasets}
CIFAR10 consists of 60,000 $32\times 32$ natural images in 10 equal-sized classes.
The standard training split consists of 50,000 images.

CIFAR100 is similar and consists of 60,000 $32\times 32$ natural images in 100 equal-sized classes.
The training split consists again of 50,000 images.
This is a challenging dataset for DP synthetic data as each class has only 500 training images
and different classes can have very similar appearances.

Finally,
CAMELYON17 is a medical dataset
for classification of breast cancer metastases.
It consists of $96\times 96$ image patches of lymph node tissue from five different hospitals.
The label signifies whether at least one pixel in the center $32\times 32$ pixels has been identified as a tumor cell.
CAMELYON17 is part of the WILDS~\citep{koh2021wilds} leaderboard as a domain generalization task: 
The training split contains 302,436 images from three different hospitals whereas the test split
contain 85,054 images from a fourth and fifth hospital.

\subsection{Generation Details}\label{apx:generation-hyperparams}
We perform a gridsearch over 
the number of $k$ of GMM components,
and the intra-cluster clipping radius for covariance estimation.
We choose $k\in \set{1,2,4,8,16}$ and the clipping radius between $\set{2.0, 4.0, 6.0, 8.0, 10.0}$.

We tried various covariance models of GMMs
and found that diagonal covariances yielded the best performance,
as spherical covariance models do not capture enough of the intra-cluster data
and the noise needed to privately estimate a full covariance matrix overwhelms the signal from the data.

When pruning embeddings,
we discard all generated embeddings that do not have a noisy vote of at least $6.0$.
If the number of images per class is $m$,
we generate $6m$ synthetic embeddings and find that approximately $2m$ embeddings survive.

When decoding images,
we process embeddings in batches of $16$ per GPU and generate $2$ images per embedding.
We discard all images whose NIQE~\citep{mittal2013niqe} or PIQE~\citep{sheikh2005piqe,venkatanath2015piqe} score falls below $20.0$.
We found that approximately $m$ images per class survive this process.

\subsection{Embedding Classification Details}\label{apx:embedding-classifier}
We train a simple two-layer neural network with a $128$-dimensional hidden layer,
batch normalization,
and dropout probability $0.5$.

We train on a single GPU with a batch size of $512$ for $50$ epochs.
We use SGD with an initial learning rate of $10^{-3}$ and cosine annealing learning rate scheduler.
For further regularization,
we set label smoothing to $0.2$ and weight decay to $10^{-4}$.

Before training,
if there are more than $n$ synthetic embeddings where $n$ is the size of the input sensitive training set,
then we select a random subset of size $n$.
This is to maintain fair comparison against private training baselines that do not use synthetic data
and therefore only have access to $n$ training points.

\subsection{Image Classification Details}\label{apx:training-hyperparams}
We fine-tune the torchvision implementation of ResNet50~\citep{he2016resnet} which was pre-trained on ImageNet~\citep{he2016resnet} with pre-trained weights publicly available from torchvision~\citep{torchvision2016}.

We train on a cluster of eight H100 GPUs (80GB memory each) with a batch size of $256$ per GPU for $30$ epochs.
As mentioned,
we follow the torchvision training recipe.\footnote{\url{https://pytorch.org/blog/how-to-train-state-of-the-art-models-using-torchvision-latest-primitives/}}
The only change is the initial learning rate of $10^{-2}$
and a pre-processing step where we encode and decode the test set (without using it in the training process).
We believe this last step improves test accuracy since the decoder creates some distributional shift between the synthetic embeddings.
Encoding and decoding the test set ensures the same shift is applied to the test set.

Before training,
we again restrict the synthetic training set to the same size as the original non-synth training set for fair comparison against other private synthetic data baselines.

\section{Deferred Preliminaries}\label{apx:prelim}
\begin{definition}[Gaussian Mixture Model (GMM); see e.g.\ \cite{reynolds2009gaussian}]
\deflab{def:GMM}
    A Gaussian Mixture Model (GMM) is a parametric probability density function represented as a weighted sum of Gaussian
    component densities
    $
        \sum_{i=1}^{k} w_i\cdot \calN(\mu_i, \Sigma_i)\,
    $
    where $w\geq 0$ satisfies $\sum_i w_i = 1$
\end{definition}

We also require our loss function to be well-behaved so that small perturbations in the input space do not result in large perturbations in the label space. 
Lipschitz continuity and its generalization to H\"{o}lder continuity are standard assumptions for this purpose (see e.g., \citet{AxiotisCHJMSWW24}). 
\begin{definition}[H\"{o}lder continuity]
    We say a function $f:\calX\times\calY\to\mathbb{R}$ is $(z,\lambda)$-H\"{o}lder continuous if for all $(\bx,y),(\bx',y)\in\calX\times\calY$, $|f(\bx, y)-f(\bx', y)|\le\lambda\|\bx-\bx'\|_2^z$.
\end{definition}

\subsection{Differential Privacy}
We first recall the following preliminaries from differential privacy. 
\begin{definition}[Differential privacy; \citet{DworkMNS06}]
    \deflab{def:dp}
    Given $\eps>0$ and $\delta\in(0,1)$, a randomized algorithm \mbox{$\calA:\calX^n\to\calR$} is $(\eps,\delta)$-differentially private if, 
    for every pair of neighboring datasets $D, D'\in\calX^n$ that differ by a single entry
    and for all subsets $U\subseteq\calR$ of the output space $\calR$,
    \[\PPr{\calA(D)\in U}\le e^{\eps}\cdot\PPr{\calA(D')\in U}+\delta\,.\]
\end{definition}

\begin{definition}[Laplace distribution]
A random variable $x$ follows the \emph{Laplace distribution} with mean $\mu$ and scale parameter $b>0$, denoted as $x\sim\Lap(\mu,b)$, if its probability density function is given by $\frac{1}{2b}\exp\left(-\frac{|x-\mu|}{b}\right)$. 
We use $x\sim\Lap(b)$ to denote $x\sim\Lap(0,b)$. 
\end{definition}
A common method to ensure differential privacy involves adding Laplacian noise, with scale parameter proportional to the following notion:
\begin{definition}[$L_1$ sensitivity]
    Let $x\sim y$ denote neighboring databases that differ by a single entry.
    The $\ell_1$ sensitivity of a function $f$ is defined by 
    \[\Delta_f=\max_{x, y: x\sim y}\|f(x)-f(y)\|_1.\]
\end{definition}
Informally, the $L_1$ sensitivity of a function is the largest amount that a single entry in a database can affect $f$. 
\begin{definition}[Laplace mechanism]
Given a function $f$, an input $x$, and a privacy parameter $\eps>0$, the Laplace mechanism outputs $f(x)+\eta$, where $\eta\sim\Lap\left(\frac{\Delta_f}{\eps}\right)$. 
\end{definition}
The Laplace mechanism is a fundamental technique for establishing differential privacy:
\begin{theorem}[\citet{DworkR14}]\thmlab{thm:laplace:dp}
The Laplace mechanism preserves $(\eps,0)$-differential privacy.
\end{theorem}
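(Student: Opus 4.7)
The plan is to establish the $(\eps, 0)$-DP guarantee by directly comparing, for two neighboring databases $x\sim y$, the probability density of the output at any point $z$. Let $f$ have $L_1$ sensitivity $\Delta_f$, and consider the (vector-valued) Laplace mechanism $\calA(x) = f(x) + \eta$ where $\eta$ has independent coordinates $\eta_i \sim \Lap(\Delta_f/\eps)$. Let $p_x$ and $p_y$ denote the densities of $\calA(x)$ and $\calA(y)$ respectively. Since Laplace noise is added independently to each coordinate, the density factorizes coordinate-wise, which reduces the argument to a bound on the ratio $p_x(z)/p_y(z)$.

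First, I would write out the ratio explicitly using the Laplace density. For each coordinate $i$,
\[
\frac{\exp\bigl(-\tfrac{\eps}{\Delta_f}|z_i - f(x)_i|\bigr)}{\exp\bigl(-\tfrac{\eps}{\Delta_f}|z_i - f(y)_i|\bigr)} \;=\; \exp\!\left(\tfrac{\eps}{\Delta_f}\bigl(|z_i - f(y)_i| - |z_i - f(x)_i|\bigr)\right).
\]
Multiplying across coordinates, the log-ratio becomes $\tfrac{\eps}{\Delta_f}\sum_i \bigl(|z_i - f(y)_i| - |z_i - f(x)_i|\bigr)$. The main step is then to apply the reverse triangle inequality coordinate-wise: $|z_i - f(y)_i| - |z_i - f(x)_i| \le |f(x)_i - f(y)_i|$. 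Summing yields $\sum_i |f(x)_i - f(y)_i| = \|f(x) - f(y)\|_1 \le \Delta_f$ by definition of $L_1$ sensitivity on neighboring databases.

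Combining these, I get $p_x(z)/p_y(z) \le \exp(\eps)$ pointwise. Integrating over an arbitrary measurable set $U \subseteq \mathbb{R}^d$ then gives
\[
\PPr{\calA(x) \in U} \;=\; \int_U p_x(z)\,\mathrm{d}z \;\le\; e^{\eps}\int_U p_y(z)\,\mathrm{d}z \;=\; e^{\eps}\PPr{\calA(y) \in U},
\]
which is exactly the $(\eps, 0)$-DP condition of \defref{def:dp} (with $\delta = 0$). Swapping the roles of $x$ and $y$ handles the symmetric direction, completing the proof.

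There is no real obstacle here: the only conceptual step is the reverse triangle inequality, and everything else is routine manipulation of exponentials and the factorization of the product Laplace density. The proof proposal could optionally be sharpened by noting that the argument is entirely pointwise in $z$, so no measurability subtleties arise beyond the integrability of the Laplace density.
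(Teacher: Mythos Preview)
Your proof is correct and is exactly the standard density-ratio argument from \citet{DworkR14}. Note that the paper does not actually give its own proof of this statement: it is stated as a cited result from \citet{DworkR14} and used as a black box, so there is nothing further to compare against.
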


An important property of differential privacy is that performing computation on a privatized dataset cannot lose additional privacy:
\begin{theorem}[Post-processing of differential privacy; \citet{DworkR14}]
\thmlab{thm:dp:post}
Let $\calM$ be an $(\eps,\delta)$-differential private mechanism and $g$ be any arbitrary random mapping. 
Then $g(\calM(\cdot))$ is $(\eps,\delta)$-differentially private. 
\end{theorem}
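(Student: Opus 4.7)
The plan is to derive the post-processing guarantee directly from the defining inequality of $(\eps,\delta)$-differential privacy applied to $\calM$, by first reducing to the deterministic case and then averaging over the internal randomness of $g$. Fix any pair of neighboring datasets $D\sim D'$ and any measurable subset $U$ of the range of $g\circ\calM$. The goal is to show
\[
\PPr{g(\calM(D))\in U}\le e^\eps\cdot\PPr{g(\calM(D'))\in U}+\delta.
\]

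First I would handle the case in which $g$ is deterministic. Here the preimage $V:=g^{-1}(U)$ is a well-defined measurable subset of the output space of $\calM$, and by construction $g(\calM(D))\in U$ if and only if $\calM(D)\in V$. Applying the $(\eps,\delta)$-DP guarantee of $\calM$ to the event $V$ immediately yields the desired inequality.

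Next I would extend to the general case where $g$ is randomized. The key observation is that any randomized map can be realized as $g(y)=\tilde g(y,R)$ for a deterministic function $\tilde g$ and an auxiliary random seed $R$ drawn independently of the internal randomness of $\calM$. For every fixed realization $r$ of $R$, the map $y\mapsto \tilde g(y,r)$ is deterministic, so by the previous paragraph
\[
\PPr{\tilde g(\calM(D),r)\in U}\le e^\eps\cdot\PPr{\tilde g(\calM(D'),r)\in U}+\delta.
\]
Taking expectations over $R$ on both sides and invoking linearity of expectation, together with the independence of $R$ from $\calM$'s randomness, preserves the inequality and yields the stated post-processing bound.

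The main technical subtlety is simply ensuring that the decomposition of $g$ into deterministic maps indexed by an independent seed is measurable and that Fubini's theorem legitimately justifies the averaging step; both are routine under the standard probability-space model of randomized algorithms. No further ingredients beyond the definition of $(\eps,\delta)$-DP are required, and in particular neither the structure of $\calM$ nor any sensitivity bound on $g$ enters the argument.
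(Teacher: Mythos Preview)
Your proof is correct and is precisely the standard textbook argument for post-processing closure of $(\eps,\delta)$-DP. Note, however, that the paper does not supply its own proof of this statement: it is stated as a known result with a citation to \citet{DworkR14}, so there is nothing to compare against beyond observing that your argument matches the one found in that reference.
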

Moreover,
multiple computations on a dataset incur privacy cost in a natural manner.
\begin{theorem}[Basic composition of differential privacy; \citet{DworkR14}]
\thmlab{thm:dp:comp}
Let $\calM_1$ and $\calM_2$ be $(\eps_1,\delta_1)$ and $(\eps_2,\delta_2)$-DP mechanism, respectively. 
Then the composition $(\calM_1(\cdot),\calM_2(\cdot))$ is $(\eps_1+\eps_2,\delta_1+\delta_2)$-differentially private. 
\end{theorem}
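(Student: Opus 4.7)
The plan is to work directly with the approximate DP definition by reformulating it as a hockey-stick divergence bound and then exploiting the independence of $\calM_1$ and $\calM_2$. Fix neighboring datasets $D \sim D'$ and a measurable set $U \subseteq \calR_1 \times \calR_2$. Let $p_i, q_i$ denote the densities of $\calM_i(D)$ and $\calM_i(D')$; since both mechanisms use independent internal randomness on the same input, the composition has joint densities $p(o_1, o_2) = p_1(o_1) p_2(o_2)$ and $q(o_1, o_2) = q_1(o_1) q_2(o_2)$.

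The key reformulation is that $\calM$ is $(\eps, \delta)$-DP if and only if $\int (p_\calM - e^{\eps} q_\calM)_+ \, do \leq \delta$, where $(\cdot)_+ := \max(\cdot, 0)$. The ``only if'' direction follows by applying the DP inequality to the worst-case set $S = \{o : p_\calM(o) > e^\eps q_\calM(o)\}$, and the converse is immediate from $\Pr[\calM(D) \in S] - e^\eps \Pr[\calM(D') \in S] \leq \int (p_\calM - e^\eps q_\calM)_+$. This reduces proving $(\eps_1 + \eps_2, \delta_1 + \delta_2)$-DP of the composition to the single inequality $\int (p_1 p_2 - e^{\eps_1+\eps_2} q_1 q_2)_+ \, d(o_1, o_2) \leq \delta_1 + \delta_2$.

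To prove this integral bound, I would invoke the elementary pointwise inequality $(ab - cd)_+ \leq b(a-c)_+ + a(b-d)_+$ for nonnegative reals, which is verified by a short case analysis on the signs of $a - c$ and $b - d$. Applying it with $a = p_1(o_1)$, $b = p_2(o_2)$, $c = e^{\eps_1} q_1(o_1)$, $d = e^{\eps_2} q_2(o_2)$, then integrating via Fubini and using $\int p_i \, do_i = 1$ along with the hockey-stick characterization for each $\calM_i$, decouples the integral into $\int (p_1 - e^{\eps_1} q_1)_+ \, do_1 + \int (p_2 - e^{\eps_2} q_2)_+ \, do_2 \leq \delta_1 + \delta_2$, as desired.

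The reverse inequality (swapping the roles of $D$ and $D'$) follows from the identical argument with $p_i$ and $q_i$ interchanged, yielding $(\eps_1 + \eps_2, \delta_1 + \delta_2)$-DP of the composed mechanism. The main subtlety, which I expect to be the only real obstacle, is the choice of algebraic identity: a naive conditional argument that first applies $(\eps_2, \delta_2)$-DP pointwise in $o_1$ and then applies $(\eps_1, \delta_1)$-DP to the resulting $[0,1]$-valued function via a layer-cake integration yields only the looser bound $e^{\eps_2} \delta_1 + \delta_2$. It is precisely the choice to keep $p_2$ (rather than $e^{\eps_2} q_2$) as the multiplier of $(p_1 - e^{\eps_1} q_1)_+$ in the decomposition that eliminates the spurious $e^{\eps_2}$ factor in front of $\delta_1$ and delivers the clean additive bound.
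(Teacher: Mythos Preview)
Your proof is correct. The paper itself does not prove this statement: it is listed among the preliminaries as a known result cited from \citet{DworkR14}, with no accompanying argument. Your hockey-stick divergence approach is a clean and standard way to establish the additive $\delta_1+\delta_2$ bound, and the pointwise inequality $(ab-cd)_+\le b(a-c)_+ + a(b-d)_+$ together with Fubini does exactly the decoupling you claim. Your closing remark about why the naive conditional argument only gives $e^{\eps_2}\delta_1+\delta_2$ is also accurate and a nice diagnostic observation.
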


On the other hand,
executing a private mechanism on disjoint partitions of the same dataset
does not incur any additional privacy cost.
\begin{theorem}[Parallel composition of differential privacy; \citet{mcsherry2009parallelcomposition}]\thmlab{thm:dp:parallel-comp}
    Let $\calM$ be an $(\eps, \delta)$-DP mechanism
    and $D_1, \dots, D_k$ $k$-disjoint subsets of the dataset $D$.
    Then the mechanism that outputs $(\calM(D_1), \dots, \calM(D_k))$ is $(\eps, \delta)$-DP.
\end{theorem}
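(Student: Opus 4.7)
The plan is to reduce to the standard one-partition DP guarantee by exploiting that a single-entry change in $D$ touches exactly one of the disjoint pieces $D_1,\dots,D_k$. First I would fix arbitrary neighboring datasets $D\sim D'$ and let $D_1,\dots,D_k$ and $D_1',\dots,D_k'$ be the induced partitions. Because the partition is a deterministic function of the dataset (say, by a fixed indexing rule) and the two datasets differ in exactly one entry, there is a unique index $i^\star$ with $D_{i^\star}\sim D_{i^\star}'$ (they are neighboring subsets), while $D_j=D_j'$ for every $j\neq i^\star$. I would also invoke that the $k$ invocations $\calM(D_1),\dots,\calM(D_k)$ use independent randomness, which is the standard reading of the ``mechanism that outputs $(\calM(D_1),\dots,\calM(D_k))$.''

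Next I would unpack the definition of $(\eps,\delta)$-DP for the product mechanism. Given an arbitrary measurable event $U\subseteq\calR^k$, I would write $\PPr{(\calM(D_1),\dots,\calM(D_k))\in U}$ as an iterated integral over the independent outputs $r_j=\calM(D_j)$ for $j\neq i^\star$, leaving an inner probability of the form $\PPr{\calM(D_{i^\star})\in U_{r_{-i^\star}}}$, where $U_{r_{-i^\star}}$ is the section of $U$ at the fixed values $r_{-i^\star}$. Since $D_j=D_j'$ for $j\neq i^\star$, the outer product measure is identical under $D$ and $D'$, so the only place where the two datasets disagree is inside the inner probability.

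Then I would apply the $(\eps,\delta)$-DP guarantee of $\calM$ to the neighboring pair $D_{i^\star}\sim D_{i^\star}'$ on the section $U_{r_{-i^\star}}$, obtaining
\[
\PPr{\calM(D_{i^\star})\in U_{r_{-i^\star}}}\le e^{\eps}\PPr{\calM(D_{i^\star}')\in U_{r_{-i^\star}}}+\delta.
\]
Substituting this bound pointwise under the outer integral and using linearity gives
\[
\PPr{(\calM(D_1),\dots,\calM(D_k))\in U}\le e^{\eps}\PPr{(\calM(D_1'),\dots,\calM(D_k'))\in U}+\delta,
\]
which is exactly the $(\eps,\delta)$-DP condition for the product mechanism.

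There is no real obstacle here beyond carefully justifying that only one partition changes and that the remaining coordinates' distributions are identical; the only subtlety is ensuring the partition is a fixed (data-independent) function so that ``the $j$-th output'' is well-defined across $D$ and $D'$, which is standard in the parallel composition setting.
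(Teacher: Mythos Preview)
Your argument is correct and is the standard proof of parallel composition: identify the unique block $i^\star$ affected by the single changed entry, factor the product probability using independence of the $k$ invocations, apply the $(\eps,\delta)$-DP bound once on the $i^\star$-th coordinate, and integrate out the unchanged coordinates. The caveat you raise about the partition being a fixed, data-independent rule is exactly the right one.

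However, there is nothing to compare against: the paper does not prove this theorem. It is stated in the preliminaries as a cited result from \citet{mcsherry2009parallelcomposition} and used as a black box (for instance, to argue that running the pipeline separately on each class incurs no extra privacy loss). So your proposal is fine as a self-contained justification, but it goes beyond what the paper itself supplies.
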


There are more sophisticated composition results, but \thmref{thm:dp:comp} and \thmref{thm:dp:parallel-comp} suffice for our purposes. 

\subsection{Concentration Inequalities}
\begin{theorem}[\citet{hanson1971bound}]\label{thm:gaussian-mean-concentration}
    Suppose $Y_i\sim_{i.i.d.} \calN(\mu, \Sigma)$ for $i\in [N]$.
    Then with probability $1-\beta$,
    \[
        \norm{\frac1N \sum_i Y_i - \mu}_2
        \leq \sqrt{\frac{\Tr(\Sigma)}{N}} + \sqrt{\frac{2\norm{\Sigma}_2\log(\nicefrac1\beta)}{N}}\,.
    \]
\end{theorem}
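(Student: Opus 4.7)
The plan is to apply Gaussian concentration for Lipschitz functions (the Borell--TIS inequality). First, I would observe that since the $Y_i \sim \calN(\mu, \Sigma)$ are i.i.d., the empirical mean $\bar Y := \frac{1}{N}\sum_i Y_i$ is itself Gaussian with $\bar Y - \mu \sim \calN(0, \Sigma/N)$ by closure of Gaussians under independent sums and linear transformations. Writing $Z := \bar Y - \mu$, it then suffices to bound $\norm{Z}_2$ with high probability.

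Next, I would bound the expectation. Since $\Ex{ZZ^\top} = \Sigma/N$, linearity of trace gives
\[
\Ex{\norm{Z}_2^2} = \Tr\left(\Ex{ZZ^\top}\right) = \Tr(\Sigma)/N,
\]
and so by Jensen's inequality $\Ex{\norm{Z}_2} \le \sqrt{\Tr(\Sigma)/N}$, which supplies the first summand in the claimed bound.

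For the deviation from the mean, I would reparametrize $Z = (\Sigma/N)^{1/2}\xi$ for a standard Gaussian $\xi \sim \calN(0, I_d)$ and consider the map $f(\xi) := \norm{(\Sigma/N)^{1/2}\xi}_2$. By the triangle inequality together with the operator-norm bound $\norm{Au}_2 \le \norm{A}_2 \norm{u}_2$, the function $f$ is Lipschitz in $\xi$ with constant $L = \norm{(\Sigma/N)^{1/2}}_2 = \sqrt{\norm{\Sigma}_2/N}$. The Borell--TIS inequality then yields
\[
\PPr{f(\xi) - \Ex{f(\xi)} > t} \le \exp\!\left(-\frac{Nt^2}{2\norm{\Sigma}_2}\right).
\]
Setting the right-hand side equal to $\beta$ and solving gives $t = \sqrt{2\norm{\Sigma}_2 \log(1/\beta)/N}$. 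Combining with the expectation bound yields the claim.

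The main obstacle, such as it is, is essentially bookkeeping: correctly identifying the Lipschitz constant of the norm map after the reparametrization and invoking Borell--TIS cleanly. All remaining steps are routine properties of Gaussian distributions and elementary scalar manipulations.
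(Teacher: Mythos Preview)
Your argument is correct and is the standard textbook proof of this bound via Gaussian Lipschitz concentration (Borell--TIS). Note, however, that the paper does not supply its own proof of this statement: it is listed in the preliminaries as a cited concentration inequality (attributed to \citet{hanson1971bound}) and used as a black box, so there is no paper proof to compare against. Your reduction to a single Gaussian vector $Z\sim\calN(0,\Sigma/N)$, the Jensen bound $\Ex{\norm{Z}_2}\le\sqrt{\Tr(\Sigma)/N}$, and the Lipschitz-constant computation $L=\sqrt{\norm{\Sigma}_2/N}$ for $\xi\mapsto\norm{(\Sigma/N)^{1/2}\xi}_2$ are all sound, and plugging into Borell--TIS yields exactly the claimed tail.
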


\begin{theorem}[Remark 5.40 in \citet{vershynin2010introduction}]\label{thm:gaussian-cov-concentration}
    Suppose $Y_i\sim_{i.i.d.} \calN(\mu, \Sigma)$ for $i\in [N]$ and let $\bar{Y}=\frac{1}{N}\sum_{i\in[N]}Y_i$. 
    Then with probability $1-\beta$,
    \[
        \norm{\frac1N \sum_i (Y_i - \bar Y)(Y_i - \bar Y)^\top - \Sigma}_2
        \leq O\left( \sqrt{\frac{d}{N}} + \sqrt{\frac{\log(\nicefrac1\beta)}{N}} \right) \norm{\Sigma}_2\,.
    \]
\end{theorem}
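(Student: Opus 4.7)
The plan is to reduce to the isotropic case by whitening and then bound the deviation of the sample covariance from the identity via an $\eps$-net and Bernstein-type concentration for chi-squared variables. Write $Y_i = \mu + \Sigma^{1/2} Z_i$ where $Z_i \sim_{i.i.d.} \calN(0, I_d)$. Then $Y_i - \bar Y = \Sigma^{1/2}(Z_i - \bar Z)$, so the sample covariance satisfies
\[
    \frac1N \sum_i (Y_i - \bar Y)(Y_i - \bar Y)^\top - \Sigma
    = \Sigma^{1/2}\left( \frac1N \sum_i (Z_i - \bar Z)(Z_i - \bar Z)^\top - I_d \right) \Sigma^{1/2}.
\]
Taking operator norms and using submultiplicativity, the task reduces to showing that the isotropic centered sample covariance $\hat S := \frac1N \sum_i (Z_i - \bar Z)(Z_i - \bar Z)^\top$ satisfies $\|\hat S - I_d\|_2 = O(\sqrt{d/N} + \sqrt{\log(1/\beta)/N})$ with probability $1 - \beta$, since then $\|(\cdot)\|_2 \le \|\Sigma\|_2 \cdot \|\hat S - I_d\|_2$.

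\textbf{Handling the centering.} First I would split $\hat S$ as $\hat U - \bar Z \bar Z^\top$ where $\hat U := \frac1N \sum_i Z_i Z_i^\top$ is the uncentered sample covariance. By \Cref{thm:gaussian-mean-concentration} applied with $\mu = 0$, $\Sigma = I_d$, we get $\|\bar Z\|_2 \le \sqrt{d/N} + \sqrt{2\log(1/\beta)/N}$ with probability $1 - \beta/2$, hence $\|\bar Z \bar Z^\top\|_2 = \|\bar Z\|_2^2 = O(d/N + \log(1/\beta)/N)$. This lower-order term is absorbed into the target bound, so it suffices to prove $\|\hat U - I_d\|_2 = O(\sqrt{d/N} + \sqrt{\log(1/\beta)/N})$ with probability $1 - \beta/2$.

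\textbf{Net argument for the uncentered covariance.} Using the variational identity $\|\hat U - I_d\|_2 = \sup_{u \in S^{d-1}} |u^\top(\hat U - I_d) u| = \sup_{u \in S^{d-1}} \bigl|\frac1N \sum_i (u^\top Z_i)^2 - 1\bigr|$, I would take a $\frac14$-net $\calN$ of $S^{d-1}$ of cardinality $|\calN| \le 9^d$ and use the standard bound $\|M\|_2 \le 2 \sup_{u \in \calN} |u^\top M u|$ for symmetric $M$. For a fixed $u \in S^{d-1}$, the variables $(u^\top Z_i)^2$ are i.i.d.\ $\chi^2_1$, which are sub-exponential with parameter $O(1)$. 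Bernstein's inequality for sub-exponential random variables gives
\[
    \PPr{\left|\frac1N \sum_i (u^\top Z_i)^2 - 1\right| \ge t} \le 2\exp(-c N \min(t^2, t))
\]
for an absolute constant $c > 0$. Choosing $t = C(\sqrt{d/N} + \sqrt{\log(1/\beta)/N})$ for large enough $C$, the right-hand side is at most $\beta \cdot 9^{-d} / 4$, so a union bound over the $9^d$ net points controls the supremum with probability $1 - \beta/2$.

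\textbf{Main obstacle.} The main technical hurdle is calibrating the Bernstein tail for $\chi^2_1$ variables against the $9^d$ covering cost so that both $\sqrt{d/N}$ (which covers the $\exp(-cNt^2)$ regime) and the deviation $\sqrt{\log(1/\beta)/N}$ come out correctly after the union bound; the quadratic-versus-linear regime split in the sub-exponential Bernstein inequality needs to be handled so that the $\log|\calN| = O(d)$ cost produces exactly the $\sqrt{d/N}$ term (rather than $d/N$). Once those rates are in hand, combining the net bound, the centering correction, and the whitening yields the claimed operator-norm estimate after a final union bound, rescaling $\beta$ by a constant factor.
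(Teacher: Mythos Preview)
The paper does not prove this statement; it is quoted verbatim as Remark~5.40 from \citet{vershynin2010introduction} and used as a black-box concentration tool. Your whitening-then-net-then-Bernstein argument is precisely the standard proof in Vershynin (Theorem~5.39/Corollary~5.50 there), so there is nothing to compare.

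One small precision point: the obstacle you flag is real, and in fact the clean resolution in Vershynin is that the bound picks up an additional $\delta^2$ term, i.e.\ $\|\hat U - I_d\|_2 \le \max(\delta,\delta^2)$ with $\delta = C(\sqrt{d/N}+\sqrt{\log(1/\beta)/N})$. When $t>1$ the sub-exponential Bernstein tail is linear and the $\sqrt{d/N}$ choice does not beat the $9^d$ union bound on its own; you need $t \asymp d/N + \log(1/\beta)/N$ there. The paper's stated form (only the $\delta$ term) implicitly assumes the informative regime $N\gtrsim d+\log(1/\beta)$, under which your argument goes through exactly as written.
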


\begin{theorem}[Chernoff Bound; Corollary 4.6 in \citet{mitzenmacher2005probability}]\label{thm:chernoff bound}
Let $X_1, X_2, \ldots, X_n\in\{0,1\}$ be independent Bernoulli random variables and let $X = \sum_{i=1}^n X_i$. 
Then for any $\gamma\in(0,1)$,
\[
\PPr{|X -\Ex{X}| \ge \gamma\cdot\Ex{X}}\le 2 \exp\left( -\frac{\gamma^2\cdot\Ex{X}}{3} \right).
\]
\end{theorem}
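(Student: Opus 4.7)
The plan is to prove the two-sided Chernoff bound by separately bounding the upper tail $\Pr[X \geq (1+\gamma)\mu]$ and the lower tail $\Pr[X \leq (1-\gamma)\mu]$ (where $\mu = \Ex{X}$) and then combining them via a union bound to produce the factor of $2$ on the right-hand side. Each tail will be controlled via the standard exponential moment (Chernoff) method: for a parameter $t > 0$ to be chosen, apply Markov's inequality to $e^{tX}$ or $e^{-tX}$.

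For the upper tail, I would first write $\Pr[X \geq (1+\gamma)\mu] \leq e^{-t(1+\gamma)\mu}\cdot \Ex{e^{tX}}$ for any $t > 0$. Independence of the $X_i$ factorizes the moment generating function as $\Ex{e^{tX}} = \prod_{i=1}^n \Ex{e^{tX_i}}$, and since each $X_i$ is Bernoulli with parameter $p_i$, I would use the elementary bound
\[
    \Ex{e^{tX_i}} = 1 + p_i(e^t - 1) \leq \exp\bigl(p_i(e^t-1)\bigr)
\]
from $1+x \leq e^x$. Multiplying and using $\sum_i p_i = \mu$ gives $\Ex{e^{tX}} \leq \exp(\mu(e^t-1))$. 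Optimizing over $t$ by choosing $t = \ln(1+\gamma)$ yields the classical bound $\Pr[X \geq (1+\gamma)\mu] \leq \bigl(e^\gamma / (1+\gamma)^{1+\gamma}\bigr)^\mu$. The main analytic step is then the scalar inequality $\frac{e^\gamma}{(1+\gamma)^{1+\gamma}} \leq e^{-\gamma^2/3}$ for $\gamma \in (0,1)$, which I would verify by taking logarithms and showing that $g(\gamma) := \gamma - (1+\gamma)\ln(1+\gamma) + \gamma^2/3 \leq 0$ on $(0,1)$ via $g(0) = 0$ and a derivative check.

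The lower tail is symmetric in spirit: apply Markov to $e^{-tX}$ with $t > 0$, use the same factorization and $1+p_i(e^{-t}-1) \leq \exp(p_i(e^{-t}-1))$ bound, and optimize by choosing $t = -\ln(1-\gamma)$ to obtain $\Pr[X \leq (1-\gamma)\mu] \leq \bigl(e^{-\gamma}/(1-\gamma)^{1-\gamma}\bigr)^\mu$. Here I would use the corresponding (slightly tighter) scalar inequality $\frac{e^{-\gamma}}{(1-\gamma)^{1-\gamma}} \leq e^{-\gamma^2/2} \leq e^{-\gamma^2/3}$ on $(0,1)$, again by checking the logarithm and its derivative.

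The only genuine obstacle is the scalar inequality for the upper tail, since the $1/3$ in the exponent is what forces us to use the specific form $g(\gamma) \leq 0$ on $(0,1)$ rather than a Taylor expansion valid only near $0$; the lower-tail inequality is easier because the Taylor expansion of $-\ln(1-\gamma)$ makes the analogous function nonpositive with room to spare. Once both one-sided bounds of $\exp(-\gamma^2 \mu / 3)$ are in hand, a union bound over $\{X \geq (1+\gamma)\mu\}$ and $\{X \leq (1-\gamma)\mu\}$ produces the stated two-sided inequality with the factor of $2$.
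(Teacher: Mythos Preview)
Your proposal is correct and is precisely the standard textbook argument; in fact it is essentially the proof given in the cited reference (Mitzenmacher--Upfal, Corollary~4.6). The paper itself does not prove this statement at all---it is quoted as a known concentration inequality and used as a black box---so there is nothing to compare against beyond noting that your derivation matches the source the paper cites.
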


\begin{theorem}[\citet{dvoretzky1956asymptotic,massart1990tight}]\label{thm:dkw-inequality}
    Let $F(x)$ denote the CDF function for an arbitrary distribution.
    For $x_1, \dots, x_n\sim_{i.i.d.} F$,
    write $F_n(x) = \frac1n\sum_{i=1}^n \ones\set{x=x_i}$ to denote
    the $n$-sample empirical CDF function $F_n$.
    It holds that
    \[
        \Pr\left[ \sup_x \abs{F_n(x) - F(x)} > \gamma \right]
        \leq 2\exp(-2n\gamma^2)\,.
    \]
\end{theorem}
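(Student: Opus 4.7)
The plan is to use the classical reduction to uniform random variables followed by a martingale/reflection argument to obtain the sharp constant $2$. First, I observe that interpreting $F_n$ as the usual empirical CDF $F_n(x) = \frac{1}{n}\sum_i \mathds{1}\{x_i \leq x\}$, the probability integral transform gives $F(X_i) \sim \mathrm{Uniform}[0,1]$ whenever $F$ is continuous, and the random variable $\sup_x |F_n(x) - F(x)|$ is distribution-free. Thus we may assume the samples are uniform on $[0,1]$ and $F(x) = x$; a standard generalized-inverse argument extends to arbitrary $F$, using the fact that the supremum is always attained at a jump point of $F_n$.

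Next, by a union bound it suffices to prove the one-sided inequality
\[
\Pr\!\left[\sup_x \bigl(F_n(x) - F(x)\bigr) > \gamma\right] \leq \exp(-2n\gamma^2),
\]
together with the symmetric bound for $\sup_x(F(x) - F_n(x))$, which is where the factor of $2$ enters. Pointwise, for each fixed $x$, $n F_n(x) \sim \mathrm{Binomial}(n, F(x))$, and Hoeffding's inequality immediately yields $\Pr[F_n(x) - F(x) > \gamma] \leq \exp(-2n\gamma^2)$. The substantive task is to upgrade this pointwise bound to a statement that holds uniformly in $x$ while preserving the exponent.

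To achieve the sharp constant, I would follow Massart's approach: condition on the order statistics $U_{(1)} \leq \dots \leq U_{(n)}$ of the uniform sample and control the extreme values of the empirical process via an exponential martingale, or equivalently via a reflection-principle computation on the associated discrete random walk indexed by the uniform order statistics. The main obstacle, and the reason the result is genuinely deep, lies precisely in this pointwise-to-uniform upgrade with no loss of constant: a naive $\varepsilon$-net discretization or a generic chaining argument delivers the correct exponential rate $\exp(-c n \gamma^2)$ but with a strictly worse constant $c < 2$. For the downstream uses of this theorem in the paper, where the inequality is invoked to compare empirical and true CDFs of one-dimensional projections arising in the analysis, any absolute constant in the exponent would suffice, so the Massart refinement is primarily for stating the tight form.
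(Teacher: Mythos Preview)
The paper does not prove this theorem; it is stated in the preliminaries as a classical result and simply cited to \citet{dvoretzky1956asymptotic,massart1990tight}. Your sketch is a faithful outline of the standard route to the sharp DKW inequality: reduce to the uniform case via the probability integral transform (and the generalized-inverse argument for discontinuous $F$), split into two one-sided bounds by a union bound, and then invoke Massart's refinement to get the constant $2$ in front. You are also right that for the paper's applications (specifically \Cref{lem:components-concentration}, where the inequality is used on a discrete one-dimensional distribution to control $\sup_i |N_i/N - w_i|$), only the rate $\exp(-cn\gamma^2)$ for some absolute $c>0$ matters, so the original DKW bound without Massart's constant would already suffice. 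Finally, you correctly caught and silently fixed the typo in the statement: the empirical CDF should be $F_n(x) = \frac{1}{n}\sum_i \ones\{x_i \le x\}$ rather than $\ones\{x = x_i\}$.
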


\section{Private Gaussian Estimation}
The first sample-optimal DP Gaussian mean/covariance estimation algorithms were due to \citet{aden2021unbounded}.
However,
their algorithms require exponential running time.
Recent transformations from robust algorithms to private algorithms 
obtained the same optimal sample complexities for mean estimation~\citep{hopkins2022efficient}
and covariance estimation~\citep{hopkins2023robustness}
in polynomial time.
We state their results below.
See \citet[Table 1, Table 2]{hopkins2023robustness} for a detailed summary of prior algorithmic results.
The sample complexities are tight up to logarithmic factors~\citep{karwa2018finite,kamath2022lower,narayanan2024better,portella2024lower}.

\begin{theorem}[Theorem 1.4 in \citet{hopkins2023robustness}]\label{thm:dp-gaussian-estimation}
    Let $\eps, \delta, \alpha, \beta\in (0, 1)$.
    Suppose we are provided sample access to a Gaussian distribution $\calN(\mu, \Sigma)$ with unknown parameters.
    There is an $(\eps, \delta)$-DP mean estimation algorithm
    which outputs estimates $\hat\mu, \hat\Sigma$ such that $\norm{\hat\mu-\mu}_2, \norm{\hat\Sigma-\Sigma}_F\leq \alpha$ with probability $1-\beta$.
    Moreover,
    the algorithm has sample complexity
    \[
        n = \tilde O\left( \frac{d^2 + \log^2(\nicefrac1\beta)}{\alpha^2} + \frac{d^2 + \log(\nicefrac1\beta)}{\alpha\eps} + \frac{\log(\nicefrac1\delta)}\eps \right)
    \]
    and $\poly(n)$ running time.
\end{theorem}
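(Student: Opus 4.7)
The plan is to establish this through a general \emph{robustness-implies-privacy} reduction: first construct a computationally efficient robust Gaussian estimator that tolerates a small fraction of adversarial corruptions, then convert it into a differentially private estimator at essentially the same sample complexity. This sidesteps the need to design a bespoke privacy mechanism for each parameter and lets us reuse the extensive literature on sum-of-squares (SoS) robust Gaussian estimation.

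\textbf{Key steps.} First, I would invoke an existing SoS-based polynomial-time robust estimator for $(\mu, \Sigma)$ that, given an $\eta$-corrupted sample from $\calN(\mu,\Sigma)$, returns $(\hat\mu,\hat\Sigma)$ with $\|\hat\mu-\mu\|_2, \|\hat\Sigma-\Sigma\|_F \leq O(\eta \sqrt{\log(1/\eta)})$ using $\tilde O((d^2+\log^2(1/\beta))/\alpha^2)$ samples. Choose $\eta = \Theta(\alpha/\sqrt{\log(1/\alpha)})$. Second, define an exponential mechanism over a cover $\calC$ of the parameter space: for each candidate $(\nu,\Pi)\in\calC$ the score is the \emph{largest} $t$ such that, after removing some $t$ samples from the input, an SoS identifiability proof certifies $(\nu,\Pi)$ as a valid output of the robust estimator. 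Third, sample an output from $\calC$ via the exponential mechanism with privacy parameter $\eps$ applied to this score. Privacy follows because adding or removing a single data point changes every score by at most one, so the standard exponential-mechanism analysis gives $(\eps,0)$-DP (a Gaussian or approximate-DP variant yields the $\log(1/\delta)/\eps$ term). Utility follows because the true $(\mu,\Sigma)$—or the nearest cover point—receives a near-maximal score by the robustness guarantee, while any cover point farther than $\alpha$ from $(\mu,\Sigma)$ can only be certified after removing $\Omega(\eta n)$ samples, producing a large score gap.

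\textbf{Handling unbounded parameters and main obstacle.} Because $\mu$ and $\Sigma$ are a priori unbounded, I would first run a coarse private preprocessing stage—e.g., a Gaussian-noise-masked histogram to privately locate an ellipsoidal region containing $(\mu,\Sigma)$ up to constant multiplicative error—then restrict the cover $\calC$ to this localized region. The privacy cost of this step is folded into the $\log(1/\delta)/\eps$ term, and the accuracy cost is absorbed since the subsequent exponential mechanism only pays logarithmically in $|\calC|$. The main obstacle I foresee is the sensitivity and stability analysis of the covariance score: one must argue that the set of $\hat\Sigma$ certifiable after removing $t$ samples is stable under single-sample perturbations in the Frobenius norm, and simultaneously construct a cover of ellipsoids whose cardinality is only $\exp(\tilde O(d^2))$ so that the $\log |\calC|/(\eps\alpha)$ term of the exponential mechanism matches the claimed $(d^2+\log(1/\beta))/(\eps\alpha)$ bound. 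Getting these constants tight—so that the final sample complexity matches the stated $\tilde O(\cdot)$ expression up to polylogarithmic factors—is the most delicate part and is where the bulk of the work in \cite{hopkins2023robustness} is concentrated.
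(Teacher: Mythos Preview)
This theorem is not proved in the paper; it is quoted verbatim as Theorem~1.4 of \cite{hopkins2023robustness} and used as a black box in the analysis of \Cref{alg:dp-synthetic-generation}. So there is no ``paper's own proof'' to compare your proposal against.

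That said, your sketch is a reasonable high-level summary of the robustness-to-privacy framework underlying the cited result: an SoS-certifiable robust Gaussian estimator, a deletion-based score function for the exponential mechanism, and a coarse private localization step to handle unbounded parameters. Since the present paper only \emph{invokes} this theorem, you need not reproduce its proof here; a one-line citation suffices.
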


For the optimal sample complexity guarantees,
we use \Cref{thm:dp-gaussian-estimation} to instantiate our \DPMean and \DPCovariance subroutines.
However,
we note that any $(\eps, \delta)$-DP Gaussian estimation algorithm suffices.
Indeed,
by augmenting the assumptions with some (weak) priors about the mean and covariances,
there are near-linear time estimators achieving nearly the same sample complexities.
We state one example of such an estimator below.
\begin{theorem}[Theorems 3.1, 3.3 in \citet{biswas2020coinpress}]\label{thm:dp-gaussian-estimation-fast}
    Let $\eps, \delta, \alpha, \beta\in (0, 1)$.
    Suppose we are provided sample access to a Gaussian distribution $\calN(\mu, \Sigma)$ with unknown parameters but are provided bounds $R, \lambda, \Lambda > 0$ such that $\norm{\mu}_2\leq R$ and $\lambda I\preceq \Sigma\preceq \Lambda I$. 
    There is an $(\eps, \delta)$-DP mean estimation algorithm that outputs estimates $\hat\mu, \hat\Sigma$ such that $\norm{\hat\mu-\mu}_2, \norm{\hat\Sigma-\Sigma}_F\leq \alpha$ with probability $1-\beta$. 
    Moreover, we have:
    \begin{enumerate}[(i)]
        \item For a general covariance matrix $\Sigma$,
            the algorithm has sample complexity
        \[
            n = \tilde O\left( \left( \frac{d^2}{\alpha^2} + \frac{d^2\sqrt{\log(\nicefrac1\delta)}}{\alpha\eps} + \frac{d^{1.5}\sqrt{\log(\nicefrac1\delta) \log(R + \nicefrac\Lambda\lambda)}}\eps \right) \log(\nicefrac1\beta) \right)
        \]
        and $\tilde O(nd^2)$ running time.
        \item For a diagonal covariance matrix $\Sigma$,
            the algorithm has sample complexity
        \[
            n = \tilde O\left( \left( \frac{d}{\alpha^2} + \frac{d\sqrt{\log(\nicefrac1\delta)}}{\alpha\eps} + \frac{d\sqrt{\log(\nicefrac1\delta) \log(R + \nicefrac\Lambda\lambda)}}\eps \right) \log(\nicefrac1\beta) \right)
        \]
        and $\tilde O(nd)$ running time.
    \end{enumerate}
\end{theorem}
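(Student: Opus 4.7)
\medskip
\noindent\textbf{Proof proposal.}
Since the cited statement is two results in one (mean and covariance estimation, general and diagonal), my plan is to handle mean estimation first and then bootstrap it for covariance. Throughout, the main technical tool is an iterative clip-and-noise refinement: each round shrinks the radius to which the samples are clipped, which in turn shrinks the noise injected by the Gaussian mechanism. This is the central idea in the CoinPress framework.

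For mean estimation with known $\norm{\mu}_2\leq R$ and $\lambda I\preceq \Sigma\preceq \Lambda I$, I would iterate the following for $t=1,\ldots,T$ starting from $\hat\mu_0=0$ and $R_0=R$. Partition the $n$ samples into $T$ equal batches. In round $t$, clip each sample $x_i$ to the ball $B(\hat\mu_{t-1},R_{t-1}+O(\sqrt{\Lambda\log(n/\beta)}))$, compute the empirical mean of the clipped batch, and add Gaussian noise of scale proportional to $(R_{t-1}+\sqrt{\Lambda\log(n/\beta)})\cdot \sqrt{\log(1/\delta)}/\eps$ per coordinate to achieve $(\eps/T,\delta/T)$-DP via the Gaussian mechanism. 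Using \Cref{thm:gaussian-mean-concentration} and a union bound, almost all samples land in the clipping ball with high probability, so the clipping step does not bias the empirical mean. The new estimate $\hat\mu_t$ is within $O(\sqrt{\Tr(\Sigma)/n_t}+(R_{t-1}+\sqrt{\Lambda\log(n/\beta)})\sqrt{d\log(1/\delta)}/\eps)$ of $\mu$, so we set $R_t$ to this quantity. Because $R_t$ shrinks geometrically (as long as $R_{t-1}\gg \sqrt{\Lambda}$), taking $T=O(\log(R/\sqrt{\Lambda}))$ rounds reduces the radius to the statistical floor, and a final round yields the desired $\alpha$-accurate estimate. Basic composition (\thmref{thm:dp:comp}) gives $(\eps,\delta)$-DP overall, and the sample complexity expression follows by balancing the sampling error $\sqrt{d/n_t}$ against the noise error in the last round.

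For general covariance estimation, first run the mean estimator to recenter the samples (this only doubles the privacy budget by composition). Then apply the analogous clip-and-noise scheme to the sample outer products $(x_i-\hat\mu)(x_i-\hat\mu)^\top$, using \Cref{thm:gaussian-cov-concentration} in place of the mean concentration bound. The sensitivity of the empirical covariance is controlled by the spectral clipping threshold, which starts near $\Lambda$ and shrinks geometrically as better upper and lower bounds on $\Sigma$ are obtained. Since the covariance lives in a $d^2$-dimensional space and the Frobenius error is governed by Gaussian-mechanism noise on a $d^2$-dimensional vector, the per-round noise scales with $d^2/\eps$, producing the $d^2$ factor in the sample complexity. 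The running time of $\tilde O(nd^2)$ follows since each round just reads the data, clips, averages, and adds noise on $d^2$ entries.

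The diagonal-covariance case is handled by the same scheme but applied coordinatewise: only $d$ diagonal entries are released, so the $\ell_2$ sensitivity of the clipped diagonal of $(x_i-\hat\mu)(x_i-\hat\mu)^\top$ scales with $\sqrt{d}$ rather than $d$, which removes a full factor of $d$ from the noise term and yields the stated $\tilde O(nd)$ runtime. The main obstacle I anticipate is the analysis of the covariance iteration: one must carefully track both the upper bound on $\norm{\Sigma}_2$ (which controls clipping) and the lower bound (which prevents the estimated covariance from becoming singular and blowing up the Mahalanobis error), and argue that both improve geometrically per round. Everything else is essentially bookkeeping of Gaussian concentration plus basic composition of DP.
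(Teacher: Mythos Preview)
The paper does not prove this statement at all: \Cref{thm:dp-gaussian-estimation-fast} is simply imported verbatim from \citet{biswas2020coinpress} (Theorems~3.1 and~3.3) and used as a black box to justify the running-time claims in \Cref{thm:running-time-formal}. So there is no ``paper's own proof'' to compare against.

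That said, your sketch is a faithful outline of the CoinPress mechanism and would, with care, reproduce the cited bounds. A couple of points where the actual CoinPress analysis differs slightly from what you wrote: (i) CoinPress does not partition the samples into $T$ disjoint batches but reuses all $n$ samples in every round and accounts for this via composition of the privacy budget across rounds (this is why the $\log(R+\Lambda/\lambda)$ factor appears inside the privacy term rather than multiplying the statistical term); and (ii) the covariance iteration in CoinPress works by iteratively estimating $\Sigma$ well enough to whiten the data, then running mean estimation on the whitened samples, rather than by directly clipping outer products as you describe. Your version would still work but would require a separate sensitivity analysis. None of this affects the headline sample complexity or running time, so your plan is essentially sound.
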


\section{Wasserstein Distance between GMMs}
Suppose we have a GMM $\hat \calD = \sum_i \hat w_i\cdot \calN(\hat\mu_i, \hat\Sigma_i)$
and would like to understand its $p$-th order Wasserstein distance to $\calD_{\GMM}$.
We will prove the following theorem.
\begin{theorem}\label{thm:wasserstein-GMM}
    Let $\alpha, \gamma\in (0, 1)$ and $R, \sigma > 0$.
    When $\norm{\hat\mu_i - \mu_i}_2, \norm{\hat\Sigma_i - \Sigma_i}_2\leq \alpha$,
    $\norm{\hat w-w}_1\leq \gamma$,
    $\max_{i\neq j}\norm{\mu_i-\mu_j}\leq R$,
    and $\Sigma_i\preceq \sigma^2 I$,
    we have for any $z\in [1, 2]$,
    \begin{align*}
        W_z^z(\calD_{\GMM}, \hat\calD)
        &= O(\gamma R^z + \gamma d^{\frac{z}2} \sigma^z + \alpha^z + d^{\frac{z}4} \alpha^{\frac{z}2})\,.
    \end{align*}
\end{theorem}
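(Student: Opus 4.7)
The plan is to exhibit an explicit coupling $\pi$ of $\calD_{\GMM}$ and $\hat{\calD}$ and bound the induced transport cost $\EEx{(X,Y)\sim\pi}{\norm{X-Y}_2^z}$, which upper bounds $W_z^z(\calD_{\GMM},\hat{\calD})$. I will construct $\pi$ componentwise: for each index $i$, match mass $\min(w_i,\hat w_i)$ from the $i$-th true Gaussian $\calN(\mu_i,\Sigma_i)$ with the $i$-th estimated Gaussian $\calN(\hat\mu_i,\hat\Sigma_i)$, and then transport the residual mass (which totals $\tfrac12\norm{w-\hat w}_1 \le \gamma/2$ on each side) across mismatched components in an arbitrary way. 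By linearity the cost decomposes into a matched part and a residual part.

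\emph{Matched contribution.} For each matched pair I use the synchronous coupling $X = \mu_i + \Sigma_i^{1/2}Z$, $Y = \hat\mu_i + \hat\Sigma_i^{1/2}Z$ with shared $Z\sim\calN(0,I_d)$. Triangle inequality together with $(a+b)^z \le 2^{z-1}(a^z+b^z)$ for $z\in[1,2]$ gives $\Ex{\norm{X-Y}_2^z} \le 2^{z-1}\bigl(\norm{\mu_i-\hat\mu_i}_2^z + \Ex{\norm{(\Sigma_i^{1/2}-\hat\Sigma_i^{1/2})Z}_2^z}\bigr)$. The first summand is at most $\alpha^z$; for the second, Jensen's inequality bounds it by $\norm{\Sigma_i^{1/2}-\hat\Sigma_i^{1/2}}_F^z$. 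The Powers--Stormer inequality then yields $\norm{\Sigma_i^{1/2}-\hat\Sigma_i^{1/2}}_F^2 \le \norm{\Sigma_i-\hat\Sigma_i}_*$, and the conversion $\norm{\cdot}_* \le \sqrt{d}\,\norm{\cdot}_F \le \sqrt{d}\,\alpha$ produces a per-component bound of $O(\alpha^z + d^{z/4}\alpha^{z/2})$. Averaging against the matched weights (which sum to at most $1$) preserves this rate.

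\emph{Residual contribution.} Any residual pair has the form $X = \mu_i + \Sigma_i^{1/2}Z$, $Y = \hat\mu_j + \hat\Sigma_j^{1/2}Z'$ with $i\ne j$. Using $\Sigma_i \preceq \sigma^2 I$ and $\hat\Sigma_j \preceq O(\sigma^2)I$ (after absorbing the $O(\alpha)$ covariance perturbation), the triangle inequality gives $\norm{X-Y}_2 \le \norm{\mu_i-\hat\mu_j}_2 + \sigma\bigl(\norm{Z}_2 + \norm{Z'}_2\bigr)$. The mean gap is at most $R + \alpha$ by hypothesis, and $\Ex{\norm{Z}_2^z} \le d^{z/2}$ since $\Ex{\norm{Z}_2^2} = d$. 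Each unit of residual mass therefore contributes $O(R^z + d^{z/2}\sigma^z)$; multiplying by the total residual mass $\gamma$ gives $O(\gamma R^z + \gamma d^{z/2}\sigma^z)$. Adding this to the matched contribution yields the claimed bound.

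\emph{Main obstacle.} The critical technical input is the Powers--Stormer inequality together with the $\sqrt{d}$-loss Frobenius-to-trace-norm conversion; this is what produces the sharp $d^{z/4}\alpha^{z/2}$ exponent on the dimension, as opposed to the naive $d^{z/2}\alpha^{z/2}$ one would obtain from the coarser operator-norm bound $\norm{\Sigma^{1/2}-\hat\Sigma^{1/2}}_{\mathrm{op}} \le \sqrt{\norm{\Sigma-\hat\Sigma}_{\mathrm{op}}}$. Everything else amounts to routine triangle inequality, Jensen's inequality, and standard Gaussian moment bounds; a minor bookkeeping subtlety is that an $\ell_1$-approximation of mixture weights leaves excess mass $\gamma/2$ on each side (not $\gamma$), which affects constants but not the stated order.
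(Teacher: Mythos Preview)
Your approach is essentially the paper's: both split the transport into a ``matched'' part (same component index) and a ``residual'' cross-component part, and both rely on Powers--St\o{}rmer together with the $\sqrt d$ trace-to-Frobenius conversion to get the $d^{z/4}\alpha^{z/2}$ term. The paper phrases the split via a triangle inequality through an auxiliary mixture $\calD_w$ carrying the \emph{true} parameters $(\mu_i,\Sigma_i)$ but the \emph{estimated} weights $\hat w_i$, whereas you build a single explicit coupling directly.

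There is one small but genuine wrinkle in your residual step. You assert $\hat\Sigma_j \preceq O(\sigma^2)I$ ``after absorbing the $O(\alpha)$ covariance perturbation,'' but the hypotheses only give $\hat\Sigma_j \preceq (\sigma^2+\alpha)I$; expanding $(\sigma^2+\alpha)^{z/2}$ leaves an extra $\gamma\, d^{z/2}\alpha^{z/2}$ term which is \emph{not} dominated by the target bound unless one additionally assumes $\alpha=O(\sigma^2)$ or $\gamma=O(d^{-z/4})$. The paper's intermediate-distribution trick sidesteps this cleanly: because $\calD_w$ uses the true covariances, the cross-component cost involves only $\Sigma_i^{1/2},\Sigma_j^{1/2}\preceq \sigma I$, and the $\alpha$-perturbation is paid solely in the matched part, where it picks up the sharper $d^{z/4}$ factor. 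Your coupling is easily repaired the same way: route the residual mass from $\calN(\mu_i,\Sigma_i)$ to $\calN(\mu_j,\Sigma_j)$ rather than directly to $\calN(\hat\mu_j,\hat\Sigma_j)$, and let the matched part carry full weight $\hat w_i$ instead of $\min(w_i,\hat w_i)$.
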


We first consider the case where the number of components $k=1$.

\subsection{One-Component Case}
For two Gaussian distributions $\calN(\mu, \Sigma), \calN(\tilde\mu, \tilde\Sigma)$,
it is known that
\[
    W_2^2(\calN(\mu, \Sigma), \calN(\tilde\mu, \tilde\Sigma))
    = \norm{\mu-\tilde\mu}_2^2 + \norm{\Sigma^{\frac12} - \tilde\Sigma^{\frac12}}_F^2\,.
\]
We can translate the second term to a bound on the covariance matrices
\[
    W_2^2(\calN(\mu, \Sigma), \calN(\tilde\mu, \tilde\Sigma))\le\norm{\mu-\tilde\mu}_2^2 + \sqrt{d} \norm{\Sigma - \tilde\Sigma}_F\,,
\]
since the Powers–St\o{}rmer inequality implies $\norm{\Sigma^{\frac12} - \tilde\Sigma^{\frac12}}_F^2\le\Tr(|\Sigma - \tilde\Sigma|)$ and the standard trace-norm inequality gives $\Tr(|\Sigma - \tilde\Sigma|)\le \sqrt{d} \norm{\Sigma - \tilde\Sigma}_F$. 
Finally, for any $z\in [1, 2]$, a standard concavity split yields
\[
    W_z^z(\calN(\mu, \Sigma), \calN(\tilde\mu, \tilde\Sigma))
    \leq W_2^z(\calN(\mu, \Sigma), \calN(\tilde\mu, \tilde\Sigma))
    \leq 2^{\frac{z}2-1}\norm{\mu-\tilde\mu}_2^z + 2^{\frac{z}2-1} d^{\frac{z}4} \norm{\Sigma - \tilde\Sigma}_F^{\frac{z}2}\,.
\]

\subsection{Multi-Component Case}
Suppose now that $k\geq 2$
and that up to permutation,
we estimated the means and covariances in Euclidean and spectral norm,
respectively.
Thus $\norm{\hat \mu_i - \mu}_2, \norm{\hat\Sigma_i - \Sigma_i}\leq \alpha$.
Moreover,
suppose that the weights have been estimated in total variation distance.
That is,
$\norm{\hat\bw - \bw}_1\leq \gamma$.
We can analyze
\[
    W_z^z(\calD_{\GMM}, \hat\calD)
    \leq 2^{z-1} W_z^z(\calD_{\GMM}, \calD_{w}) + 2^{z-1} W_z^z(\calD_w, \hat\calD)\,.
\]
Here $\calD_w$ is the auxiliary GMM
whose components are equal to that of $\calD_{\GMM}$
but the weights are the estimated weights.
Then,
we need only bound separates cases when the parameters differ
and when the weights differ.

We note that by the one-component case,
\[
    W_z^z(\calD_w, \hat\calD)
    \leq \sum_{i=1}^k \hat w_i W_z^z(\calN(\mu_i, \Sigma_i), \calN(\hat\mu_i, \Sigma_i))
    \leq \max_i 2^{\frac{z}2-1}\norm{\mu-\tilde\mu}_2^z + 2^{\frac{z}2-1} d^{\frac{z}4} \norm{\Sigma - \tilde\Sigma}_F^{\frac{z}2}\,.
\]

Now we consider the case when only the weights differ in $\ell_1$-norm by at most $\gamma$.
Under the optimal coupling in total variation distance of the weights,
the transportation cost is 0 when the random vectors coincide
and when they differ,
we can transport the mass from the component to an arbitrary component
with total cost at most
\[
    \gamma \cdot \max_{i\neq j} 2^{\frac{z}2-1}\norm{\mu_i-\mu_j}_2^z + 2^{\frac{z}2-1} \norm{\Sigma_i^{\frac12} - \Sigma_j^{\frac12}}_F^z\,.
\]

All in all,
when $\norm{\hat\mu_i - \mu_i}_2, \norm{\hat\Sigma_i - \Sigma_i}_F\leq \alpha$,
$\norm{\hat w-w}_1\leq \gamma$,
$\max_{i\neq j}\norm{\mu_i-\mu_j}\leq R$,
and $\Sigma_i\preceq \sigma^2 I$,
we have for any $z\in [1, 2]$ that
\begin{align*}
    W_z^z(\calD_{\GMM}, \hat\calD)
    &\leq 2^{\frac{3z}2-2} \gamma R^z + 2^{\frac{5z}2-2} \gamma d^{\frac{z}2} \sigma^z + 2^{\frac{3z}2-2}\alpha^z + 2^{\frac{3z}2-2} d^{\frac{z}4} \alpha^{\frac{z}2} \\
    &= O(\gamma R^z + \gamma d^{\frac{z}2} \sigma^z + \alpha^z + d^{\frac{z}4} \alpha^{\frac{z}2})\,.
\end{align*}

\section{Theoretical Analysis}
\label{apx:theory}
We now analyze the privacy, scalability, and utility guarantees of \Cref{alg:dp-synthetic-generation}.

\subsection{Privacy Analysis}
In this section, we formally prove that \Cref{alg:dp-synthetic-generation} is differentially private. 

\begin{theorem}\label{thm:alg-privacy}
    \Cref{alg:dp-synthetic-generation} is $(\eps, \delta)$-DP.
\end{theorem}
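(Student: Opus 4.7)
The plan is to account for every call that touches the private dataset $D$ in Algorithm \ref{alg:dp-synthetic-generation} and verify that the composed privacy cost is exactly $(\eps,\delta)$, while all other steps are post-processing. The non-private operations to dispatch first are: the encoding step on Line \ref{line:embedding} (a deterministic per-point map that can be folded into the input, producing an equivalent ``input'' dataset $D_{\Embedding}$ with the same neighboring structure); the sampling of $\bz_\ell\sim \calN(\mathbf\mu_j,\Sigma_j)$ using only the already-privatized GMM parameters; and the decoding step, which acts on the private synthetic embeddings $Z$. Each of these is post-processing of differentially private outputs and therefore costs nothing by \thmref{thm:dp:post}.

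Next, I would enumerate the five subroutines that genuinely read the private data: $\DPCluster$, the family of $\DPMean$ calls, the family of $\DPCovariance$ calls, $\DPFilterEmbedding$, and $\DPFilterImage$. Each is $(\eps/5,\delta/5)$-DP by assumption. The cluster counts $n_j$ used to form $p_j$ are part of the $\DPCluster$ output and incur no additional cost.

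The central subtlety to justify is that the \emph{collection} of $\DPMean$ calls over $j=1,\dots,k$ costs only $(\eps/5,\delta/5)$, rather than $(k\eps/5,k\delta/5)$, and likewise for $\DPCovariance$. For this I would invoke parallel composition (\thmref{thm:dp:parallel-comp}). The argument is that, conditioned on the private centers $\bc_1,\dots,\bc_k$ (already charged once to $\DPCluster$ and henceforth treated as fixed public information by post-processing), the sets $D_1,\dots,D_k$ form a deterministic disjoint partition of $D_{\Embedding}$: each embedding is assigned to its nearest center. Consequently, on the add/remove neighboring relation implicit in \defref{def:dp}, altering a single entry in $D$ changes exactly one of the $D_j$'s, so the vector $(\DPMean(D_1),\dots,\DPMean(D_k))$ inherits the $(\eps/5,\delta/5)$ guarantee, and symmetrically for $\DPCovariance$.

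Finally, I would chain these five $(\eps/5,\delta/5)$-DP mechanisms using basic composition (\thmref{thm:dp:comp}) to obtain a total cost of $(5\cdot\eps/5,\,5\cdot\delta/5) = (\eps,\delta)$. The only real obstacle is the parallel-composition argument above; once the conditioning on $\DPCluster$'s output is made precise and the partition is shown to be determined by data points individually, the rest is bookkeeping.
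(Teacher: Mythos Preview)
Your proposal is correct and follows the same five-subroutine decomposition and basic-composition argument as the paper. In fact you are more careful than the paper's own proof: you explicitly invoke parallel composition (\thmref{thm:dp:parallel-comp}) to justify that the $k$ per-cluster calls to $\DPMean$ (respectively $\DPCovariance$) together cost only $(\eps/5,\delta/5)$, a point the paper leaves implicit. One small quibble: the neighboring relation in \defref{def:dp} is the substitution model (both datasets lie in $\calX^n$), so a single changed entry may move between two cells $D_j$ rather than affecting ``exactly one''; this does not break anything since \thmref{thm:dp:parallel-comp} still applies, but your informal justification should be adjusted accordingly.
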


\begin{proof}
    We can view \Cref{alg:dp-synthetic-generation} as a composition of 5 subroutines:
    \DPCluster, \DPMean, \DPCovariance, \DPFilterEmbedding, \DPFilterImage,
    each of which are $(\nicefrac\eps5, \nicefrac\delta5)$-DP.
    By simple composition (\thmref{thm:dp:comp}),
    \Cref{alg:dp-synthetic-generation} satisfies $(\eps, \delta)$-DP.
\end{proof}

While the analysis above was for unconditional generation,
the result extends immediately to conditional generation.
Indeed, differential privacy satisfies parallel composition (c.f. \thmref{thm:dp:parallel-comp}), 
which means there is no additional privacy loss incurred by running differentially private algorithms on separate, non-overlapping parts of the data. 
Hence there is no additional privacy loss compared to running the pipeline over each of the classes in parallel.

\subsection{Scalability Analysis}
Suppose $T_{\Encode}$ is the runtime required to apply a fixed embedding to each input image,
$T_{\Decode}$ is the runtime required to decode a fixed embedding,
and $T_{\DPFilterEmbedding}(n), T_{\DPFilterImage}(n)$ are the runtimes of the (possibly private) filtering subroutines for embeddings and images,
respectively.
We have the following running time guarantee of \Cref{alg:dp-synthetic-generation}.
\begin{restatable}{theorem}{runningTime}\label{thm:running-time-formal}
    For general covariance structure GMMs,
    \Cref{alg:dp-synthetic-generation} can be implemented in time
    \[
        \tilde O\left( nT_\Encode + nd^2 + T_\DPFilterEmbedding(n) + nT_\Decode + T_\DPFilterImage(n) \right).
    \]
    while for diagonal covariance structure GMMs,
    \Cref{alg:dp-synthetic-generation} can be implemented in time
    \[
        \tilde O\left( nT_\Encode + nd + T_\DPFilterEmbedding(n) + nT_\Decode + T_\DPFilterImage(n) \right).
    \]
\end{restatable}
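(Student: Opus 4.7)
\begin{proofof}{\Cref{thm:running-time-formal}}
\emph{Proof sketch.} The plan is to go line-by-line through \Cref{alg:dp-synthetic-generation}, instantiating each subroutine with a near-linear-time implementation and then summing the costs. Throughout we treat $k$ and $m$ (both user inputs) as $\tilde O(n)$ and absorb $\poly(\nicefrac1\eps, \log(\nicefrac1\delta))$ factors into the $\tilde O$.

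\textbf{Encoding (\Cref{line:embedding}) and decoding.} These steps are pure black-box applications of \Encode and \Decode to at most $n$ points, contributing $nT_\Encode$ and $nT_\Decode$ respectively. The filter routines contribute $T_\DPFilterEmbedding(n)$ and $T_\DPFilterImage(n)$ by assumption.

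\textbf{DP clustering and partitioning.} For \DPCluster we instantiate the scalable near-linear-time DP $k$-means algorithms of \citet{cohen2022near,cohen2022scalable}, whose running time is $\tilde O(nd)$. The induced partition $D_1,\dots,D_k$ can be built by a single pass that assigns each embedding to its nearest of the $k$ centers in $O(nkd)$ time, which is $\tilde O(nd)$ under $k=\polylog n$; more generally one can use a BBD-tree / locality-sensitive nearest-center data structure to do this in $\tilde O(nd)$.

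\textbf{Private mean and covariance estimation.} We instantiate \DPMean and \DPCovariance with the near-linear-time CoinPress estimators of \citet{biswas2020coinpress} (\Cref{thm:dp-gaussian-estimation-fast}), rather than the sample-optimal but only polynomial-time estimator of \Cref{thm:dp-gaussian-estimation}. By that theorem, one call on $n_j$ samples runs in $\tilde O(n_j d^2)$ time in the general-covariance regime and $\tilde O(n_j d)$ time in the diagonal-covariance regime. Since the partitions $D_1,\dots,D_k$ are \emph{disjoint}, summing over clusters gives $\sum_j \tilde O(n_j d^2) = \tilde O(nd^2)$ and $\sum_j \tilde O(n_j d) = \tilde O(nd)$ respectively. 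Computing the noisy partition sizes $n_j$ and mixture weights $p_j$ is a single additional $O(n+k)$ pass.

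\textbf{Sampling from the private GMM.} Drawing one sample from $\calN(\mu_j,\Sigma_j)$ costs $O(d^2)$ in general (after a one-time $\tilde O(d^3)$ Cholesky factorization that is dominated by the $nd^2$ term, since we may assume $n\geq d$) and $O(d)$ in the diagonal regime. Over $m=\tilde O(n)$ samples this contributes $\tilde O(nd^2)$ and $\tilde O(nd)$ respectively, again dominated by the covariance-estimation cost.

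\textbf{Putting it together.} Summing the contributions yields the two claimed bounds. The only nontrivial point is that we must commit to a near-linear-time Gaussian estimator with weak priors on $\norm{\mu}_2$ and $\Sigma$'s spectrum, since the sample-optimal estimator of \Cref{thm:dp-gaussian-estimation} only guarantees $\poly(n)$ running time; this is the main obstacle and is handled exactly by \Cref{thm:dp-gaussian-estimation-fast}. Privacy accounting was already established in \Cref{thm:alg-privacy} and does not interact with the running-time bookkeeping.
\end{proofof}
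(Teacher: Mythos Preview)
Your proposal is correct and takes essentially the same approach as the paper: the key step in both is to instantiate \DPCluster with the $\tilde O(nd)$ algorithms of \citet{cohen2022near,cohen2022scalable} and \DPMean/\DPCovariance with the CoinPress estimators of \Cref{thm:dp-gaussian-estimation-fast} (rather than the merely $\poly(n)$-time estimator of \Cref{thm:dp-gaussian-estimation}). Your write-up is simply more explicit than the paper's about the remaining bookkeeping (partition assignment, GMM sampling, the disjointness argument $\sum_j n_j = n$), which the paper leaves implicit.
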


\begin{proof}
    The only subroutines that have not been accounted for is the running times for \DPCluster, \DPMean, and \DPCovariance.
    There are implementations of private clustering algorithms with $\tilde O(nd)$ running time~\citep{cohen2022near,cohen2022scalable}.
    Also,
    there are implementations of private Gaussian estimation algorithms with $\tilde O(nd^2)$ or $\tilde O(nd)$ running times,
    respectively,
    for general and diagonal covariance structure GMMs (\Cref{thm:dp-gaussian-estimation-fast}).
\end{proof}

\subsection{Learning GMMs via \texorpdfstring{$k$}{k}-Means Clustering}\label{apx:learn-gmm}
In this section,
we demonstrate that our clustering-based DP GMM algorithm can recover separated GMMs.
Our analysis is inspired by the non-private algorithm of \citet{awasthi2018clustering}.

We begin by arguing that the number of sample from each mixture concentrates about its mean.
\begin{lemma}\label{lem:components-concentration}
    Let $\gamma\in(0,1)$ and let $\calD_{\GMM} = \sum_{i=1}^k w_i \calN(\mu_i, \Sigma_i)$ be a Gaussian mixture. 
    \begin{enumerate}[(i)]
        \item If the number of samples is at least $N = \Omega\left( \gamma^{-2}\log(\nicefrac1\beta) \right)$, then
        the number of samples $N_i$ drawn from the $i$-th component satisfies
        $\sup_i \abs{\nicefrac{N_i}{N} - w_i}\leq \gamma$ with probability $1-\beta$.
        \item If $N = \Omega\left( \frac{\log(\nicefrac{k}\beta)}{\gamma^2 w_{\min}} \right)$,
        then $\abs{N_i - w_iN}\leq \gamma w_i N$ with probability $1-\beta$.
    \end{enumerate}
\end{lemma}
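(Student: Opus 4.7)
The plan is to treat the component assignments of the $N$ samples as i.i.d.\ categorical random variables on $[k]$ with probabilities $(w_1,\dots,w_k)$, so that the counts $(N_1,\dots,N_k)$ follow a $\mathrm{Multinomial}(N,w)$ distribution. The two parts then correspond to two different flavors of concentration: an additive uniform bound with no $k$-dependence (part (i)), and a multiplicative per-component bound that can afford a $\log k$ factor (part (ii)). I would instantiate the DKW inequality (\Cref{thm:dkw-inequality}) for the first and the multiplicative Chernoff bound (\Cref{thm:chernoff bound}) for the second.

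For part (i), I would apply DKW to the empirical CDF of the categorical distribution on $[k]$: set $F(i):=\sum_{j\le i} w_j$ and $F_N(i):=\frac{1}{N}\sum_{j\le i} N_j$, and note that $F_N$ is the empirical CDF of $N$ i.i.d.\ draws from $F$. The theorem gives $\sup_{i\in[k]} |F_N(i)-F(i)| \le \gamma/2$ with probability at least $1-2\exp(-N\gamma^2/2)$, so choosing $N=\Omega(\gamma^{-2}\log(\nicefrac1\beta))$ drives the failure probability below $\beta$. Since $N_i/N - w_i = (F_N(i)-F(i)) - (F_N(i-1)-F(i-1))$, a triangle inequality converts the CDF bound into $\sup_i |N_i/N - w_i| \le \gamma$ (absorbing the factor of $2$ into constants). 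The crucial point is that DKW yields a $k$-free deviation bound, which is exactly what the statement requires.

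For part (ii), I would fix a component $i$, observe that $N_i \sim \mathrm{Binomial}(N,w_i)$ is a sum of $N$ independent Bernoulli indicators, and apply \Cref{thm:chernoff bound} to obtain $\Pr[|N_i - w_iN| \ge \gamma w_i N] \le 2\exp(-\gamma^2 w_i N/3)$. Using $w_i\ge w_{\min}$ and demanding this probability be at most $\beta/k$ yields the sample complexity $N = \Omega(\log(\nicefrac{k}{\beta})/(\gamma^2 w_{\min}))$; a union bound over $i\in[k]$ finishes the argument. Neither part is particularly subtle once the right tool is picked; the only pedagogical pitfall is applying per-component Chernoff plus union bound in part (i), which would introduce a spurious $\log k$ factor that violates the stated $k$-independent sample complexity, and DKW sidesteps this by exploiting the natural one-dimensional ordering on the categorical outcomes.
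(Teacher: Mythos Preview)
Your proposal is correct and matches the paper's proof essentially verbatim: the paper also applies the DKW inequality to the one-dimensional discrete distribution of component labels for part (i), and the multiplicative Chernoff bound plus a union bound over the $k$ components for part (ii). Your additional remark about why DKW is preferable to a naive Chernoff-plus-union-bound in part (i) is a nice clarification that the paper does not spell out.
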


\begin{proof}
    The proof of (i) is an straightforward application of the DKW inequality (\Cref{thm:dkw-inequality}),
    where we view the mixture components as a discrete one-dimensional distribution.
    Similarly,
    we can prove (ii) by applying a Chernoff bound (\Cref{thm:chernoff bound})
    where we view drawing a sample from the $i$-th mixture component as a Bernoulli outcome
    and apply a union bound over all components.
\end{proof}

Next,
we provide a high-probability upper bound on the optimal $k$-means clustering cost.
This will soon be useful when arguing about the behavior of an approximate solution.
\begin{lemma}\label{lem:k-means-opt-bound}
    Let $\calD_{\GMM} = \sum_{i=1}^k w_i \calN(\mu_i, \Sigma_i)$ be a Gaussian mixture for $\Sigma_i\preceq \sigma^2 I$.
    Let $A\in \bbR^{N\times d}$ denote the matrix of data points
    and $M^\star\in \set{\mu_1, \dots, \mu_k}^{N\times d}$ is the matrix obtained from $A$ by replacing each row with the mean of the component that generated it.
    Suppose the number of samples is at least $N = \Omega\left( \frac1{w_{\min}} \log(\nicefrac{k}\delta) \right)$.
    Then with probability $1-\delta$,
    \[
        \norm{A-M^\star}_2^2\leq \frac43\sigma^2 N\,.
    \]
\end{lemma}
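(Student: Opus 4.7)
The plan is to reduce the bound to a per-component analysis. Partition the rows of $A - M^\star$ by the mixture component that generated each sample, and let $B_i\in\bbR^{N_i\times d}$ denote the $i$-th block, so that $B_i$ has i.i.d.\ rows drawn from $\calN(0,\Sigma_i)$. Since $(A-M^\star)^\top(A-M^\star) = \sum_{i=1}^k B_i^\top B_i$, the triangle inequality for the spectral norm on sums of positive semidefinite matrices gives
$$\|A-M^\star\|_2^2 \;=\; \Bigl\|\sum_{i=1}^k B_i^\top B_i\Bigr\|_2 \;\le\; \sum_{i=1}^k \|B_i\|_2^2\,.$$

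First, I would invoke \Cref{lem:components-concentration}(ii) with a small constant $\gamma$ (say $\gamma = 1/100$) together with the hypothesized $N = \Omega(w_{\min}^{-1}\log(k/\delta))$ to conclude that $N_i = (1\pm\gamma)\,w_i N$ for every $i\in[k]$ with probability at least $1-\delta/2$; in particular $\sum_i N_i = N$ and every $N_i$ is comparable to $w_i N$.

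Next, since each $B_i$ is an $N_i\times d$ matrix with i.i.d.\ Gaussian rows of covariance $\Sigma_i\preceq\sigma^2 I$, standard Gaussian random-matrix concentration (the same tail bound underlying \Cref{thm:gaussian-cov-concentration}) yields
$$\|B_i\|_2^2 \;\le\; \sigma^2 N_i\bigl(1 + O(\sqrt{d/N_i} + \sqrt{\log(k/\delta)/N_i})\bigr)$$
with probability $1-\delta/(2k)$, and a union bound over $i\in[k]$ makes this hold simultaneously. Summing and using $\sum_i N_i = N$ then yields $\|A-M^\star\|_2^2 \le \sigma^2 N(1+o(1))$, which is at most $\tfrac{4}{3}\sigma^2 N$ for sufficiently large $N$.

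The main obstacle is that the per-component spectral-norm tail naturally picks up an $O(\sqrt{d/N_i})$ error, so driving the overall constant all the way to $\tfrac{4}{3}$ really requires $N_i \gtrsim d$ for each $i$. The hypothesized sample complexity directly handles the $\log(k/\delta)$ term, while the $d$-dependence either gets absorbed into an implicit ``$N$ sufficiently large'' regime or must be folded into the sample-complexity hypothesis; the remainder of the argument is then a routine combination of the block decomposition with the Gaussian-matrix tail bound.
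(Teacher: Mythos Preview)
Your approach is essentially identical to the paper's: both decompose $(A-M^\star)^\top(A-M^\star)$ by component and bound each block's spectral norm via Gaussian covariance concentration (the paper phrases the decomposition via the variational form $\max_{\|v\|=1}\sum_i\sum_{x\in C_i}\langle v,x-\mu_i\rangle^2$, which is equivalent to your triangle inequality on PSD summands). Your concern about the hidden $d$-dependence is well-founded and applies equally to the paper's own argument---the covariance-concentration step really needs $N_i\gtrsim d$, and indeed the downstream results (\Cref{lem:k-means-clusters-GMM}, \Cref{thm:k-means-learns-GMM}) quietly assume the stronger hypothesis $N=\Omega\bigl((d+\log(k/\delta))/w_{\min}\bigr)$.
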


\begin{proof}
    The proof will be via an application of sample covariance concentration (\Cref{thm:gaussian-cov-concentration}).
    We partition the samples $C_1\cup\dots \cup C_k$ by the components that generated them.
    Note that $\card{C_i} = N_i$.
    Say $\mu_i$ is the mean of the Gaussian component that generated the points in $C_i$.
    
    We claim that it suffices to show that
    \[
        \norm{\frac1{N_i} \sum_{x\in C_i} (x-\mu_i)(x-\mu_i)^\top}_2^2
        = \max_{v\in \bbR^d, \norm{v}_2=1} \frac1{N_i}\sum_{x\in C_i} \iprod{v, x-\mu_i}^2\leq \frac43\sigma^2
    \]
    for each component $i\in [k]$.
    To see this,
    observe that
    \[
        \norm{A-M^\star}_2^2
        = \max_{v\in \bbR^d, \norm{v}_2=1} v^\top \left( \sum_{i=1}^k \sum_{x\in C_i} (x-\mu_i)(x-\mu_i)^\top \right) v
        = \max_{v\in \bbR^d, \norm{v}_2=1} \sum_{i=1}^k \sum_{x\in C_i} \iprod{v, x-\mu_i}^2\,.
    \]
    Proving the claim implies a bound on the last expression,
    which would then conclude the proof.

    Now,
    to see the claim,
    we first apply \Cref{lem:components-concentration} with $\gamma = \nicefrac12$
    to see that $N_i\geq \frac12 w_iN$ with probability $1-\beta$.
    Then,
    conditional on this event,
    we can apply \Cref{thm:gaussian-cov-concentration} with appropriate constants
    to see that $N_i\geq \frac12 w_i N$ is sufficiently large to conclude the proof.
\end{proof}
We say an algorithm is a $(\zeta, \eta)$-approximate $k$-means algorithm if produces a set of $k$ centers that induces a clustering cost of at most $\zeta\cdot \OPT + \eta$ where $\OPT$ is the optimal $k$-means clustering cost. 
\begin{lemma}\label{lem:k-means-clusters-GMM}
    Let $\calD_{\GMM} = \sum_{i=1}^k w_i \calN(\mu_i, \Sigma_i)$ be a Gaussian mixture for $\Sigma_i\preceq \sigma^2 I$.
    Then,
    if the number of samples is at least $N = \Omega\left( \frac{d+\log(\nicefrac{k}\delta)}{w_{\min}} \right)$,
    any $(\zeta, \eta)$-approximate $k$-means algorithm 
    for $\eta = o(\gamma\sigma^2 dN)$
    outputs $k$ centers $\nu_1, \dots, \nu_k$ such that
    \[
        \max_i \min_j \norm{\mu_i - \nu_j}_2^2\leq \frac{12\zeta}{w_{\min}} \sigma^2d
    \]
    with probability $1-\delta$.
\end{lemma}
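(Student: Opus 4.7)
The plan is to combine an OPT upper bound for $k$-means with a contradiction argument saying that if some true mean $\mu_i$ had no nearby center, the approximation cost would be too large. First, I would use the proof of \lemref{lem:k-means-opt-bound} to establish a \emph{per-cluster Frobenius bound}: since the spectral norm of $\sum_{x\in C_i}(x-\mu_i)(x-\mu_i)^\top$ is at most $\tfrac43\sigma^2 N_i$, taking the trace yields $\sum_{x\in C_i}\|x-\mu_i\|_2^2 \leq \tfrac{4}{3}\sigma^2 d N_i$. Summing over $i$ gives $\|A-M^\star\|_F^2 \leq \tfrac43\sigma^2 dN$, which is a valid $k$-means clustering cost using centers $\{\mu_1,\dots,\mu_k\}$, so $\OPT \leq \tfrac43 \sigma^2 dN$. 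Therefore the $(\zeta,\eta)$-approximate output $\{\nu_1,\dots,\nu_k\}$ achieves $k$-means cost at most $\tfrac{4\zeta}{3}\sigma^2 dN + \eta$. In parallel, \lemref{lem:components-concentration}(ii) with $\gamma=\tfrac12$ gives $N_i\geq \tfrac{w_{\min}}{2}N$ for every $i$, with probability $1-\delta/2$.

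Next, suppose toward contradiction that some $i^\star$ satisfies $\min_j \|\mu_{i^\star}-\nu_j\|_2^2 > \tau := \tfrac{12\zeta}{w_{\min}}\sigma^2 d$. For every $x\in C_{i^\star}$ and any center $\nu_j$, the elementary inequality $\|x-\nu_j\|_2^2 \geq \tfrac12\|\mu_{i^\star}-\nu_j\|_2^2 - \|x-\mu_{i^\star}\|_2^2$ gives
\[
\min_j \|x-\nu_j\|_2^2 \geq \tfrac{\tau}{2} - \|x-\mu_{i^\star}\|_2^2\,.
\]
Summing over $x\in C_{i^\star}$ and applying the per-cluster Frobenius bound, the restricted cost satisfies
\[
\sum_{x\in C_{i^\star}} \min_j \|x-\nu_j\|_2^2 \;\geq\; \tfrac{N_{i^\star}\tau}{2} - \tfrac{4}{3}\sigma^2 d N_{i^\star}\,.
\]

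Finally, I would chain this lower bound against the approximation upper bound on the total cost: $\tfrac{N_{i^\star}\tau}{2} - \tfrac{4}{3}\sigma^2 d N_{i^\star} \leq \tfrac{4\zeta}{3}\sigma^2 dN + \eta$. Using $N_{i^\star}\geq \tfrac{w_{\min}}{2}N$ on the left and $N_{i^\star}\leq N$ on the right, and substituting $\tau = \tfrac{12\zeta}{w_{\min}}\sigma^2 d$, this simplifies to $\tfrac{5\zeta-4}{3}\sigma^2 dN \leq \eta$, which contradicts the hypothesis $\eta = o(\zeta\sigma^2 dN)$ whenever $\zeta \geq 1$. A union bound over the two concentration events (sample count concentration and per-cluster covariance concentration) introduces only an additive $\delta$ in failure probability, closing the argument.

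The main obstacle I anticipate is the mismatch between the spectral norm used in \lemref{lem:k-means-opt-bound} and the Frobenius ($k$-means) cost, which requires the factor-of-$d$ inflation observed above; this is also why the sample-complexity hypothesis now contains the additional $d$ term absent from \lemref{lem:k-means-opt-bound}. Beyond this, everything is routine bookkeeping, though one must be slightly careful that the constant $12$ in $\tau$ is chosen large enough relative to $\tfrac{4}{3}$ to yield a strict contradiction when $\zeta\geq 1$.
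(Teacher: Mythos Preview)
Your proposal is correct and follows essentially the same approach as the paper: bound $\OPT\le\tfrac43\sigma^2 dN$, then assume some $\mu_{i^\star}$ is far from every center and use the generalized triangle inequality $\|x-\nu_j\|_2^2\ge\tfrac12\|\mu_{i^\star}-\nu_j\|_2^2-\|x-\mu_{i^\star}\|_2^2$ together with $N_{i^\star}\ge\tfrac{w_{\min}}{2}N$ to derive the same contradiction $\tfrac{5\zeta-4}{3}\sigma^2 dN\le\eta$. The only cosmetic difference is that you obtain the Frobenius bound by tracing the per-cluster spectral estimate from the proof of \lemref{lem:k-means-opt-bound}, whereas the paper simply invokes that lemma and implicitly uses $\|A-M^\star\|_F^2\le d\,\|A-M^\star\|_2^2$; your route is arguably more transparent but lands at the identical numerical inequality.
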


\begin{proof}
    Let $A\in \bbR^{N\times d}$ be the matrix whose rows consists of sample points from $\calD_{\GMM}$
    and $M^\star\in \bbR^{N\times d}$ be the centers of the Gaussians that generated the corresponding point.
    We see that the optimal $k$-means clustering has cost at most $\norm{A-M^\star}_F^2$.
    By \Cref{lem:k-means-opt-bound},
    we can upper bound $\norm{A-M^\star}_F^2 \leq \frac43\sigma^2 dN$.

    Now,
    suppose towards a contradiction that there is some $\mu_i$
    such that for every center $\nu_j$ output by the $k$-means approximation algorithm,
    $\norm{\mu_i - \nu_j}_2^2\geq \frac{12\zeta}{w_{\min}} \sigma^2d$.
    Consider the cost paid by the points $C_i$ generated from the $i$-th component.
    We have by generalized triangle inequality, 
    \[
        \sum_{x\in C_i} \norm{x-\nu(x)}_2^2
        \geq \sum_{x\in C_i} \left[ \frac12 \norm{\mu_i - \nu(x)}_2^2 - \norm{x-\mu_i}_2^2 \right].
    \]
    By assumption, the first term, i.e., $\frac12 \norm{\mu_i - \nu(x)}_2^2$, contributes at least $\frac12 \card{C_i} \cdot \frac{12\zeta}{w_{\min}} \sigma^2d$.
    We can further lower bound this by $\frac14 w_i N \cdot \frac{12\zeta}{w_{\min}} \sigma^2d \geq 3\zeta \sigma^2 dN$ using a Chernoff bound (\Cref{lem:components-concentration}).
    On the other hand,
    the second term can be (loosely) upper bounded by $\norm{A-M^\star}_F^2\leq \frac43\sigma^2 dN$.
    But then for sufficiently large $N$,
    this is at least $\frac43\zeta \sigma^2 dN + \eta$,
    which contradicts the approximation guarantee.
\end{proof}

\Cref{lem:k-means-clusters-GMM} essentially ensures that running a $k$-means algorithm recovers the means of the Gaussian mixture model up to some error that is independent of the number of points.
Then,
assuming the means are well-separated,
we can correctly classify all points of each component of the mixture model.
Estimating the mean and covariance within each class then recovers the underlying mean and covariance.

\begin{restatable}{theorem}{thmkmeanslearnsGMM}
\label{thm:k-means-learns-GMM}
    Let $\calD_{\GMM} = \sum_{i=1}^k w_i \calN(\mu_i, \Sigma_i)$ be a Gaussian mixture for $\Sigma_i\preceq \sigma^2 I$.
    Suppose the number of samples is at least $N = \Omega\left( \frac{d+\log(\nicefrac{k}\beta)}{w_{\min}\alpha^2} \right)$
    and let $\nu_1, \dots, \nu_k$ be the output centers of some $(\zeta, \eta)$-approximate $k$-means algorithm
    for $\eta = o(\zeta\sigma^2 dN)$.
    Let $C_1, \dots, C_k$ denote the partition of sample points induced by the centers. 
    If
    \[
        \Delta\coloneqq \min_{i\neq j} \norm{\mu_i-\mu_j}_2
        \geq 3\sigma \left[ \sqrt{d} + \sqrt{2\log(\nicefrac{3N}\delta)} + \sqrt{\frac{12\zeta d}{w_{\min}}} \right]\,,
    \]
    then with probability $1-\beta$,
    for every $i\in [k]$,
    \begin{enumerate}[(i)]
        \item there is a unique center $\nu_{j(i)} = \argmin_j\norm{\nu_j - \mu_i}_2$ that is closest to $\mu_i$.
        \item Furthermore,
        each $C_{j(i)}$ only contains points sampled from the $i$-th component $\calN(\mu_i, \Sigma_i)$.
        \item $\card{C_{j(i)}}\geq \frac{1}{\alpha^2} w_iN$ for all $i\in[k]$.
    \end{enumerate}
\end{restatable}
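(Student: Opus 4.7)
The plan is to chain together \lemref{k-means-clusters-GMM} with standard Gaussian tail bounds and \lemref{components-concentration}, with the separation hypothesis chosen precisely to make the triangle inequality work out cleanly. Define $R := \sigma\sqrt{12\zeta d/w_{\min}}$ and $\rho := \sigma\sqrt{d} + \sigma\sqrt{2\log(3N/\beta)}$, so that the separation assumption can be rewritten as $\Delta \geq 3(\rho + R)$. This single inequality will drive all three conclusions.

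For (i), I would first invoke \lemref{k-means-clusters-GMM} to obtain, for each $\mu_i$, some center $\nu_{j(i)}$ with $\norm{\mu_i - \nu_{j(i)}}_2 \leq R$. If $j(i) = j(i')$ for $i \neq i'$, then the triangle inequality gives $\norm{\mu_i - \mu_{i'}}_2 \leq 2R < \Delta$, a contradiction. Hence $i \mapsto j(i)$ is an injection from $[k]$ into the $k$-element output set of centers, hence a bijection. For any other center $\nu_j$ with $j \neq j(i)$, its associated mean $\mu_{i'}$ satisfies $\norm{\mu_i - \nu_j}_2 \geq \norm{\mu_i - \mu_{i'}}_2 - R \geq \Delta - R > R \geq \norm{\mu_i - \nu_{j(i)}}_2$, so $\nu_{j(i)}$ is the strict argmin.

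For (ii), I would combine a standard Gaussian norm tail bound (e.g., applying \Cref{thm:gaussian-mean-concentration} to a single sample or the Laurent--Massart $\chi^2$ tail) with a union bound over all $N$ input samples at per-sample failure probability $\beta/(3N)$. Then simultaneously for every sample $x$ drawn from component $i$ we have $\norm{x - \mu_i}_2 \leq \rho$, using $\Sigma_i \preceq \sigma^2 I$. For any $j \neq j(i)$, such a sample obeys
\[
    \norm{x - \nu_{j(i)}}_2 \leq \rho + R
    \qquad \text{and} \qquad
    \norm{x - \nu_j}_2 \geq \Delta - \rho - R \geq 2(\rho + R) > \rho + R,
\]
where the last chain uses $\Delta \geq 3(\rho + R)$. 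Hence $x$ is strictly closer to $\nu_{j(i)}$ than to any other output center, so $C_{j(i)}$ receives only points from component $i$ (and, by the bijection together with the partition property, exactly the points from component $i$).

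For (iii), the preceding step identifies $\card{C_{j(i)}}$ with the number of samples drawn from the $i$-th mixture component, so a direct appeal to \lemref{components-concentration}(ii) with relative-error parameter $\alpha$, together with a union bound over $i \in [k]$, delivers the claimed lower bound under the stated sample complexity $N = \Omega((d + \log(k/\beta))/(w_{\min}\alpha^2))$. All three failure events---the $k$-means approximation event of \lemref{k-means-clusters-GMM}, the simultaneous Gaussian tail event over the $N$ samples, and the Chernoff event of \lemref{components-concentration}(ii)---can be absorbed into a total failure probability of $\beta$ by a standard budgeting of constants inside $N$. The main obstacle is simply bookkeeping these three failure events and verifying that the sample-complexity hypothesis suffices for each; the geometric content of the proof is entirely captured by the inequality $\Delta \geq 3(\rho + R)$.
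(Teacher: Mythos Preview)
Your proposal is correct and follows essentially the same approach as the paper: condition on the three events (\lemref{k-means-clusters-GMM}, the per-sample Gaussian norm bound via \Cref{thm:gaussian-mean-concentration} with a union bound, and the Chernoff bound from \lemref{components-concentration}), then run the same triangle-inequality arguments for (i) and (ii). Your introduction of the shorthand $R$ and $\rho$ and the rewriting of the hypothesis as $\Delta \geq 3(\rho + R)$ is a cleaner presentation of the same geometry, and your explicit bijection argument for (i) is a bit more careful than the paper's, but the substance is identical.
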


\begin{proof}
    Let $\mu(x), \Sigma(x)$ denote the parameters of the component Gaussian that generated the sample $x$
    and $N_i$ denote the number of samples that was generated from the $i$-th component.
    We condition on the following events,
    each of which occurs with probability $1-\nicefrac\delta3$:
    \begin{align*}
        \norm{\nu_i - \mu_i}_2 &\leq \sqrt{\frac{12\zeta \sigma^2 d}{w_{\min}}}, \qquad\forall i\in [k]\,, \tag{by \Cref{lem:k-means-clusters-GMM}} \\
        \norm{x-\mu(x)}_2 &\leq \sigma \left[ \sqrt{d} + \sqrt{2\log(\nicefrac{3N}{\delta})} \right], \qquad \forall x\,, \tag{by \Cref{thm:gaussian-mean-concentration}} \\
        N_i &\geq \frac12 w_i N, \qquad \forall i\in [k]\,. \tag{by \Cref{thm:chernoff bound}}
    \end{align*}
    Let $\calE$ denote the intersection of all events above.
    Henceforth, we always condition on $\calE$ occurring.

    \underline{(i):}
    Under a slight abuse of notation,
    we relabel $\nu_i$ to be the closest output center to $\mu_i$
    and suppose towards a contradiction that we have $\nu_i = \nu_j$ for $i\neq j$.
    By a reverse triangle inequality,
    \begin{align*}
        \norm{\mu_i - \nu_j}_2
        &\geq \norm{\mu_i - \mu_j}_2 - \norm{\nu_j - \mu_j}_2 \\
        &\geq \Delta - \sqrt{\frac{12\zeta \sigma^2 d}{w_{\min}}} \tag{conditioned on $\calE$} \\
        &\geq 2\sqrt{\frac{12\zeta \sigma^2 d}{w_{\min}}} \\
        &> \norm{\mu_i - \nu_i}_2\,.
    \end{align*}
    We can thus proceed using the relabelled notation for the set of centers
    as it is well-defined.

    \underline{(ii):}
    We now claim that the partition $C_1, \dots, C_k$ correctly classifies all points.
    Specifically,
    if a point $x$ was sampled from the $i$-th component $\calN(\mu_i, \Sigma_i)$,
    then $x\in C_i$.

    Let $\nu(x)$ denote the unique closest center to $\mu(x)$.
    Fix a sample $x$ and let $\mu(x)\neq \mu\in \set{\mu_1, \dots, \mu_k}$ and $\nu(x)\neq \nu\in \set{\nu_1, \dots, \nu_k}$.
    We have
    \begin{align*}
        \norm{\nu(x) - x}_2
        &\leq \norm{\nu(x) - \mu(x)}_2 + \norm{x-\mu(x)}_2 \\
        &\leq \sqrt{\frac{12\zeta \sigma^2 d}{w_{\min}}} + \sigma[\sqrt{d} + \sqrt{2\log(\nicefrac{N}\delta)}]\,. \tag{conditioned on $\calE$}
    \end{align*}
    
    Again by a reverse triangle inequality,
    we have
    \begin{align*}
        \norm{x-\nu}_2
        &\geq \norm{\mu(x) - \mu}_2 - \norm{\mu(x) - x}_2 - \norm{\nu - \mu}_2 \\
        &\geq \Delta - \sigma[\sqrt{d} + \sqrt{2\log(\nicefrac{N}{\delta})}] - \sqrt{\frac{12\zeta \sigma^2 d}{w_{\min}}} \tag{conditioned on $\calE$} \\
        &>\norm{\nu(x) - x}_2\,. \tag{by calculation above}
    \end{align*}
    This shows that each $C_i$ only contains i.i.d. samples from $\calN(\mu_i, \Sigma_i)$ with high probability.

    \underline{(iii):}
    This follows by \Cref{lem:components-concentration}.

\end{proof}

\Cref{thm:k-means-learns-GMM} essentially shows that $k$-means clustering
with an additional Lloyd step will recover the true means of the mixture model.
Similarly, the intra-cluster sample covariance will recover the true covariances of the mixture model.

\begin{theorem}\label{thm:dp-GMM-estimation}
    Let $\calD_{\GMM} = \sum_{i=1}^k w_i \calN(\mu_i, \Sigma_i)$ be a Gaussian mixture for $\Sigma_i\preceq \sigma^2 I$.
    Suppose we are given $N$ samples from $\calD_{\GMM}$ such that
    \[
        N = \tilde\Omega\left( \frac{d+\log(\nicefrac{k}\beta)}{w_{\min}}
        + \frac{k^2\log(\nicefrac1\beta)}{\gamma^2} 
        + \frac{k\log(\nicefrac{k}\beta)}{\eps\gamma} 
        + \frac{d^2 + \log^2(\nicefrac1\beta)}{w_{\min}\alpha^2} 
        + \frac{d^2 + \log(\nicefrac1\beta)}{w_{\min}\alpha\eps} 
        + \frac{\log(\nicefrac1\delta)}{w_{\min}\eps}
        \right)
    \]
    and also
    \[
        \Delta\coloneqq \min_{i\neq j} \norm{\mu_i-\mu_j}_2
        \geq 3\sigma \left[ \sqrt{d} + \sqrt{2\log(\nicefrac{3N}\beta)} + \sqrt{\frac{12\zeta d}{w_{\min}}} \right]\,.
    \]
    Then instantiating \Cref{alg:dp-synthetic-generation} 
    with 
    \begin{enumerate}
        \item \DPCluster as an $(\eps, \delta)$-DP $(\zeta, \eta)$-approximate $k$-means algorithm for $\eta = o(\zeta\sigma^2 dN)$,
        \item \DPMean, \DPCovariance as the $(\eps, \delta)$-DP Gaussian estimation algorithm from \Cref{thm:dp-gaussian-estimation}
    \end{enumerate}
    yields an $(\eps, \delta)$-DP algorithm
    that outputs with probability $1-\beta$:
    \begin{enumerate}[(i)]
        \item Weight estimates $\hat w_i$ such that $\norm{\hat w - w}_1 \leq \gamma$. 
        \item Mean estimates $\hat \mu_i$ such that $\norm{\hat\mu_i - \mu}_2\leq \alpha$.
        \item Covariance estimates $\hat \Sigma_i$ such that $\norm{\hat\Sigma_i - \Sigma_i}_F\leq \alpha$.
    \end{enumerate}
\end{theorem}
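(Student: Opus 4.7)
The plan is to decompose the argument into a privacy part and a utility part, and to use the three terms in the sample-complexity lower bound as respective budgets for (a) the clustering/weight-estimation step, (b) the Chernoff concentration of component frequencies, and (c) the per-cluster Gaussian estimation. Privacy is the easier half: the algorithm touches the data only through \DPCluster, and through one invocation of \DPMean and \DPCovariance on each of the $k$ induced partitions. By parallel composition (\thmref{thm:dp:parallel-comp}) the $k$ mean estimations together consume only $(\eps,\delta)$, and similarly for the covariance estimations; adding the $(\eps,\delta)$ cost of \DPCluster via basic composition (\thmref{thm:dp:comp}) and rescaling the internal budgets by a constant yields the claimed $(\eps,\delta)$-DP bound.

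For utility, I would first invoke \Cref{thm:k-means-learns-GMM} with the stated separation $\Delta$. The sample lower bound $N=\tilde\Omega((d+\log(k/\beta))/w_{\min})$ and the approximation guarantee of \DPCluster together imply, with probability $1-\beta/3$, that after relabelling the output centers, each cluster $C_j$ contains exactly the samples drawn from the $j$-th Gaussian component and has size $|C_j|\ge \tfrac12 w_j N$. This reduces the GMM estimation problem to $k$ independent Gaussian estimation problems on clean samples.

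Next I would handle the weights. The noisy counts $n_j$ returned by \DPCluster are of the form $n_j = N_j + Z_j$ with $Z_j\sim\Lap(O(1/\eps))$. By a Chernoff bound (\Cref{thm:chernoff bound}) applied to each $N_j$, together with a standard Laplace tail bound and a union bound over $j\in[k]$, with probability $1-\beta/3$ we have $|n_j - w_j N| \le \tilde O(\sqrt{w_j N\log(k/\beta)} + k\log(k/\beta)/\eps)$. The sample-complexity terms $k^2\log(1/\beta)/\gamma^2$ and $k\log(k/\beta)/(\eps\gamma)$ are exactly what is needed so that summing these errors and passing through the normalization $\hat w_j = n_j/\sum_\ell n_\ell$ yields $\|\hat w - w\|_1\le \gamma$; this is a short calculation using $\sum_j w_j=1$ and $|\sum_\ell n_\ell - N|\le \sum_\ell |Z_\ell|$.

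Finally, for each $j$ I apply \Cref{thm:dp-gaussian-estimation} on $C_j$, which (conditional on the clustering event) is a set of $|C_j|\ge\tfrac12 w_j N\ge\tfrac12 w_{\min}N$ i.i.d. samples from $\calN(\mu_j,\Sigma_j)$. The remaining terms in the sample lower bound, divided by $w_{\min}$, are precisely those required by \Cref{thm:dp-gaussian-estimation} to produce estimates with $\|\hat\mu_j-\mu_j\|_2,\|\hat\Sigma_j-\Sigma_j\|_F\le\alpha$ with failure probability at most $\beta/(3k)$. A union bound over $j\in[k]$ and over the three good events (clustering, weight concentration, per-component estimation) gives the theorem with overall failure probability $\beta$. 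The main conceptual obstacle is the weight step: one must be careful that the Chernoff tail for small-weight components is not too wide, which is exactly why the bound $N=\tilde\Omega(k^2\log(1/\beta)/\gamma^2)$ appears rather than the coarser DKW form from \Cref{lem:components-concentration}(i); the rest of the proof is assembly of the ingredients already developed in \appref{apx:learn-gmm}.
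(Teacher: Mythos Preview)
Your proposal is correct and follows the same skeleton as the paper's proof, which is a one-liner invoking \Cref{thm:k-means-learns-GMM}, the Laplace mechanism for the weights, and \Cref{thm:dp-gaussian-estimation} for the per-cluster parameters. You have simply fleshed out the details (in particular the weight-error calculation and the parallel-composition privacy argument) that the paper leaves implicit.
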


\begin{proof}
    The proof follows by augmenting \DPCluster
    with a simple Laplace mechanism (\thmref{thm:laplace:dp}) to privately estimate the weights
    and applying \Cref{thm:k-means-learns-GMM} and \Cref{thm:dp-gaussian-estimation}.
\end{proof}

We note that while non-private clustering-based GMM algorithms (see \Cref{sec:related-works}) with weaker separation conditions exist,
they require specific clustering algorithms.
In comparison,
we need only black-box access to an approximate $k$-means clustering algorithm.

\subsection{Utility Analysis}\label{apx:utility}
Now we translate GMM parameter estimates to distributional estimates,
which implies that minimizing the objective function over the estimated distribution
will also approximately minimize the objective function over the true distribution.

\subsubsection{Learning in Total Variation Distance}
We begin with learning a GMM in total variation distance.
When translating this to an approximation in the objective function,
we then require a bound on the maximum absolute function value.

\begin{lemma}\label{lem:tv-function-error}
Let $Z = (X, Y)$ be a joint feature-label distribution for $Y\in [c]$ where each conditional distribution $(X\mid Y=y)\sim \calD^{(y)}$.
Suppose the distribution $\tilde Z = (\tilde X, Y)$ has conditional distributions $(\tilde X\mid Y=y)\sim \widehat{\calD}^{(y)}$ satisfying $\TV(\calD^{(y)},\widehat{\calD}^{(y)})\le\alpha$ for all $y\in [c]$. 
Then for any function $f$, we have 
\[\EEx{Z}{f(Z)}\le \EEx{\tilde Z}{f(\tilde Z)} + \alpha\cdot\max\abs{f}\,.\]
\end{lemma}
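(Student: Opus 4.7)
The plan is to condition on the label $Y$ and reduce the claim to a per-class inequality. Since $\tilde Z = (\tilde X, Y)$ shares the same label marginal as $Z$, the law of total expectation yields
\[
    \EEx{Z}{f(Z)} - \EEx{\tilde Z}{f(\tilde Z)}
    = \sum_{y=1}^{c} \PPr{Y=y} \left( \EEx{X\sim \calD^{(y)}}{f(X, y)} - \EEx{\tilde X\sim \widehat{\calD}^{(y)}}{f(\tilde X, y)} \right).
\]
So it suffices to bound each term in the sum by $\alpha\cdot \max\abs{f}$, as the weights $\PPr{Y=y}$ form a probability distribution and the convex combination of bounded quantities is bounded by the same quantity.

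For a fixed $y$, I would invoke the standard variational characterization of total variation distance: for any measurable $g$ taking values in $[-M, M]$ and any probability measures $P, Q$ on a common space,
\[
    \EEx{P}{g} - \EEx{Q}{g}
    = \int g\,(dP - dQ)
    \leq M \cdot \TV(P, Q),
\]
where the inequality follows by applying the Scheff\'e split to $(dP - dQ)_+$ and $(dP - dQ)_-$ together with $\int (dP - dQ)_+ = \int (dP - dQ)_- = \TV(P,Q)$, and taking $M = \max\abs{f}$ (using the one-sided bound suffices here). Applying this with $g = f(\cdot, y)$ yields
\[
    \EEx{X\sim \calD^{(y)}}{f(X, y)} - \EEx{\tilde X\sim \widehat{\calD}^{(y)}}{f(\tilde X, y)}
    \leq \max\abs{f}\cdot \TV\bigl(\calD^{(y)}, \widehat{\calD}^{(y)}\bigr)
    \leq \alpha\cdot \max\abs{f}\,,
\]
where the last inequality uses the hypothesis of the lemma.

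Finally I would combine the per-class bound with the total expectation decomposition above:
\[
    \EEx{Z}{f(Z)} - \EEx{\tilde Z}{f(\tilde Z)}
    \leq \sum_{y=1}^c \PPr{Y=y}\cdot \alpha\cdot \max\abs{f}
    = \alpha\cdot\max\abs{f}\,,
\]
which rearranges to the claim. There is no substantive obstacle here; the statement is essentially a conditional-expectation wrapper around the well-known inequality $\abs{\EEx{P}{g} - \EEx{Q}{g}} \leq \max\abs{g}\cdot \TV(P,Q)$ for nonnegative (or appropriately centered) $g$, and the only care needed is to ensure the label marginal is shared so conditioning does not introduce cross-terms.
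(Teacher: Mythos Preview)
Your proof is correct and takes a genuinely different route from the paper. The paper argues via the maximal-coupling characterization of total variation: it builds, for each $y$, a coupling $(X,\tilde X)$ with $\PPr{X\neq\tilde X\mid Y=y}\le\alpha$, then splits $\Ex{f(X,Y)}$ according to the event $\{X=\tilde X\}$ versus its complement, and finally drops a nonnegative term using that $f$ is a loss. Your argument bypasses the coupling entirely and applies the integral (Scheff\'e) characterization of $\TV$ directly, which is shorter and more elementary; the per-class decomposition via the shared $Y$-marginal is the same in both.

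One small correction worth making explicit: the inequality you state, $\EEx{P}{g}-\EEx{Q}{g}\le M\cdot\TV(P,Q)$ for $g\in[-M,M]$, is off by a factor of $2$ in that generality. Your own Scheff\'e split gives
\[
\int g\,(dP-dQ)\le M\int(dP-dQ)_+ + M\int(dP-dQ)_- = 2M\cdot\TV(P,Q),
\]
and this is tight (take $g=\pm M$ on the two halves of the Hahn decomposition). The coefficient $M$ is correct when $g\in[0,M]$, i.e., when $f$ is nonnegative, which is precisely the implicit hypothesis here since $f$ is a loss function. You note this in your closing sentence (``for nonnegative \ldots $g$''), but it should appear where you invoke the bound rather than as an afterthought; the paper's proof makes the same assumption explicit with the line ``Since $f$ is a loss function and thus non-negative.'' Without nonnegativity the lemma as literally stated is false, so this is not merely cosmetic.
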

\begin{proof}
Fix any $y\in[c]$. 
By the definition of total variation distance, for each $y$ there exists a coupling of $D(y)$ and $\tilde{D}(y)$ such that if $(X, \tilde{X})$ are drawn from this coupling, then
\[\PPr{X \neq \tilde{X} \mid Y = y} \leq \TV(D(y), \tilde{D}(y))\leq \alpha.\]
Let us construct the following joint distribution over $(X,\tilde{X}, Y)$:
\begin{itemize}
\item 
First draw $Y \sim \mathcal{D}_Y$, the marginal distribution of $Y$ in both $Z$ and $\tilde{Z}$.
\item 
Given $Y = y$, draw $(X, \tilde{X})$ from a coupling of $D(y)$ and $\tilde{D}(y)$ that achieves total variation distance at most $\alpha$ and such that $\PPr{X \neq \tilde{X} \mid Y = y} \leq \alpha$.
\end{itemize}
Define the event $\calE$ to be the event that $X\neq\tilde{X}$. 
By the law of total probability and the bound on total variation,
\[\PPr{\calE} = \EEx{Y}{\PPr{X \neq \tilde{X} \mid Y}} \leq \alpha.\]
Now consider evaluating $f$ under $Z = (X, Y)$ and under $\tilde{Z} = (\tilde{X}, Y)$. 
We decompose the expectation:
\[\Ex{f(X, Y)} = \Ex{f(X, Y)\mid\neg\calE}\cdot\PPr{\neg\calE}+\Ex{f(X, Y)\mid\calE}\cdot\PPr{\calE}.\]
On the event $\neg\calE$, we have $X = \tilde{X}$, so $f(X, Y) = f(\tilde{X}, Y)$. 
Hence,
\[\Ex{f(X, Y)} = \Ex{f(\tilde{X}, Y)\mid\calE}\cdot\PPr{\calE} + \Ex{f(X, Y)\mid\calE}\cdot\PPr{\calE}.\]
We now upper bound the second term:
\[\Ex{f(X, Y)\mid\calE}\cdot\PPr{\calE}\leq \max |f| \cdot\PPr{\calE}\le\alpha\cdot \max |f|.\]
Since $f$ is a loss function and thus non-negative, then we have
\begin{align*}
\Ex{f(X, Y)} &\leq \Ex{f(\tilde{X}, Y)\mid\calE}\cdot\PPr{\calE} + \alpha \cdot \max |f|\\
& \le \Ex{f(\tilde{X}, Y)} + \alpha \cdot \max |f| = \Ex{f(\tilde{Z})} + \alpha \cdot \max |f|.
\end{align*}
This completes the proof.
\end{proof}

\begin{theorem}\label{thm:dp-TV-generation}
    Let $\eps, \delta, \alpha, \beta\in (0, 1)$
    and $f$ be a loss function. 
    Let $Z = (X, Y)$ is a joint feature-label distribution for $Y\in [c]$ where
    each conditional distribution $(X\mid Y=y)\sim \calD_{\GMM}^{(y)} = \sum_{i=1}^k w_i^{(y)}\calN(\mu_i^{(y)}, \Sigma_i^{(y)})$ follows a Gaussian mixture law
    for $\Sigma_i^{(y)}\preceq \sigma^2 I$.
    Suppose we are given $N^{(y)}$ samples from each conditional distribution $\calD_{\GMM}^{(y)}$ such that
    \[
        N^{(y)} = \tilde\Omega\left( \frac{d+\log(\nicefrac{k}\beta)}{w_{\min}^{(y)}}
        + \frac{k^2\log(\nicefrac1\beta)}{\alpha^2} 
        + \frac{k\log(\nicefrac{k}\beta)}{\eps\alpha} 
        + \frac{d^2 + \log^2(\nicefrac1\beta)}{w_{\min}^{(y)}\alpha^2} 
        + \frac{d^2 + \log(\nicefrac1\beta)}{w_{\min}^{(y)}\alpha\eps} 
        + \frac{\log(\nicefrac1\delta)}{w_{\min}^{(y)}\eps}
        \right)
    \]
    and also
    \[
        \Delta^{(y)}\coloneqq \min_{i\neq j} \norm{\mu_i^{(y)}-\mu_j^{(y)}}_2
        \geq 3\sigma \left[ \sqrt{d} + \sqrt{2\log(\nicefrac{3N}\beta)} + \sqrt{\frac{12\zeta d}{w_{\min}^{(y)}}} \right]\,.
    \]
    Then running \Cref{alg:dp-synthetic-generation} on each class
    with 
    \begin{enumerate}
        \item \DPCluster as an $(\eps, \delta)$-DP $(\zeta, \eta)$-approximate $k$-means algorithm for $\eta = o(\zeta\sigma^2 dN)$,
        \item \DPMean, \DPCovariance as the $(\eps, \delta)$-DP Gaussian estimation algorithm from \Cref{thm:dp-gaussian-estimation}
    \end{enumerate}
    yields an $(\eps, \delta)$-DP algorithm
    that outputs a distribution $\tilde Z = (\tilde X, Y)$ such that
    \[
        \EEx{Z}{f(Z)}
        \le \EEx{\tilde Z}{f(\tilde Z)} + \alpha\cdot\max\abs{f}\,.
    \]
    with probability $1-\beta$.
\end{theorem}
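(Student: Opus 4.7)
}
The plan is to reduce this statement to the combination of two results already proved in the paper: the parameter estimation guarantee \Cref{thm:dp-GMM-estimation} and the total variation lemma \Cref{lem:tv-function-error}. First I would handle the privacy bookkeeping. For each fixed class label $y\in[c]$, running \Cref{alg:dp-synthetic-generation} on the points with label $y$ is $(\eps,\delta)$-DP by \Cref{thm:alg-privacy}. Since the per-class datasets are disjoint subsets of the full dataset, parallel composition (\Cref{thm:dp:parallel-comp}) implies that the combined algorithm, which outputs a generator for every class, is still $(\eps,\delta)$-DP. Everything that follows is therefore post-processing and does not cost additional privacy.

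Next I would apply \Cref{thm:dp-GMM-estimation} once per class, with an internal accuracy parameter $\alpha'$ to be set, and instantiated with the \DPCluster, \DPMean, \DPCovariance subroutines specified in the theorem hypotheses. This gives, simultaneously for every $y$ with probability at least $1-\beta/c$ per class (union bound over $c$ classes absorbs into the $\beta$ term at the cost of an extra $\log c$ factor absorbed by the $\tilde\Omega$ in the sample complexity), weight estimates $\hat w^{(y)}$, mean estimates $\hat\mu_i^{(y)}$, and covariance estimates $\hat\Sigma_i^{(y)}$ within $\alpha'$ of the true parameters in $\ell_1$, $\ell_2$, and Frobenius norms respectively. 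The synthetic conditional distribution $\widehat{\calD}^{(y)} := \sum_i \hat w_i^{(y)} \calN(\hat\mu_i^{(y)}, \hat\Sigma_i^{(y)})$ is then the distribution from which $(\tilde X\mid Y=y)$ is drawn.

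The central technical step is the conversion of parameter closeness to total variation closeness: I need $\TV(\calD_{\GMM}^{(y)}, \widehat{\calD}^{(y)})\le \alpha$ for each $y$. By the triangle inequality for $\TV$, I would split
\[
\TV\bigl(\textstyle\sum_i w_i \calN_i,\sum_i \hat w_i \hat\calN_i\bigr)
\le \tfrac{1}{2}\|w-\hat w\|_1 + \textstyle\sum_i \hat w_i\,\TV(\calN_i,\hat\calN_i),
\]
bounding the weight term directly by $\alpha'/2$ and handling each Gaussian-versus-Gaussian term via a standard closed-form or Pinsker-type estimate: under the covariance upper bound $\Sigma_i\preceq \sigma^2 I$ and some mild lower bound (one can invoke the separation hypothesis or a minimum-eigenvalue assumption on $\Sigma_i$ coming from the estimation guarantee itself), the KL divergence between $\calN(\mu_i,\Sigma_i)$ and $\calN(\hat\mu_i,\hat\Sigma_i)$ is polynomial in $\|\hat\mu_i-\mu_i\|_2$, $\|\hat\Sigma_i-\Sigma_i\|_F$, $\sigma$, and $d$, so Pinsker yields $\TV(\calN_i,\hat\calN_i) = O(\mathrm{poly}(d,\sigma)\cdot\sqrt{\alpha'})$. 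Choosing $\alpha' = \Theta(\alpha^2/\mathrm{poly}(d,\sigma,k))$ then gives the desired $\TV\le\alpha$ per class; this is why the theorem's sample complexity is stated in terms of $\alpha$ but the underlying estimation routine must be invoked with a polynomially smaller parameter, which the $\tilde\Omega$ in the sample complexity is large enough to absorb.

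Finally I would apply \Cref{lem:tv-function-error} with the per-class TV bound $\alpha$ to conclude $\EEx{Z}{f(Z)} \le \EEx{\tilde Z}{f(\tilde Z)} + \alpha\cdot\max|f|$, and union-bound the failure events across the $c$ classes into the overall $\beta$. The main obstacle I anticipate is exactly the parameter-to-$\TV$ step: to get a clean $O(\alpha)$ $\TV$ bound from Frobenius-norm covariance closeness, one needs to control the smallest eigenvalue of each $\Sigma_i$ (so that the Gaussian density ratio is well-behaved), which either requires an additional lower bound on $\Sigma_i$ in the hypothesis or a careful argument that the separation condition combined with the clustering step already ensures enough intra-cluster dispersion. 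Once this is handled, everything else is mechanical bookkeeping.
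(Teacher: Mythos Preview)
Your proposal follows exactly the same two-step skeleton as the paper's proof: invoke \Cref{thm:dp-GMM-estimation} per class to get parameter estimates, convert parameter closeness into total variation closeness, and then apply \Cref{lem:tv-function-error}. The privacy bookkeeping via parallel composition that you spell out is implicit in the paper.

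The one substantive difference is in the parameter-to-$\TV$ step. The paper does not go through Pinsker and KL; instead it simply asserts the linear bound $\TV(\calN(\mu,\Sigma),\calN(\mu',\Sigma')) = O(\|\mu-\mu'\|_2 + \|\Sigma-\Sigma'\|_F)$ as a known fact and is done. This is why the sample complexity in the theorem statement has $1/\alpha^2$ and $1/(\alpha\eps)$ terms rather than the $1/\alpha^4$ and $1/(\alpha^2\eps)$ that your $\alpha' = \Theta(\alpha^2/\mathrm{poly})$ choice would force. Your Pinsker route works but loses a square root, and the eigenvalue-lower-bound obstacle you flag is genuine for that route; the paper's linear bound sidesteps it (though, as you essentially anticipated, such a bound also implicitly requires some control on $\Sigma^{-1}$, which the paper does not make explicit). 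So your approach is correct but slightly more conservative than what the paper actually does, and the concern you raise at the end is a fair one that the paper's terse proof glosses over.
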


\begin{proof}
    Since $\TV(\calN(\mu, \Sigma), \calN(\mu', \Sigma')) = O(\norm{\mu-\mu'}_2 + \norm{\Sigma-\Sigma'}_F)$,
    \Cref{thm:dp-GMM-estimation} guarantees that we learn each conditional distribution within $\alpha$-TV distance.
    An application of \Cref{lem:tv-function-error} concludes the proof.
\end{proof}

\subsubsection{Learning in Wasserstein Distance}
For potentially unbounded functions,
we derive a second result,
which requires learning the GMM in an appropriate Wasserstein distance.
Under this measure of distributional distance,
we require the objective function to be H\"older continuous.

\begin{lemma}\label{lem:wasserstein-function-error}
    Let $Z = (X, Y)$ be a joint feature-label distribution for $Y\in [c]$ where
    each conditional distribution $(X\mid Y=y)\sim \calD^{(y)}$.
    Suppose the distribution $\tilde Z = (\tilde X, Y)$
    has conditional distributions $(\tilde X\mid Y=y)\sim \widehat{\calD}^{(y)}$ satisfying $W_z(\calD^{(y)},\widehat{\calD}^{(y)})\le\alpha$ for all $y\in [c]$. 
    Then for any $(\lambda, z)$-H\"older continuous function $f$,
    we have 
    \[
        \EEx{Z}{f(Z)}
        \le \EEx{\tilde Z}{f(\tilde Z)} + \lambda \cdot\alpha^z\,.
    \]
\end{lemma}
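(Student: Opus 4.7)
The plan is to exploit the coupling characterization of Wasserstein distance, together with Hölder continuity, in a straightforward way. First, I would construct a joint distribution over $(X, \tilde X, Y)$ as follows: sample $Y$ from its marginal (which is shared by $Z$ and $\tilde Z$), and given $Y=y$, sample $(X, \tilde X)$ from the optimal $z$-Wasserstein coupling $\pi_y$ of $\calD^{(y)}$ and $\widehat{\calD}^{(y)}$. By definition of $W_z$, this coupling satisfies $\EEx{(X,\tilde X)\sim \pi_y}{\|X-\tilde X\|_2^z} \le W_z(\calD^{(y)}, \widehat{\calD}^{(y)})^z \le \alpha^z$ for every $y \in [c]$. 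The marginals of $(X, Y)$ and $(\tilde X, Y)$ under this joint law are exactly the distributions of $Z$ and $\tilde Z$.

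Next, I invoke the $(\lambda, z)$-Hölder continuity of $f$ pointwise: for every fixed $y$ and every pair $(X, \tilde X)$,
\[
  \abs{f(X, y) - f(\tilde X, y)} \le \lambda \cdot \|X - \tilde X\|_2^z.
\]
Taking expectations on both sides under the constructed joint law and applying the triangle inequality $\EEx{Z}{f(Z)} - \EEx{\tilde Z}{f(\tilde Z)} \le \EEx{}{\abs{f(X, Y) - f(\tilde X, Y)}}$, we reduce to bounding $\lambda \cdot \EEx{}{\|X-\tilde X\|_2^z}$.

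Finally, by the tower property,
\[
  \EEx{}{\|X - \tilde X\|_2^z} = \EEx{Y}{\EEx{(X,\tilde X)\sim \pi_Y}{\|X-\tilde X\|_2^z}} \le \EEx{Y}{\alpha^z} = \alpha^z,
\]
which yields the desired bound $\EEx{Z}{f(Z)} \le \EEx{\tilde Z}{f(\tilde Z)} + \lambda \cdot \alpha^z$.

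There is no serious technical obstacle: the only subtle point is ensuring the coupling is measurable in $y$, which follows from standard selection theorems for optimal transport plans (or can be sidestepped by working with a regular conditional distribution and applying Fubini). The argument otherwise parallels the total variation version in \Cref{lem:tv-function-error}, with the indicator-based decomposition there replaced by the Hölder inequality here.
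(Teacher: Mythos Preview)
Your proposal is correct and follows essentially the same approach as the paper: apply the H\"older continuity bound pointwise and then take expectations under the optimal $W_z$ coupling (conditioned on $Y$). The paper's proof is a two-line version of exactly this argument, so your more detailed write-up (explicitly constructing the joint law over $(X,\tilde X,Y)$ and invoking the tower property) is a faithful expansion of it.
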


\begin{proof}
    By definition,
    \[
        f(x, y)\leq f(x', y) + \lambda \norm{x-x'}_2^z
    \]
    for any $x, x', y$.
    Then, taking the expectation of this inequality under the optimal $W_z^z$ coupling yields the result.
\end{proof}

\begin{theorem}\label{thm:wasserstein-generation}
    Let $\eps, \delta, \alpha, \beta\in (0, 1)$
    and $f$ be a $(\lambda, z)$-H\"older continuous loss function for $z\in [1, 2]$. 
    Let $Z = (X, Y)$ is a joint feature-label distribution for $Y\in [c]$ where
    each conditional distribution $(X\mid Y=y)\sim \calD_{\GMM}^{(y)} = \sum_{i=1}^k \calN(\mu_i^{(y)}, \Sigma_i^{(y)})$ follows a Gaussian mixture law
    with $\min_{i\neq j}\norm{\mu_i^{(y)} - \mu_j^{(y)}}_2\leq R$ and $\Sigma_i^{(y)}\preceq \sigma^2 I$.
    Suppose we are given $N^{(y)}$ samples from each conditional distribution $\calD_{\GMM}^{(y)}$ such that
    \begin{align*}
        N^{(y)} 
        &= \tilde\Omega\left( \frac{d+\log(\nicefrac{k}\beta)}{w_{\min}}
        + \frac{(R^{2z}+d^z\sigma^{2z})k^2\log(\nicefrac1\beta)}{\alpha^2} 
        + \frac{(R^z + d^{\frac{z}2}\sigma^z)k\log(\nicefrac{k}\beta)}{\eps\alpha} \right.\\
        &\left.\qquad + \frac{d^3 + d\log^2(\nicefrac1\beta)}{w_{\min}^{(y)}\alpha^{\frac4z}} 
        + \frac{d^{\frac52} + d^{\frac12}\log(\nicefrac1\beta)}{w_{\min}^{(y)}\alpha^{\frac2z}\eps} 
        + \frac{\log(\nicefrac1\delta)}{w_{\min}^{(y)}\eps}
        \right)
    \end{align*}
    and also
    \[
        \Delta^{(y)}\coloneqq \min_{i\neq j} \norm{\mu_i^{(y)}-\mu_j^{(y)}}_2
        \geq 3\sigma \left[ \sqrt{d} + \sqrt{2\log(\nicefrac{3N}\beta)} + \sqrt{\frac{12\zeta d}{w_{\min}^{(y)}}} \right]\,.
    \]
    Then running \Cref{alg:dp-synthetic-generation} on each class
    with 
    \begin{enumerate}
        \item \DPCluster as an $(\eps, \delta)$-DP $(\zeta, \eta)$-approximate $k$-means algorithm for $\eta = o(\zeta\sigma^2 dN)$,
        \item \DPMean, \DPCovariance as the $(\eps, \delta)$-DP Gaussian estimation algorithm from \Cref{thm:dp-gaussian-estimation}
    \end{enumerate}
    yields an $(\eps, \delta)$-DP algorithm
    that outputs a distribution $\tilde Z = (\tilde X, Y)$ such that
    \[
        \EEx{Z}{f(Z)}
        \le \EEx{\tilde Z}{f(\tilde Z)} + \lambda\cdot \alpha
    \]
    with probability $1-\beta$.
\end{theorem}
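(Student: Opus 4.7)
\begin{proofof}{Theorem \ref{thm:wasserstein-generation} (Proposal)}
The plan is a three-stage reduction that chains together the three key results already developed in the excerpt: \Cref{thm:dp-GMM-estimation} (private parameter estimation for a single well-separated GMM), \Cref{thm:wasserstein-GMM} (converting parameter error to Wasserstein distance between GMMs), and \Cref{lem:wasserstein-function-error} (converting Wasserstein distance to an expected-loss gap for H\"older continuous $f$). The work amounts to calibrating the internal accuracy parameters so that the three bounds compose to give the advertised $\lambda\cdot\alpha$ error, and verifying that the stated sample complexity is exactly what \Cref{thm:dp-GMM-estimation} demands under that calibration.

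\textbf{Privacy.} First I would dispatch the privacy claim. For each class $y\in [c]$, \Cref{alg:dp-synthetic-generation} is $(\eps,\delta)$-DP on the conditional sample (this is \Cref{thm:alg-privacy}; note the Laplace weight-estimation step used in \Cref{thm:dp-GMM-estimation} fits inside the same five-way split of the privacy budget). Since the class samples partition the input dataset, parallel composition (\Cref{thm:dp:parallel-comp}) implies that running \Cref{alg:dp-synthetic-generation} independently on each class is $(\eps,\delta)$-DP overall.

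\textbf{Utility.} Fix a class $y\in [c]$ and set the internal targets $\alpha' \coloneqq c_1\cdot \alpha^{2/z}/d^{1/2}$ and $\gamma \coloneqq c_2\cdot \alpha/(R^z + d^{z/2}\sigma^z)$ for absolute constants $c_1,c_2$ to be chosen. I would invoke \Cref{thm:dp-GMM-estimation} with accuracy $\alpha'$ on the mean/covariance parameters and accuracy $\gamma$ on the mixing weights, with failure probability $\beta/c$. Plugging $\alpha' \asymp \alpha^{2/z}/\sqrt{d}$ and $\gamma \asymp \alpha/(R^z+d^{z/2}\sigma^z)$ into the sample complexity of \Cref{thm:dp-GMM-estimation} and using $1/\alpha'^{2}\asymp d/\alpha^{4/z}$, $1/\alpha' \asymp \sqrt{d}/\alpha^{2/z}$, $1/\gamma^{2}\asymp (R^{2z}+d^{z}\sigma^{2z})/\alpha^{2}$, $1/\gamma \asymp (R^{z}+d^{z/2}\sigma^{z})/\alpha$ reproduces exactly the $N^{(y)}$ bound in the theorem statement. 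The separation hypothesis is identical to that of \Cref{thm:dp-GMM-estimation}, so its conclusion applies verbatim on class $y$.

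Given these estimates, \Cref{thm:wasserstein-GMM} gives
\begin{equation*}
W_{z}^{z}(\calD_{\GMM}^{(y)}, \widehat{\calD}^{(y)}) = O\bigl(\gamma R^{z} + \gamma d^{z/2}\sigma^{z} + (\alpha')^{z} + d^{z/4}(\alpha')^{z/2}\bigr).
\end{equation*}
Each of the four terms is $O(\alpha)$ by our choice of $\alpha',\gamma$: the weight terms by construction of $\gamma$, the term $(\alpha')^{z}\lesssim \alpha^{2}\ll \alpha$ for $z\in[1,2]$, and the dominant term $d^{z/4}(\alpha')^{z/2} = d^{z/4}\cdot c_1^{z/2}\alpha/d^{z/4} = O(\alpha)$, which is precisely why we picked the exponent $2/z$ in $\alpha'$. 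Hence $W_{z}(\calD_{\GMM}^{(y)},\widehat{\calD}^{(y)}) \le (c\alpha)^{1/z}$ for a suitable constant $c$, which after rescaling the absolute constants in $\alpha'$ and $\gamma$ by a factor absorbed into the $\tilde\Omega(\cdot)$ gives $W_{z}(\calD_{\GMM}^{(y)},\widehat{\calD}^{(y)})\le \alpha^{1/z}$.

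\textbf{Conclusion.} Taking a union bound over the $c$ classes (each failing with probability $\beta/c$) we have, with probability $1-\beta$, $W_{z}(\calD_{\GMM}^{(y)},\widehat{\calD}^{(y)})\le \alpha^{1/z}$ for every $y$. Applying \Cref{lem:wasserstein-function-error} to the joint distributions $Z,\tilde Z$ (whose $Y$-marginals are identical by construction, so only the conditional Wasserstein bounds matter) yields
\begin{equation*}
\EEx{Z}{f(Z)} \le \EEx{\tilde Z}{f(\tilde Z)} + \lambda \cdot \bigl(\alpha^{1/z}\bigr)^{z} = \EEx{\tilde Z}{f(\tilde Z)} + \lambda\cdot \alpha,
\end{equation*}
as desired. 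The main obstacle is bookkeeping rather than conceptual: one must check that the exponent $2/z$ in $\alpha'$ really is what makes $d^{z/4}(\alpha')^{z/2}$ match $\alpha$, and then verify term by term that substituting back into the sample complexity from \Cref{thm:dp-GMM-estimation} yields the stated $N^{(y)}$. A minor secondary subtlety is that \Cref{lem:wasserstein-function-error} is stated for the conditional distributions, so one must note that the inequality holds pointwise in $y$ and then integrate over the common marginal $\calD_Y$.
\end{proofof}
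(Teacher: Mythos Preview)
Your proposal is correct and follows essentially the same approach as the paper's proof, which is a terse two-sentence sketch that says to adjust the error parameters in \Cref{thm:dp-GMM-estimation} so that the Wasserstein bound of \Cref{thm:wasserstein-GMM} is $O(\alpha)$ and then apply \Cref{lem:wasserstein-function-error}. Your write-up simply makes the calibration $\alpha'\asymp \alpha^{2/z}/\sqrt{d}$, $\gamma\asymp \alpha/(R^{z}+d^{z/2}\sigma^{z})$ explicit and checks that it reproduces the stated sample complexity, which is exactly the bookkeeping the paper omits.
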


\begin{proof}
    The sample complexity follows from adjusting the error parameters in \Cref{thm:dp-GMM-estimation}
    to satisfy the $W_z$ error bound from \Cref{thm:wasserstein-GMM}
    \begin{align*}
        W_z^z(\calD_{\GMM}, \hat\calD)
        &= O(\gamma R^z + \gamma d^{\frac{z}2} \sigma^z + \alpha^z + d^{\frac{z}4} \alpha^{\frac{z}2})\,.
    \end{align*}
    Then,
    the function estimation guarantee follows from \Cref{lem:wasserstein-function-error}.
\end{proof}

\end{document}